\newcommand{\R}{\mathbb{R}}
\newcommand{\N}{\mathcal{N}}
\newcommand{\Id}{\mathbb{I}}
\newcommand{\M}{\mathcal{M}}
\newcommand{\D}{\mathcal{D}}
\newcommand{\Loss}{\mathcal{L}}
\newcommand{\grad}[2]{\nabla_{#1}{#2}}
\newcommand{\Hess}[2]{\text{H}_{#1}[#2](#1)}
\newcommand{\inner}[2]{\langle#1, #2\rangle}
\renewcommand{\b}[1]{\mathbf{#1}}
\DeclareMathOperator*{\argmax}{arg\,max}
\DeclareMathOperator*{\argmin}{arg\,min}
\newcommand{\Exp}[2]{\text{Exp}_{#1}({#2})}
\newcommand{\thetaMAP}{{\theta_{*}}}
\newcommand{\flin}{f^{\text{lin}}_\theta}
\newcommand{\T}{^{\!\intercal}}
\newcommand{\Tangent}[2]{\mathcal{T}_{#1}#2}
\newcommand{\dif}[0]{\mathrm{d}}
\newcommand{\Losslin}[1]{\mathcal{L}^{\text{lin}}(#1)}
\newcommand{\HH}{{\normalfont\text{H}}}
\newcommand{\maxf}[1]{{\cellcolor[gray]{0.87}} \ensuremath{\mathbf{#1}}}
\newcommand{\maxfsecond}[1]{{\cellcolor[gray]{1}} #1}
\theoremstyle{plain}
\newtheorem{theorem}{Theorem}[section]
\newtheorem{lemma}[theorem]{Lemma}
\theoremstyle{definition}
\newtheorem{definition}[theorem]{Definition}
\theoremstyle{remark}
\title{Riemannian Laplace approximations for Bayesian neural networks}
\author{%
  Federico Bergamin, Pablo Moreno-Mu\~{n}oz, S{\o}ren Hauberg, Georgios Arvanitidis \\
  Section for Cognitive Systems, DTU Compute, Technical University of Denmark\\
  \texttt{\{fedbe, pabmo, sohau, gear\}@dtu.dk}
}
\begin{document}
\addtocontents{toc}{\protect\setcounter{tocdepth}{-1}}

\maketitle

\begin{abstract}
  Bayesian neural networks often approximate the weight-posterior with a Gaussian distribution. However, practical posteriors are often, even locally, highly non-Gaussian, and empirical performance deteriorates. We propose a simple parametric approximate posterior that adapts to the shape of the true posterior through a Riemannian metric that is determined by the log-posterior gradient. We develop a Riemannian Laplace approximation where samples naturally fall into weight-regions with low negative log-posterior. We show that these samples can be drawn by solving a system of ordinary differential equations, which can be done efficiently by leveraging the structure of the Riemannian metric and automatic differentiation. Empirically, we demonstrate that our approach consistently improves over the conventional Laplace approximation across tasks. We further show that, unlike the conventional Laplace approximation, our method is not overly sensitive to the choice of prior, which alleviates a practical pitfall of current approaches.
\end{abstract}

\section{Introduction}
\label{sec:introduction}

\begin{wrapfigure}[14]{r}{0.4\textwidth}
    \includegraphics[width=\linewidth]{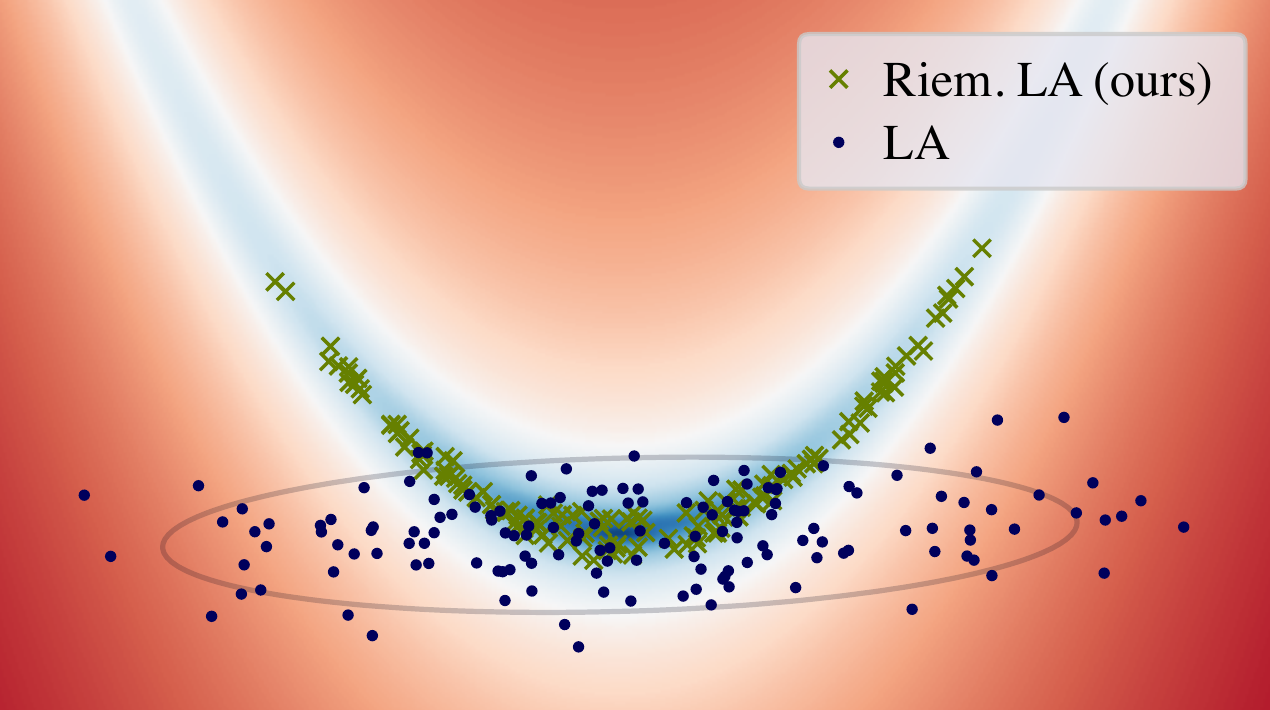}
  \caption{Our Riemannian Laplace approximation is a simple parametric distribution, which is shaped according to the local loss landscape through a Riemannian metric.}
  \label{fig:teaser}
\end{wrapfigure}
\emph{Bayesian deep learning} estimates the weight-posterior of a neural network given data, i.e.\@ $p(\theta | \D)$. Due to the generally high dimensions of the weight-space, the normalization of this posterior is intractable and approximate inference becomes a necessity. The most common parametric choice approximates the posterior with a Gaussian distribution, $p(\theta | \D) \approx q(\theta | \D) = \N(\theta | \mu, \Sigma)$, which is estimated \emph{variationally} \citep{blundell2015weight}, using \emph{Laplace approximations} \citep{mackay1992laplace} or with other techniques \citep{maddox2019simple}. Empirical evidence, however, suggests that the log-posterior is not locally concave \citep{sagun2016eigenvalues}, indicating that the Gaussian approximation is overly crude. 
Indeed, this approximation is known to be brittle as the associated covariance is typically ill-conditioned implying a suboptimal behavior \citep{daxberger:neurips:2021, farquhar2020radial}, and for this reason, alternative approaches have been proposed to fix this issue \citep{mackay1992bayesian}.
Nonetheless, the Gaussian approximation is widely used due to the many benefits of parametric distributions, over e.g.\@ \emph{Monte Carlo sampling} \citep{neal1995bayesian} or \emph{deep ensembles} \citep{Lakshminarayanan:neurips:2017}.

\textbf{In this paper} we argue that the underlying issue is not with the Gaussian approximation, but rather with the weight-space over which the approximation is applied. We show that a Gaussian approximation can locally adapt to the loss by equipping the weight-space with a simple Riemannian metric and performing the approximation tangentially to the associated manifold. Practically, this ensures that samples from the Riemannian approximate posterior land in regions of weight-space yielding low training loss, which significantly improves over the usual Gaussian approximation. We obtain our Riemannian approximate posterior using a generalization of the Laplace approximation \citep{mackay1992laplace} to general Riemannian manifolds. Sampling from this distribution requires solving a system of ordinary differential equations, which we show can be performed efficiently by leveraging the structure of the used Riemannian metric and automatic differentiation. Empirically, we demonstrate that this significantly improves upon conventional Laplace approximations across tasks.

\section{Background} \label{sec:background}
\textbf{Notation \& assumptions.}
We consider independent and identically distributed (i.i.d.) data $\D = \{\b{x}_n, \b{y}_n\}_{n=1}^N$, consisting of inputs $\b{x} \in \R^D$ and outputs $\b{y} \in \R^C$.
To enable \emph{probabilistic modeling}, we use a likelihood $p(\b{y} | f_\theta(\b{x}))$ which is either Gaussian (regression) or categorical (classification). This likelihood is parametrized by a deep neural network $f_\theta:\R^D \to \R^C$, where $\theta\in\R^K$ represent the weights for which we specify a Gaussian prior $p(\theta)$.
The predictive distribution of a new test point $\b{x}'$ equals $p(\b{y} | \b{x}') = \int p(\b{y} | \b{x}', \theta) p(\theta | \D) \dif \theta$ where $p(\theta | \D)$ is the true weight-posterior given the data $\D$. To ensure tractability, this posterior is approximated. This paper focuses on the Laplace approximation, though the bulk of the methodology applies to other approximation techniques as well.\looseness=-1

\subsection{The Laplace approximation} \label{sec:laplace_approximation}
The Laplace approximation (\textsc{la}) is widely considered in \emph{probabilistic} models for approximating intractable densities \citep{bishop2007}. The idea is to perform a second-order Taylor expansion of an unnormalized log-probability density, thereby yielding a Gaussian approximation. When considering inference of the true posterior $p(\theta | \D)$, \textsc{la} constructs an approximate posterior distribution $q_{\textsc{la}}(\theta | \mathcal{D}) = \N(\theta | \thetaMAP, \Sigma)$ that is centered at the \emph{maximum a-posteriori} (\textsc{map}) estimate
\begin{align}
\label{eq:loss-function}
    \thetaMAP
      = \argmax_\theta \left\{ \log p(\theta | \D) \right\}
      = \argmin_\theta \underbrace{\left\{ -\sum_{n=1}^N \log p(\b{y}_n ~|~\b{x}_n, \theta) - \log p(\theta) \right\}}_{\Loss(\theta)}.
\end{align}
A Taylor expansion around $\thetaMAP$ of the regularized loss $\Loss(\theta)$ then yields
\begin{align}
    \hat{\Loss}(\theta) \approx \Loss(\thetaMAP) + \inner{\grad{\theta}{\Loss(\theta)}\big|_{\theta=\thetaMAP}}{(\theta - \thetaMAP)} + \frac{1}{2}\inner{(\theta - \thetaMAP)}{\Hess{\theta}{\Loss}\big|_{\theta=\thetaMAP}(\theta - \thetaMAP)},
\end{align}
where we know that $\nabla_\theta \Loss(\theta)\big|_{\theta=\thetaMAP} \approx 0$, and $\Hess{\theta}{\Loss}\in\R^{K\times K}$ denotes the Hessian of the loss. This expansion suggests that the approximate posterior covariance should be the inverse Hessian $\Sigma = \Hess{\theta}{\Loss}\big|_{\theta=\thetaMAP}^{-1}\!\!$. The marginal likelihood of the data is then approximated as $p(\D) \approx \exp(-\Loss(\thetaMAP))(2\pi)^{\sfrac{D}{2}} \det(\Sigma)^{\sfrac{1}{2}}$. This is commonly used for training hyper-parameters of both the likelihood and the prior \citep{immer2021scalable, antoran2022adapting}. We refer to appendix~\ref{app:laplace_background} for further details.

\paragraph{Tricks of the trade.} Despite the simplicity of the Laplace approximation, its application to modern neural networks is not trivial. The first issue is that the Hessian matrix is too large to be stored in memory, which is commonly handled by approximately reducing the Hessian to being diagonal, low-rank, Kronecker factored, or only considered for a subset of parameters (see \citet{daxberger:neurips:2021} for a review). Secondly, the Hessian is generally not positive definite \citep{sagun2016eigenvalues}, which is commonly handled by approximating the Hessian with the generalized Gauss-Newton approximation \citep{foresee1997ggn, schraudolph2002fast}. Furthermore, estimating the predictive distribution using Monte Carlo samples from the Laplace approximated posterior usually performs poorly \citep[Chapter~5]{lawrence2001variational}\citep{ritter2018scalable} even for small models. Indeed, the Laplace approximation can place probability mass in low regions of the posterior. A solution, already proposed by \cite[Chapter~4]{mackay1992bayesian}, is to consider a first-order Taylor expansion around $\theta_*$, and use the sample to use the ``linearized'' function $\flin(\b{x}) = f_{\thetaMAP}(\b{x}) + \inner{\nabla_\theta f_\theta(\b{x})\big|_{\theta=\thetaMAP}}{\theta - \thetaMAP}$ as predictive, where $\nabla_\theta f_\theta(\b{x})\big|_{\theta=\thetaMAP} \in \R^{C\times K}$ is the Jacobian. Recently, this approach has been justified by \citet{khan2019approximate, immer:2021:aistats}, who proved that the generalized Gauss-Newton approximation is the exact Hessian of this new linearized model. Even if this is a linear function with respect to the parameters $\theta$, empirically it achieves better performance than the classic Laplace approximation.

Although not theoretically justified, optimizing the prior precision post-hoc has been shown to play a crucial role in the Laplace approximation \citep{ritter2018scalable,kristiadi:ICML:2020,immer2021scalable,daxberger:neurips:2021}. This is usually done either using cross-validation or by maximizing the log-marginal likelihood. In principle, this regularizes the Hessian, and the associated approximate posterior concentrates around the \textsc{map} estimate.

\paragraph{Strengths \& weaknesses.}
The main strength of the Laplace approximation is its simplicity in implementation due to the popularization of automatic differentiation. The Gaussian approximate posterior is, however, quite crude and often does not capture the shape locally of the true posterior \citep{sagun2016eigenvalues}. Furthermore, the common reduction of the Hessian to not correlate all model parameters limit the expressive power of the approximate posterior.

\section{Riemannian Laplace approximations}
\label{sec:geometric_extension}
We aim to construct a parametric approximate posterior that better reflects the local shape of the true posterior and captures nonlinear correlations between parameters. The basic idea is to retain the Laplace approximation but change the parameter space $\Theta$ to locally encode the \emph{training loss}. To realize this idea, we will first endow the parameter space with a suitable Riemannian metric (Sec.~\ref{sec:riem}) and then construct a Laplace approximation according to this metric (Sec.~\ref{sec:riem_laplace}).

\subsection{A loss-aware Riemannian geometry}\label{sec:riem}
\begin{wrapfigure}[12]{r}{0.4\textwidth}
\vspace{-7mm}
\centering
\begin{overpic}[unit=1mm,width=\linewidth]{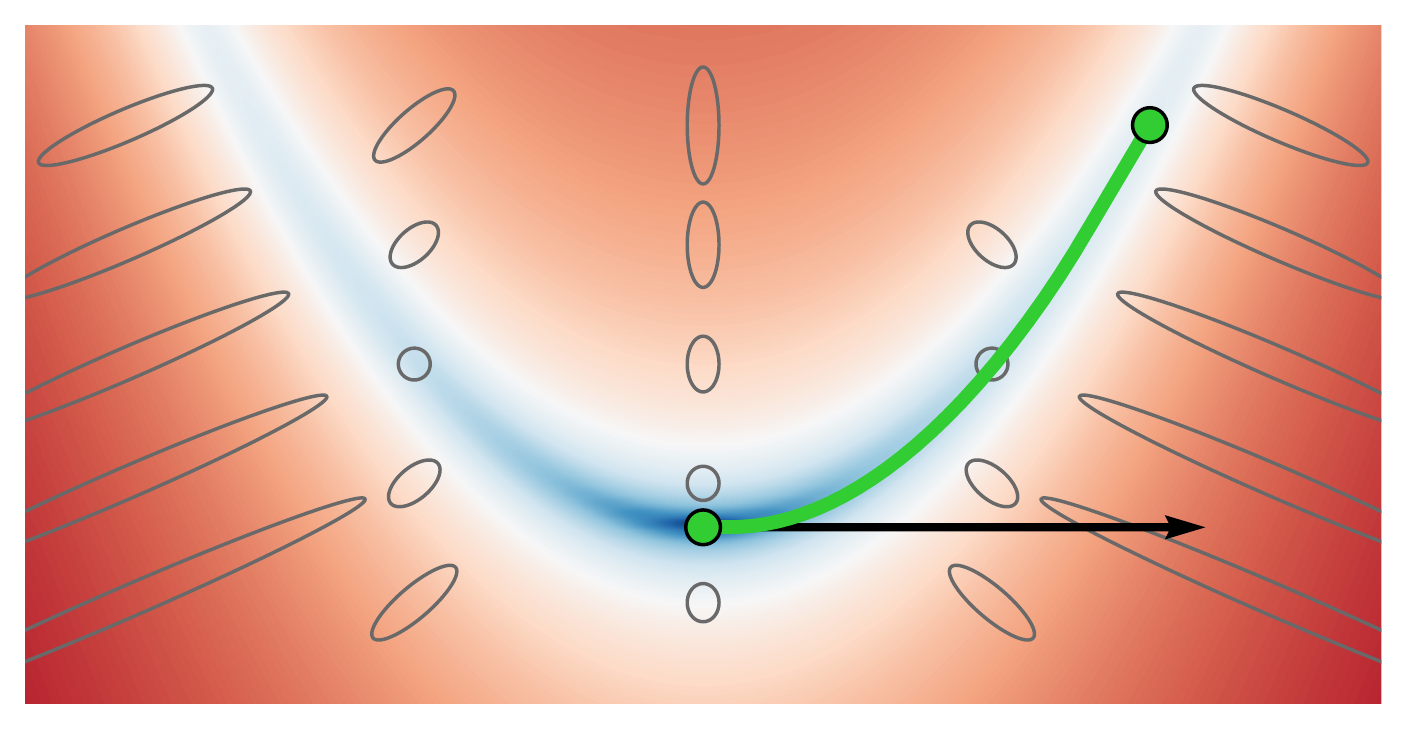}  %
\put(24,4){$\theta$}
\put(45,4){$\mathbf{v}$}
\put(31,23){$\text{Exp}_\theta(\mathbf{v})$}
\end{overpic}
    \caption{The parameter space $\Theta$ of the \textsc{bnn} together with examples of the Riemannian metric and the exponential map. Note that the Riemannian metric adapts to the shape of the loss which causes the geodesic to follow its shape.}
    \label{fig:example-expmap-metrics}
\end{wrapfigure}

For a given parameter value $\theta \in \Theta$, we can measure the training loss $\Loss(\theta)$ of the associated neural network. Assuming that the loss changes smoothly with $\theta$, we can interpret the loss surface $\M = g(\theta) = [\theta, \Loss(\theta)] \in \R^{K\!+\!1}$ as a $K$-dimensional manifold in $\R^{K\!+\!1}$. The goal of Riemannian geometry \citep{lee2019introduction, do1992riemannian} is to do calculations that are restricted to such manifolds.

\textbf{The metric.~~}
We can think of the parameter space $\Theta$ as being the \emph{intrinsic coordinates} of the manifold $\M$, and it is beneficial to do all calculations directly in these coordinates. Note that a vector tangential to the manifold can be written as $\b{J}_{g}(\theta)\b{v} \in \R^{K+1}$, where $\b{J}_g:\Theta \to \R^{K+1\times K}$ is the Jacobian of $g$ that spans the tangent space $\mathcal{T}_{g(\theta)}\M$ at the point $g(\theta)\in\M$ and $\b{v}\in\R^K$ is the vector of \emph{tangential coordinates} for this basis of the tangent space. We can take inner products between two tangent vectors in the same tangent space as $\langle \b{J}_{g}(\theta) \b{v}_1, \b{J}_{g}(\theta) \b{v}_2 \rangle = \b{v}_1\T \b{J}_{g}(\theta)\T \b{J}_{g}(\theta) \b{v}_2$, which, we note, is now expressed directly in the intrinsic coordinates. From this observation, we define the \emph{Riemannian metric} $\b{M}(\theta) = \b{J}_{g}(\theta)\T \b{J}_{g}(\theta)$, which gives us a notion of a local inner product in the intrinsic coordinates of the manifold (see ellipsoids in Fig.~\ref{fig:example-expmap-metrics}). The Jacobian of $g$ is particularly simple $\b{J}_g(\theta) = [\Id_K, \grad{\theta}{\Loss}]^\intercal\!\!$, such that the metric takes the form
\begin{align}
    \label{eq:riemannian_metric}
    \b{M}(\theta) = \Id_K + \grad{\theta}{\Loss(\theta)}\grad{\theta}{\Loss(\theta)}\T.
\end{align}

\textbf{The exponential map.~~}
A local inner product allows us to define the length of a curve $c:[0,1] \to \Theta$ as $\texttt{length}[c] = \int_0^1 \sqrt{\inner{\dot{c}(t)}{\b{M}(c(t))\dot{c}(t)}} \dif t$, where $\dot{c}(t)=\partial_t c(t)$ is the velocity. From this, the \emph{distance} between two points can be defined as the length of the shortest connecting curve, where the latter is known as the \emph{geodesic curve}. Such geodesics can be expressed as solutions to a system of second-order non-linear ordinary differential equations (\textsc{ode}s), which is given in appendix~\ref{app:riemannian-geometry} alongside further details on geometry. Of particular interest to us is the \emph{exponential map}, which solves these \textsc{ode}s subject to an initial position and velocity. This traces out a geodesic curve with a given starting point and direction (see Fig.~\ref{fig:example-expmap-metrics}). Geometrically, we can also think of this as mapping a tangent vector \emph{back to the manifold}, and we write the map as $\text{Exp}:\M\times \Tangent{\theta}{\M}\to \M$.

The tangential coordinates $\b{v}$ can be seen as a coordinate system for the neighborhood around $\theta$, and since the exponential map is locally a bijection we can represent any point locally with a unique tangent vector. However, these coordinates correspond to the tangent space that is spanned by $\b{J}_g(\theta)$, which implies that by changing this basis the associated coordinates change as well. By orthonormalizing this basis we get the \emph{normal coordinates} where the metric vanishes.
Let $\b{v}$ the tangential coordinates and $\bar{\b{v}}$ the corresponding normal coordinates, then it holds that
\begin{align}
    \label{eq:tangential-to-normal-coordinates}
    \inner{\b{v}}{\b{M}(\theta)\b{v}} = \inner{\bar{\b{v}}}{\bar{\b{v}}} \Rightarrow 
    \b{v} = \b{A}(\theta)\bar{\b{v}}\quad\text{with}\quad\b{A}(\theta) = \b{M}(\theta)^{-\nicefrac{1}{2}}.
\end{align}
We will use the normal coordinates when doing Taylor expansions of the log-posterior, akin to standard Laplace approximations.

\subsection{The proposed approximate posterior}\label{sec:riem_laplace}
In order to Taylor-expand the loss according to the metric, we first express the loss in normal coordinates of the tangent space at $\thetaMAP$, $h(\bar{\b{v}}) = \Loss(\Exp{\thetaMAP}{\b{M}(\thetaMAP)^{-\nicefrac{1}{2}}\bar{\b{v}}})$. Following the standard Laplace approximation, we perform a second-order Taylor expansion of $h$ as
\begin{equation}
\label{eq:taylor-noormal-coordinates}
    \hat{h}(\bar{\b{v}}) \approx h(0) + \inner{\partial_{\bar{\b{v}}}h(\bar{\b{v}})\big|_{\bar{\b{v}}=0}}{\bar{\b{v}}} + \frac{1}{2}\inner{\bar{\b{v}}}{\Hess{\bar{\b{v}}}{h}\big|_{\bar{\b{v}}=0}\bar{\b{v}}},
\end{equation}
where $\partial_{\bar{\b{v}}}h(\bar{\b{v}})\big|_{\bar{\b{v}}=0} = \b{A}(\thetaMAP)\T \grad{\theta}{\Loss(\theta)}\big|_{\theta=\thetaMAP}\approx 0$ as $\thetaMAP$ minimize the loss and $\Hess{\bar{\b{v}}}{h}\big|_{\bar{\b{v}}=0} = \b{A}(\thetaMAP)\T \Hess{\theta}{\Loss}\b{A}(\thetaMAP)\big|_{\theta = \thetaMAP}$ with $\Hess{\theta}{\Loss}$ the standard Euclidean Hessian matrix of the loss. Further details about this step can be found in appendix~\ref{app:riemannian-geometry}.

\textbf{Tangential Laplace.~~}
Similar to the standard Laplace approximation, we get a Gaussian approximate posterior $\bar{q}(\bar{\b{v}}) = \mathcal{N}(\bar{\b{v}} ~|~ 0, ~\overline{\Sigma})$ on the tangent space in the normal coordinates with covariance $\overline{\Sigma} = \Hess{\bar{\b{v}}}{h}\big|_{\bar{\b{v}}=0}^{-1}$. 
Note that changing the normal coordinates $\bar{\b{v}}$ to tangential coordinates $\b{v}$ is a linear transformation and hence $\b{v}\sim \mathcal{N}(0, \b{A}(\thetaMAP)\overline{\b{\Sigma}}\b{A}(\thetaMAP)\T)$, which means that this covariance is equal to $ \Hess{\theta}{\Loss}\big|_{\theta=\thetaMAP}^{-1}$ since $\b{A}(\thetaMAP)$ is a symmetric matrix, and hence, it cancels out.
The approximate posterior $q_{\mathcal{T}}(\b{v})=\mathcal{N}(\b{v}~|~0,\Sigma$) in tangential coordinates, thus, matches the covariance of the standard Laplace approximation. %

\textbf{The predictive posterior.~~}
We can approximate the predictive posterior distribution using Monte Carlo integration as $p(y | \b{x}', \D) = \int p(y|\b{x}', \D, \theta) q(\theta) \dif \theta = \int p(y|\b{x}', \D, \Exp{\thetaMAP}{\b{v}}) q_{\mathcal{T}}(\b{v}) \dif \b{v} \approx \frac{1}{S} \sum_{s=1}^S p(y|\b{x}', \D, \Exp{\thetaMAP}{\b{v}_s}),~ \b{v}_s \sim q_{\mathcal{T}}(\b{v})$. Intuitively, this generates tangent vectors according to the standard Laplace approximation and maps them back to the manifold by solving the geodesic \textsc{ode}. This lets the Riemannian approximate posterior take shape from the loss landscape, which is largely ignored by the standard Laplace approximation. %
We emphasize that this is a general construction that applies to the same Bayesian inference problems as the standard Laplace approximation and is not exclusive to Bayesian neural networks.

The above analysis also applies to the linearized Laplace approximation.
In particular, when the $\flin(\b{x})$ is considered instead of the $f_\theta(\b{x})$ the loss function in \eqref{eq:loss-function} changes to $\Losslin{\theta}$.
Consequently, our Riemannian metric is computed under this new loss, and $\grad{\theta}{\Losslin{\theta}}$ appears in the metric \eqref{eq:riemannian_metric}.

\begin{figure}
     \centering
     \begin{subfigure}[b]{0.18\textwidth}
         \centering
            \begin{overpic}[trim=2cm 0 2cm 0, width=\textwidth]{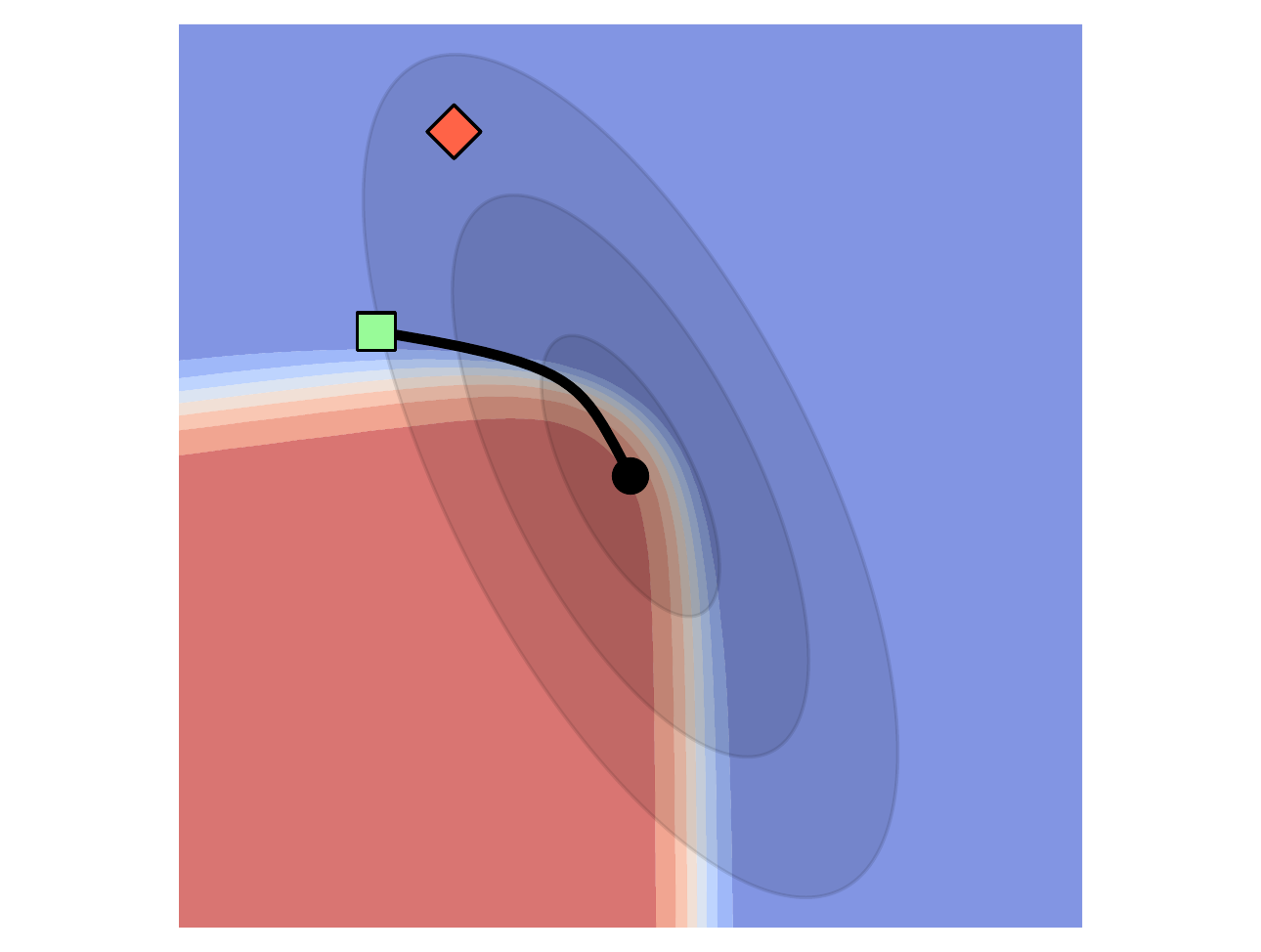}  %
            \put(42,32){$\thetaMAP$}
            \put(30,65){\textsc{la}}
            \put(6,39){\textsc{our}}
            \end{overpic}
         \caption{True posterior}
         \label{fig:logistic-example-posterior}
     \end{subfigure}
     ~
     \begin{subfigure}[b]{0.18\textwidth}
         \centering         
         \includegraphics[trim={2cm 0 2cm 0},clip, width=\textwidth]{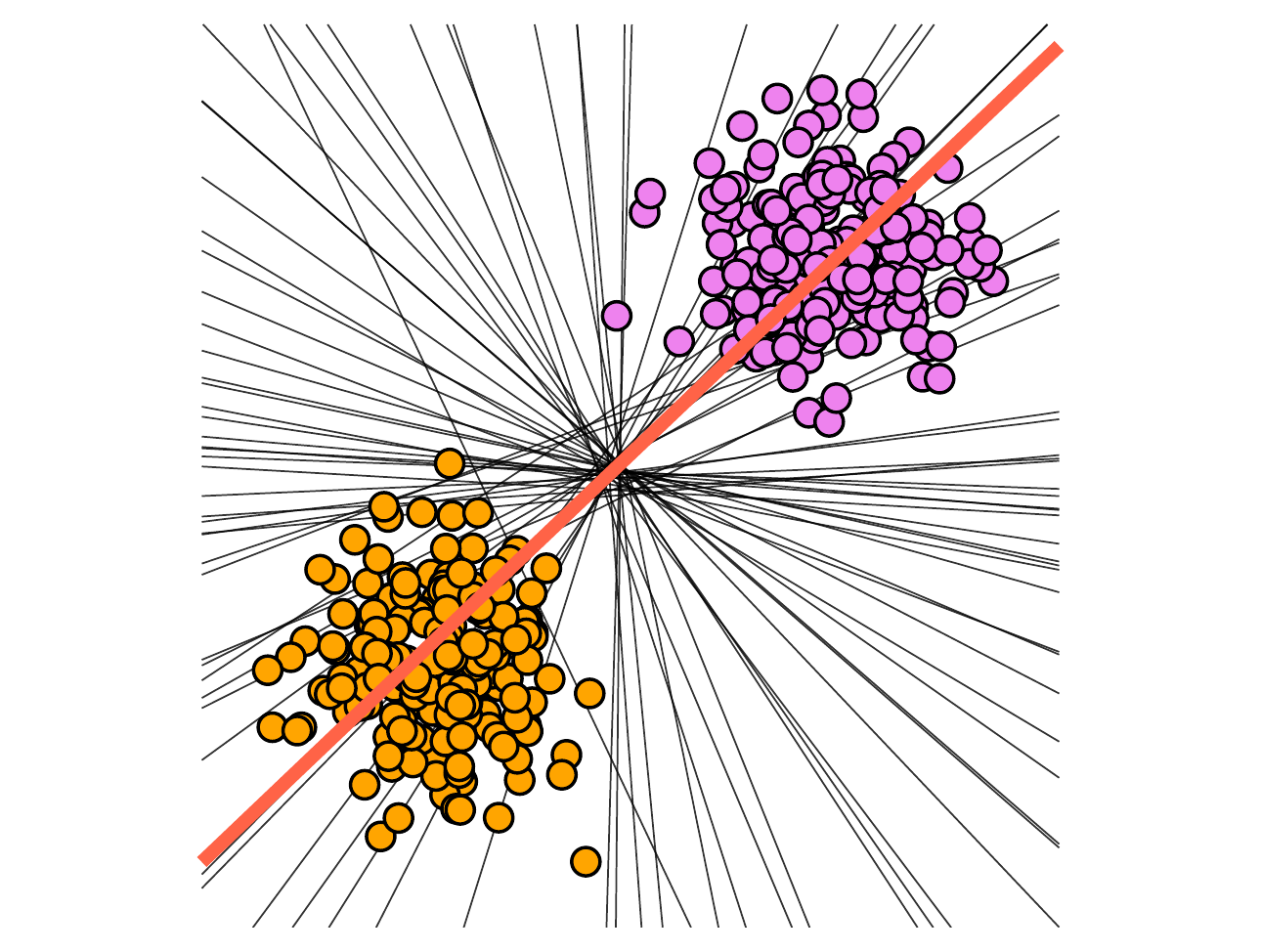}
         \caption{\textsc{la} sample}
         \label{fig:logistic-example-la-sample}
     \end{subfigure}
     ~
     \begin{subfigure}[b]{0.18\textwidth}
         \centering 
         \includegraphics[trim={2cm 0 2cm 0},clip, width=\textwidth]{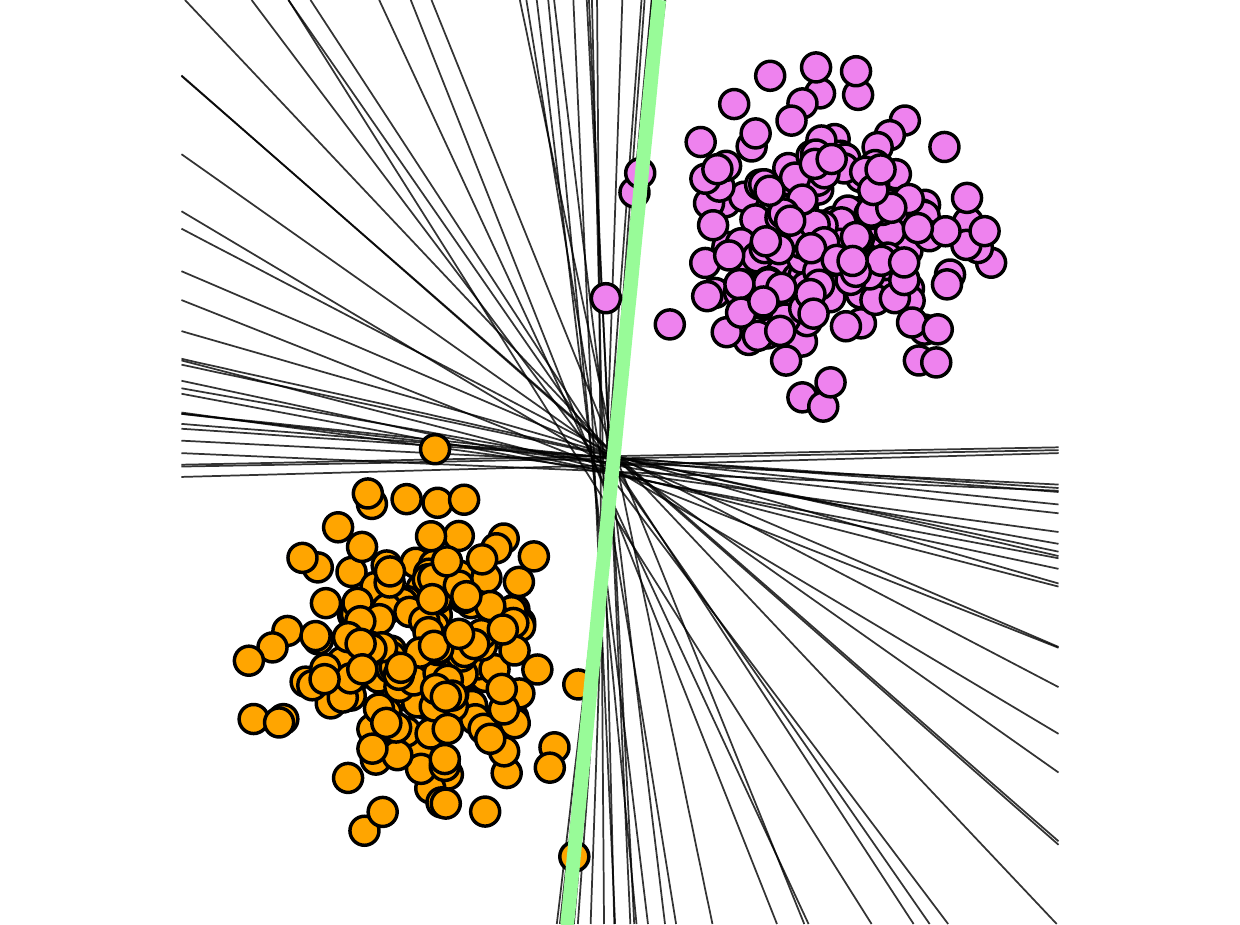}
         \caption{\textsc{our} sample}
         \label{fig:logistic-example-our-sample}
     \end{subfigure}
     ~
     \begin{subfigure}[b]{0.18\textwidth}
         \centering
         \includegraphics[trim={2cm 0 2cm 0},clip, width=\textwidth]{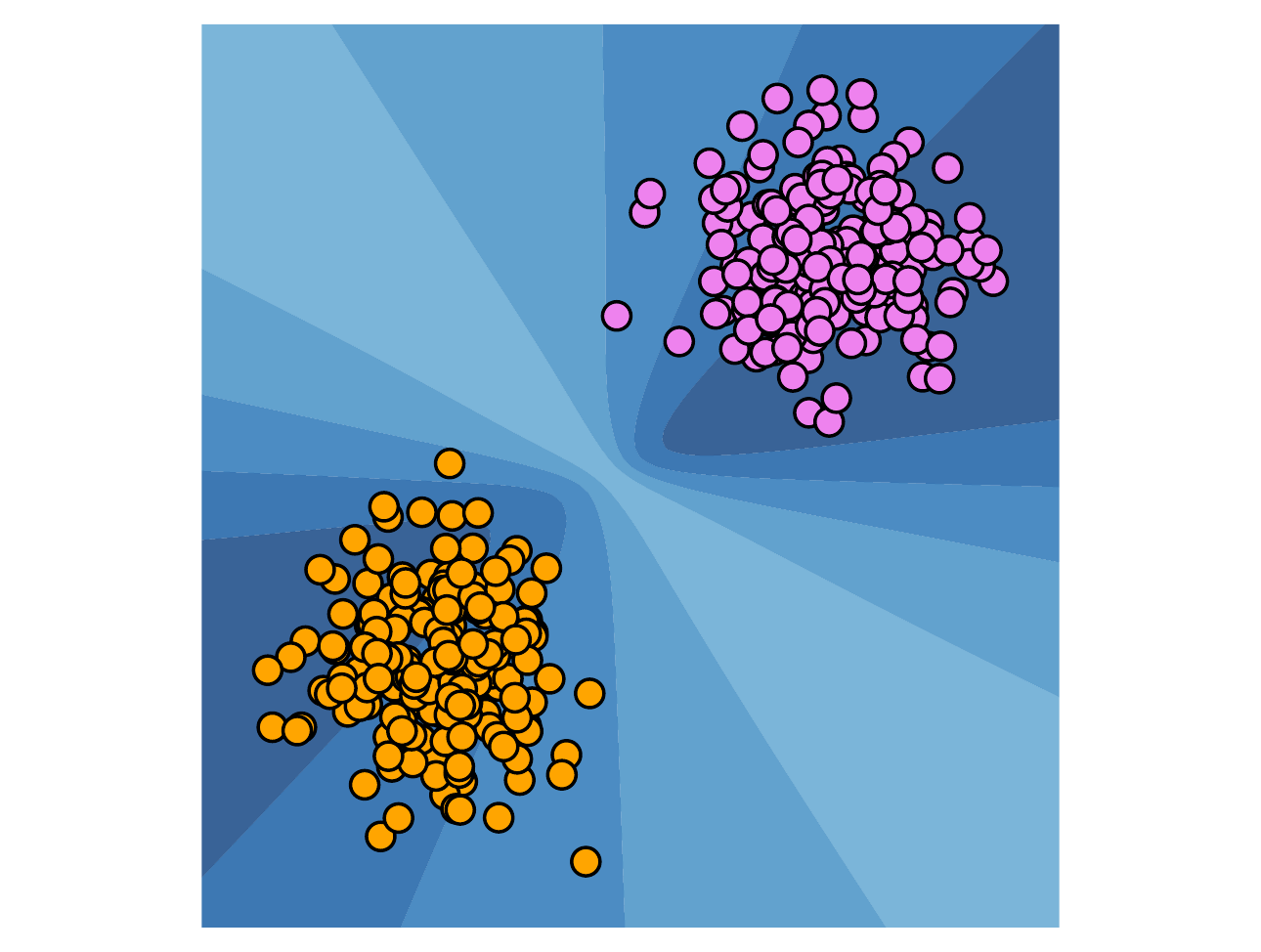}
         \caption{\textsc{la} model}
         \label{fig:logistic-example-la-model}
     \end{subfigure}
     ~
     \begin{subfigure}[b]{0.18\textwidth}
         \centering
         \includegraphics[trim={2cm 0 2cm 0},clip, width=\textwidth]{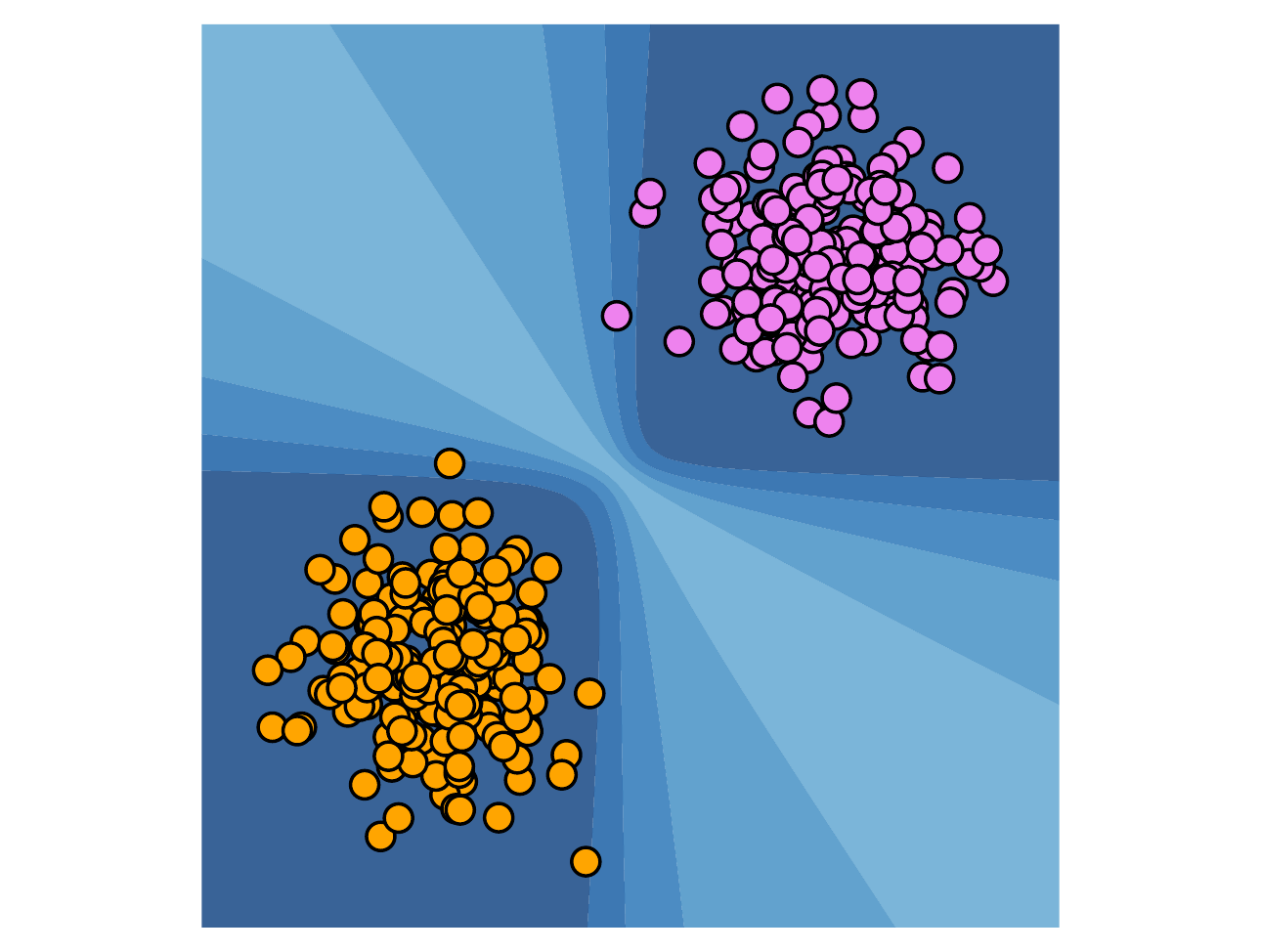}
         \caption{\textsc{our} model}
         \label{fig:logistic-example-our-model}
     \end{subfigure}
    \caption{The \textsc{la} assigns probability mass to regions where the true posterior is nearly zero, and a sample from this region corresponds to a poor classifier. Considering this sample as the initial velocity for the exponential map, the generated sample falls within the true posterior and the associated classifier performs well. As a result, our model quantifies better the uncertainty.}
    \label{fig:logistic_regression_example}
\end{figure}

\textbf{Example.~~}
To build intuition, we consider a logistic regressor on a linearly separable dataset (Fig.~\ref{fig:logistic_regression_example}).
The likelihood of a point $\b{x}\in\R^2$ to be in one class is $p(C=1|\b{x})=\sigma(\b{x}\T\theta + b)$, where $\sigma(\cdot)$ is the \texttt{sigmoid} function, $\theta\in\R^2$ and $b\in\R$.
After learning the parameters,
we fix $b_*$ and show the posterior with respect to $\theta$ together with the corresponding standard Laplace approximation (Fig.~\ref{fig:logistic-example-posterior}).
We see that the approximation assigns significant probability mass to regions where the true posterior is near-zero, and the result of a corresponding sample is a poor classifier (Fig.~\ref{fig:logistic-example-la-sample}).
Instead, when we consider this sample as the initial velocity and compute the associated geodesic with the exponential map, we generate a sample at the tails of the true posterior which corresponds to a well-behaved model (Fig.~\ref{fig:logistic-example-our-sample}). 
We also show the predictive distribution for both approaches and even if both solve easily the classification problem, our model better quantifies uncertainty (Fig.~\ref{fig:logistic-example-our-model}).

\subsection{Efficient implementation}
\label{sec:efficient-implementation}
Our approach is a natural extension of the standard Laplace approximation, which locally adapts the approximate posterior to the true posterior.
The caveat is that computational cost increases since we need to integrate an \textsc{ode} for every sample. We now discuss partial alleviations.

\textbf{Integrating the \textsc{ode}.~~}
In general, the system of second-order nonlinear \textsc{ode}s (see appendix~\ref{app:riemannian-geometry} for the general form) is non-trivial as it depends on the geometry of the loss surface, which is complicated in the over-parametrized regime \citep{hao:neurips:2018}.
In addition, the dimensionality of the parameter space is high, which makes the solution of the system even harder.
Nevertheless, due to the structure of our Riemannian metric \eqref{eq:riemannian_metric}, the \textsc{ode} simplifies to
\begin{equation}
\label{eq:ode}
    \ddot{c}(t) = -\grad{\theta}{\Loss}(c(t))\left(1 + \grad{\theta}{\Loss}(c(t))\T \grad{\theta}{\Loss}(c(t))\right)^{-1}\inner{\dot{c}(t)}{\text{H}_\theta[\mathcal{L}](c(t)) \dot{c}(t)},
\end{equation}
which can be integrated reasonably efficiently with standard solvers.
In certain cases, this \textsc{ode} can be further simplified, for example when we consider the linearized loss $\Losslin{\theta}$ and Gaussian likelihood.

\paragraph{Automatic-differentiation.~~}
The \textsc{ode} \eqref{eq:ode} requires computing both gradient and Hessian, which are high-dimensional objects for modern neural networks.
While we need to compute the gradient explicitly, we do not need to compute and store the Hessian matrix, which is infeasible for large networks. Instead, we rely on modern automatic-differentiation frameworks to compute the Hessian-vector product between $\text{H}_\theta [\mathcal{L}](c(t))$ and $\dot{c}(t)$ directly. This both reduces memory use, increases speed, and simplifies the implementation.

\textbf{Mini-batching.~~} \label{subsec:batching}
The cost of computing the metric, and hence the \textsc{ode}, scales linearly with the number of training data, which can be expensive for large datasets.
A reasonable approximation is to mini-batch the estimation of the metric when generating samples, i.e.\@ construct a batch $\mathcal{B}$ of $B$ random data points and use the associated loss in the \textsc{ode} \eqref{eq:ode}.
As usual, we assume that 
$\Loss(\theta) \approx (\nicefrac{N}{B})\Loss_{\mathcal{B}}(\theta)$.
Note that we only mini-batch the metric and not the covariance of our approximate posterior $q_\mathcal{T}(\b{v})$.

\begin{wrapfigure}[22]{r}{0.4\textwidth}
\centering
    {\small\textsc{riem-la}}
    \includegraphics[trim={7pt 10pt 8pt 7pt},clip, width=\linewidth]{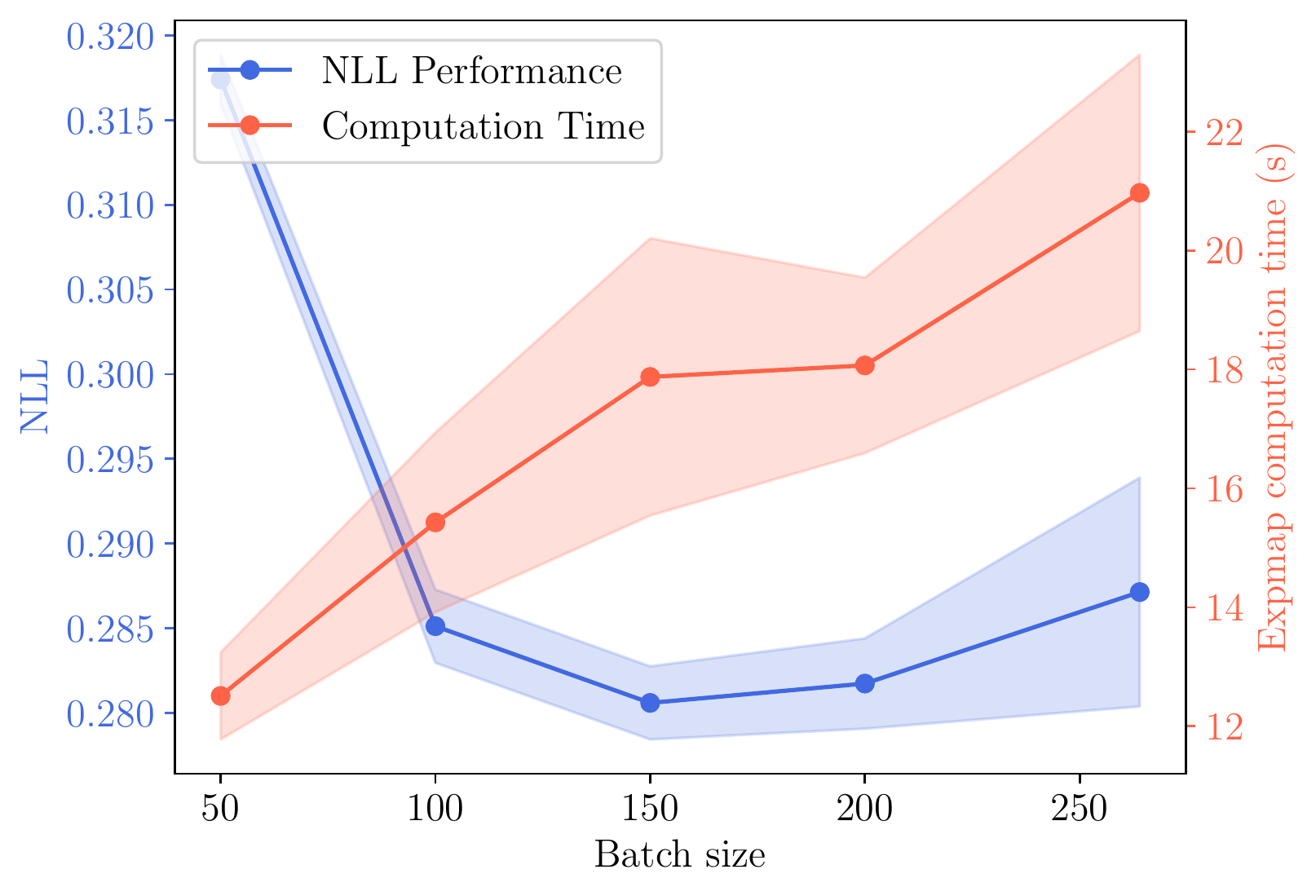}
    \\
    {\small\textsc{lin-riem-la}}
    \includegraphics[trim={7pt 10pt 8pt 7pt},clip, width=\linewidth]{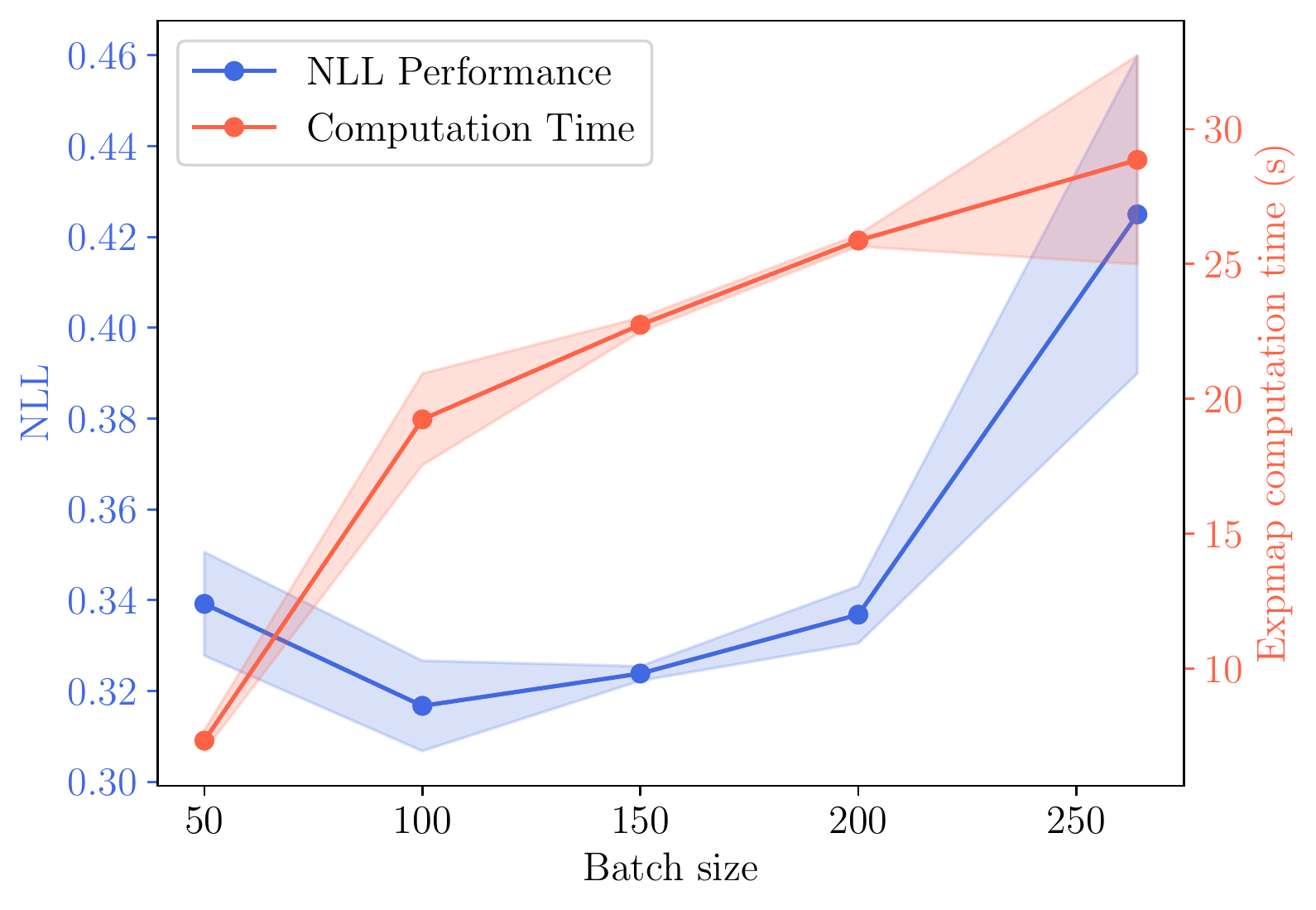}
    \caption{Analysis of mini-batching}
    \label{fig:mini-batch-influence}
\end{wrapfigure}We analyze the influence of mini-batching in our methods and provide empirical evidence in Fig.~\ref{fig:mini-batch-influence}.
In principle, the geometry of the loss surface $\Loss(\theta)$ controls the geodesics via the associated Riemannian metric, so when we consider the full dataset we expect the samples to behave similarly to $f_{\thetaMAP}(\b{x})$.
In other words, our approximate posterior generates weights near $\thetaMAP$ resulting in models with similar or even better loss.
When we consider a batch the geometry of the associated loss surface $\mathcal{L}_\mathcal{B}(\theta)$ controls the generated geodesic. 
So if the batch represents well the structure of the full dataset, then the resulting model will be meaningful with respect to the original problem, and in addition, it may exhibit some variation that is beneficial from the Bayesian perspective for the quantification of the uncertainty.
The same concept applies in the linearized version, with the difference that when the full dataset is considered the geometry of $\Losslin{\theta}$ may over-regularize the geodesics.
Due to the linear nature of $\flin(\theta)$ the associated Riemannian metric is small only close to $\thetaMAP$ so the generated samples are similar to $f_\thetaMAP(\b{x})$.
We relax this behavior and potentially introduce variations in the resulting models when we consider a different batch whenever we generate a sample. Find more details in appendix~\ref{app:experiments}.

\section{Related work}
\label{sec:related_work}
\textbf{Bayesian neural networks.~~}
Exact inference for \textsc{bnn}s is generally infeasible when the number of parameters is large. Several methods rely on approximate inference, which differs in their trade-off between computational cost and the goodness of the approximation. 
These techniques are usually based on the Laplace approximation \citep{mackay1992laplace}, variational inference \citep{graves2011practical, blundell2015weight, khan2018fast}, dropout \citep{gal2016dropout}, stochastic weight averaging \citep{izmailov2018averaging, maddox2019simple} or Monte Carlo based methods \citep{neal1995bayesian}, where the latter is often more expensive.

\textbf{Laplace approximations.~~}
In this work, we are primarily focused on Laplace approximations, although the general geometric idea can be used in combination with any other inference approach listed above. 
Particularly, Laplace's method for \textsc{bnn}s was first proposed by \citet{mackay1992bayesian} in his \emph{evidence} framework, where a closed-form approximation of predictive probabilities was also derived. 
This one uses a first-order Taylor expansion, also known as \emph{linearization} around the \textsc{map} estimate. 
For long, Laplace's method was infeasible for modern architectures with large networks due to the exact computation of the Hessian. 
The seminal works of \citet{martens2015optimizing} and \citet{botev2017practical} made it possible to approximate the Hessian of large networks, which made Laplace approximations feasible once more \citep{ritter2018scalable}. 
More recently, the Laplace approximation has become a go-to tool for turning trained neural networks into \textsc{bnn}s in a \emph{post-hoc} manner, thanks to easy-to-use software \citep{daxberger:neurips:2021} and new approaches to scale up computation \citep{antoran2022adapting}.
In this direction, other works have only considered a subset of the network parameters \citep{daxberger2021bayesian, sharma2023bayesian}, especially the last-layer. This is \emph{de facto} the only current method competitive with \emph{ensembles} \citep{Lakshminarayanan:neurips:2017}.

\textbf{Posterior refinement.~~}
Much work has gone into building more expressive approximate posteriors. Recently, \citet{kristiadiposterior} proposed to use normalizing flows to get a non-Gaussian approximate distribution using the Laplace approximation as a base distribution. Although this requires training an additional model, they showed that few bijective transformations are enough to improve the last-layer posterior approximation. \citet{immer:2021:aistats}, instead, propose to refine the Laplace approximation by using Gaussian variational Bayes or a Gaussian process. This still results in a Gaussian distribution, but it has proven beneficial for linearized Laplace approximations. Other approaches rely on a mixture of distributions to improve the goodness of the approximation. \citet{miller2017variational} expand a variational approximation iteratively adding components to a mixture, while \citet{eschenhagen2021mixtures} use a weighted sum of posthoc Laplace approximations generated from different pre-trained networks.
\citet{havasi2019refining}, instead, introduces auxiliary variables to make a local refinement of a mean-field variational approximation.

\textbf{Differential geometry.~~}
Differential geometry is increasingly playing a role in inference. \citet{arvanitidis2016locally} make a Riemannian normal distribution locally adapt to data by learning a suitable Riemannian metric from data. In contrast, our metric is derived from the model. This is similar in spirit to work that investigates pull-back metrics in latent variable models \citep{Tosi:UAI:2014, arvanitidis:iclr:2018, hauberg:only:2018}. In addition to that, the geometry of the latent parameter space of neural networks was recently analyzed by \citet{kristiadi:arxiv:2023} focusing on the invariance of flatness measures with respect to re-parametrizations. Finally, we note that \citet{hauberg:SN:2018} considers Laplace approximations on the sphere as part of constructing a recursive Kalman-like filter.

\section{Experiments}
\label{sec:experiments}

We evaluate our Riemannian \textsc{la} (\textsc{riem-la}) using illustrative examples, image datasets where we use a convolutional architecture, and real-world classification problems.
We compare our method and its linearized version to standard and linearized \textsc{la}. 
All predictive distributions are approximated using Monte Carlo (MC) samples. 
Although last-layer \textsc{la} is widely used lately, we focus on approximating the posterior of all the weights of the network.
In all experiments, we maximize the marginal log-likelihood to tune the hyperparameters of the prior and the likelihood as proposed in \citep{daxberger:neurips:2021}. 
To evaluate the performance in terms of uncertainty estimation we considered the standard metrics in the literature: negative log-likelihood (NLL), the Brier score (BRIER), the expected calibration error (ECE), and the maximum calibration error (MCE).
More experiments are available in appendix~\ref{app:experiments} together with the complete training and modeling details.

\subsection{Regression problem}

\begin{wrapfigure}[22]{r}{0.4\textwidth}
\centering
\vspace{-25pt}
    \includegraphics[trim={0.25cm 0.25cm 0.25cm 0.25cm},clip, width=0.95\linewidth]{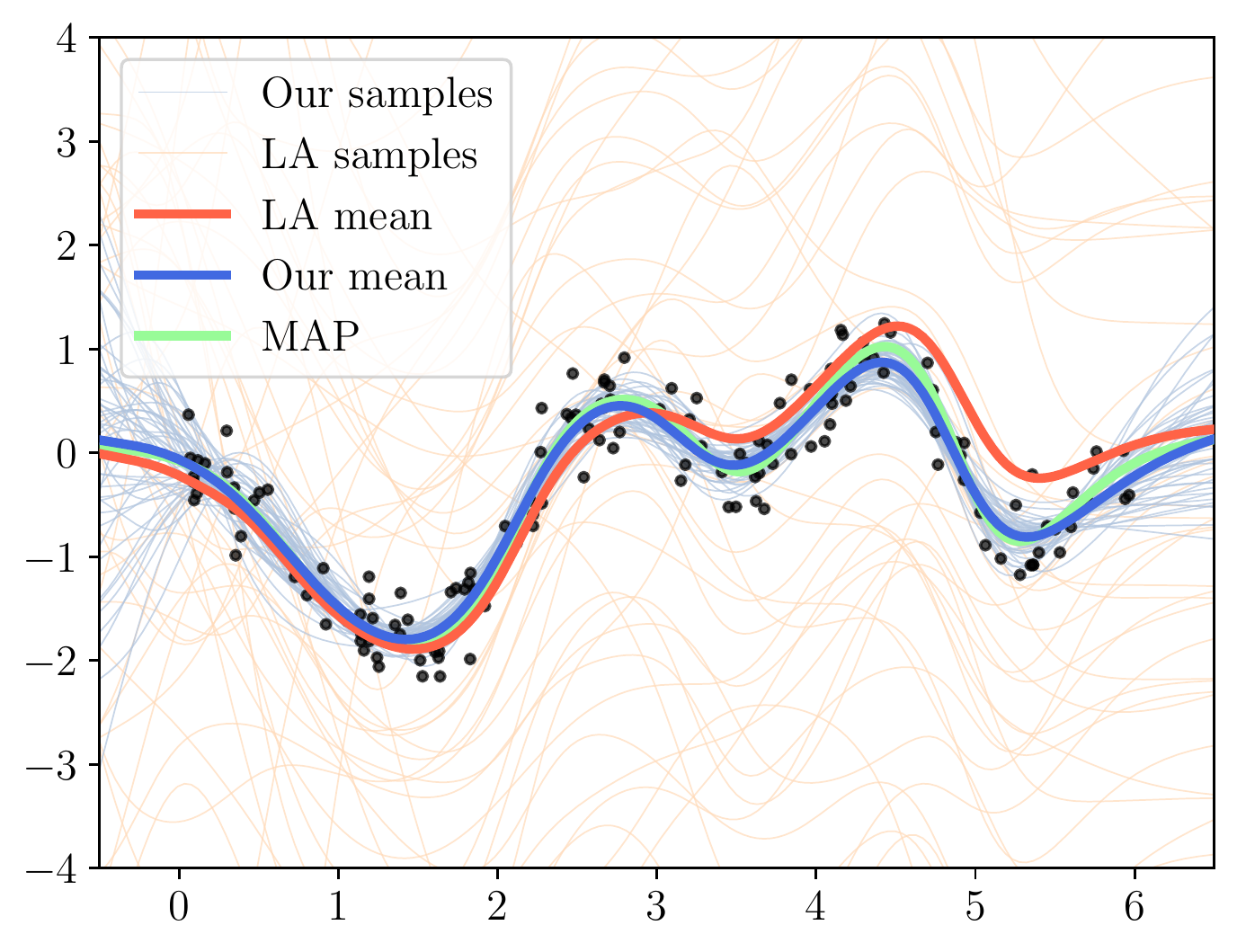}
    \\\vspace{0.1cm}
    \includegraphics[trim={0.25cm 0.25cm 0.25cm 0.25cm},clip, width=0.95\linewidth]{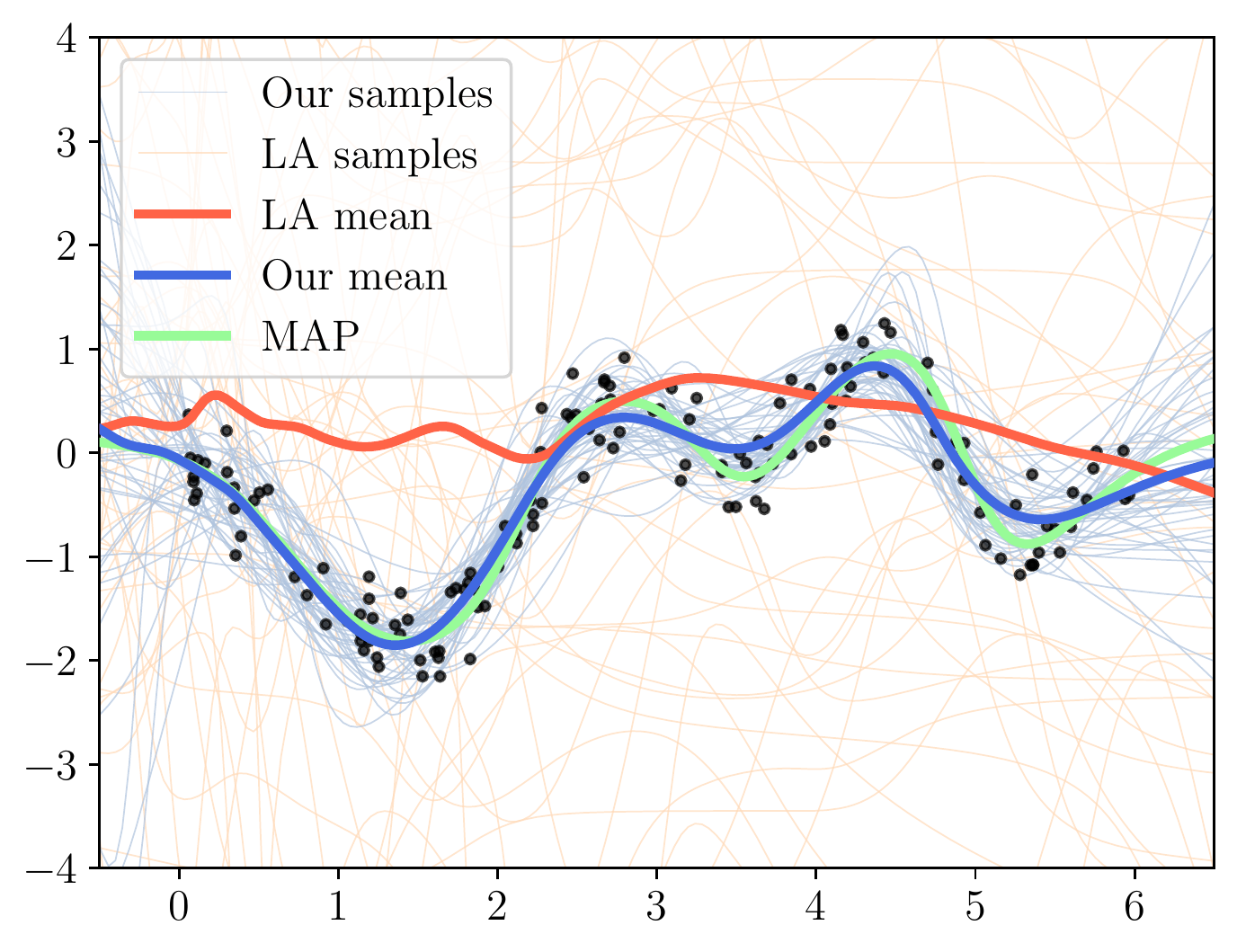}
    \caption{Posterior samples under a simple \emph{(top)} and an overparametrized model \emph{(bottom)}. Vanilla \textsc{la} is known to generate bad models, while our samples from \textsc{riem-la} quantify well the uncertainty.}
    \label{fig:regression-example}
\end{wrapfigure}We consider the toy-regression problem proposed by \citet{snelson2005sparse}. 
The dataset contains 200 data points, and we randomly pick 150 examples as our training set and the remaining 50 as a test set.
As shown by \citet{lawrence2001variational, ritter2018scalable}, using samples from the LA posterior performs poorly in regression even if the Hessian is not particularly ill-conditioned, i.e.\@ when the prior precision is optimized. 
For this reason, the linearization approach is necessary for regression with standard \textsc{LA}.
Instead, we show that even our basic approach fixes this problem when the prior is optimized.
We tested our approach by considering two fully connected networks, one with one hidden layer with 15 units and one with two layers with 10 units each, both with \texttt{tanh} activations. 
Our approach approximates well the true posterior locally, so the resulting function samples follow the data. 
Of course, if the Hessian is extremely degenerate our approach also suffers, as the initial velocities are huge. 
When we consider the linearized version of our approach the result is the same as the standard \textsc{LA}-linearization, which we include in the appendix~\ref{app:experiments}, where we also report results for in-between uncertainty as proposed by \citet{foong2019between}.

\subsection{Classification problems} 

\begin{figure}[t]
     \centering
    {\begin{subfigure}[b]{0.24\textwidth}
        \includegraphics[trim={2cm 1cm 1.5cm 1.25cm}, clip, width=\linewidth]{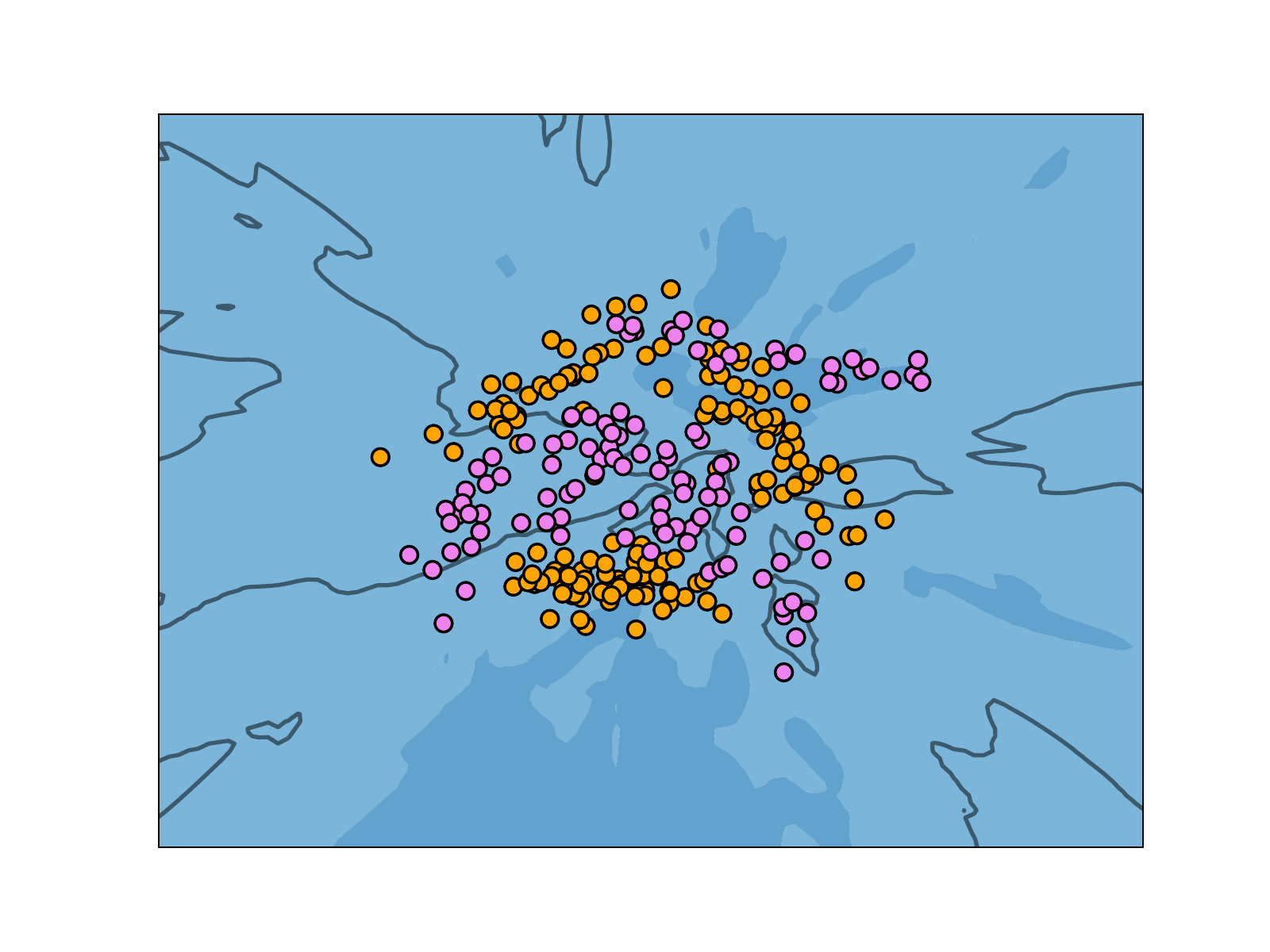}
        \caption{\textsc{vanilla la}}
        \label{fig:vanilla-la-conf}
    \end{subfigure}}
    {\begin{subfigure}[b]{0.24\textwidth}
        \includegraphics[trim={2cm 1cm 1.5cm 1.25cm}, clip, width=\linewidth]{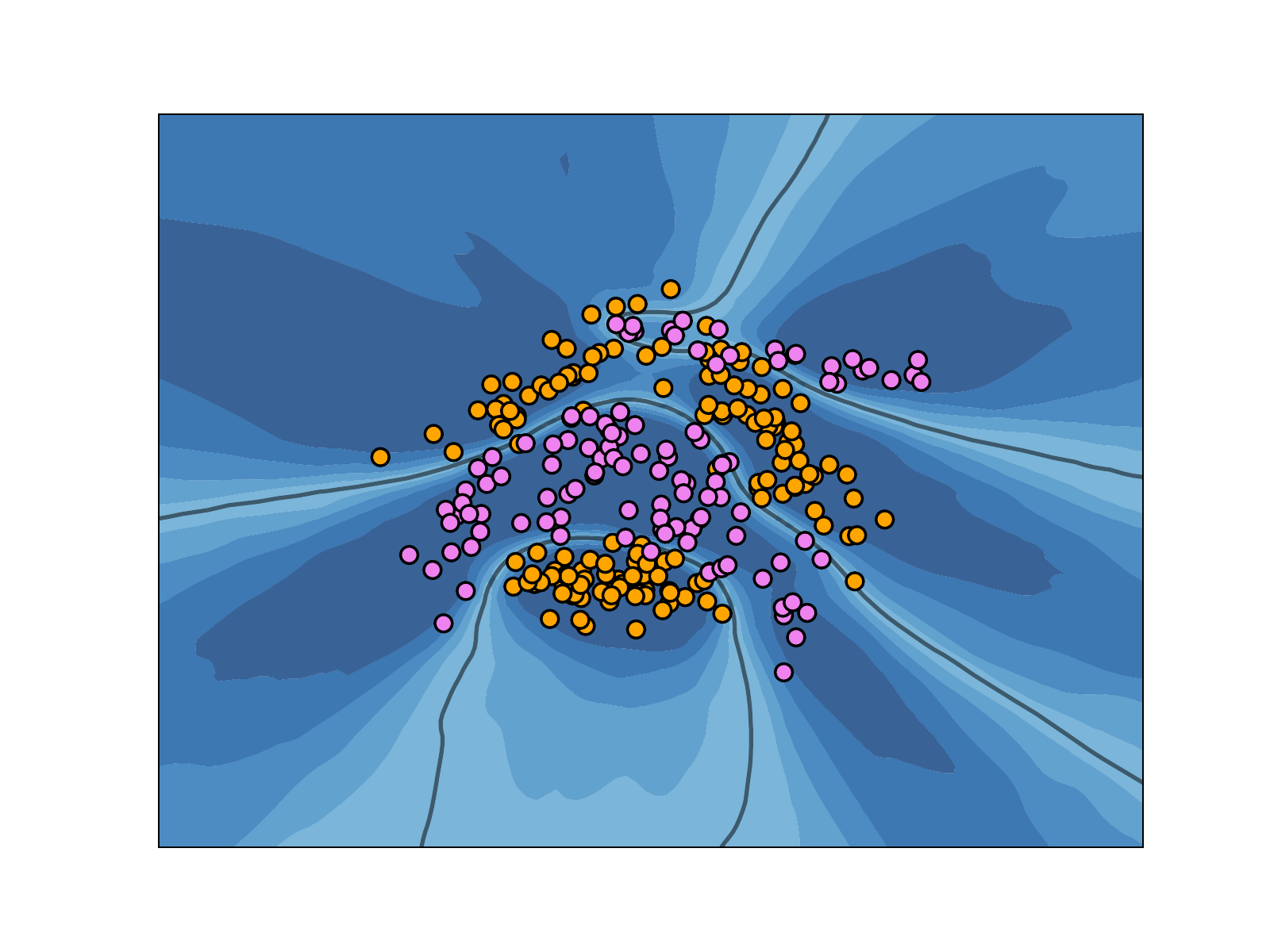}
        \caption{\textsc{riem-la}}
        \label{fig:riem-la-conf}
    \end{subfigure}}
    \begin{subfigure}[b]{0.24\textwidth}
        \includegraphics[trim={2cm 1cm 1.5cm 1.25cm},clip, width=\linewidth]{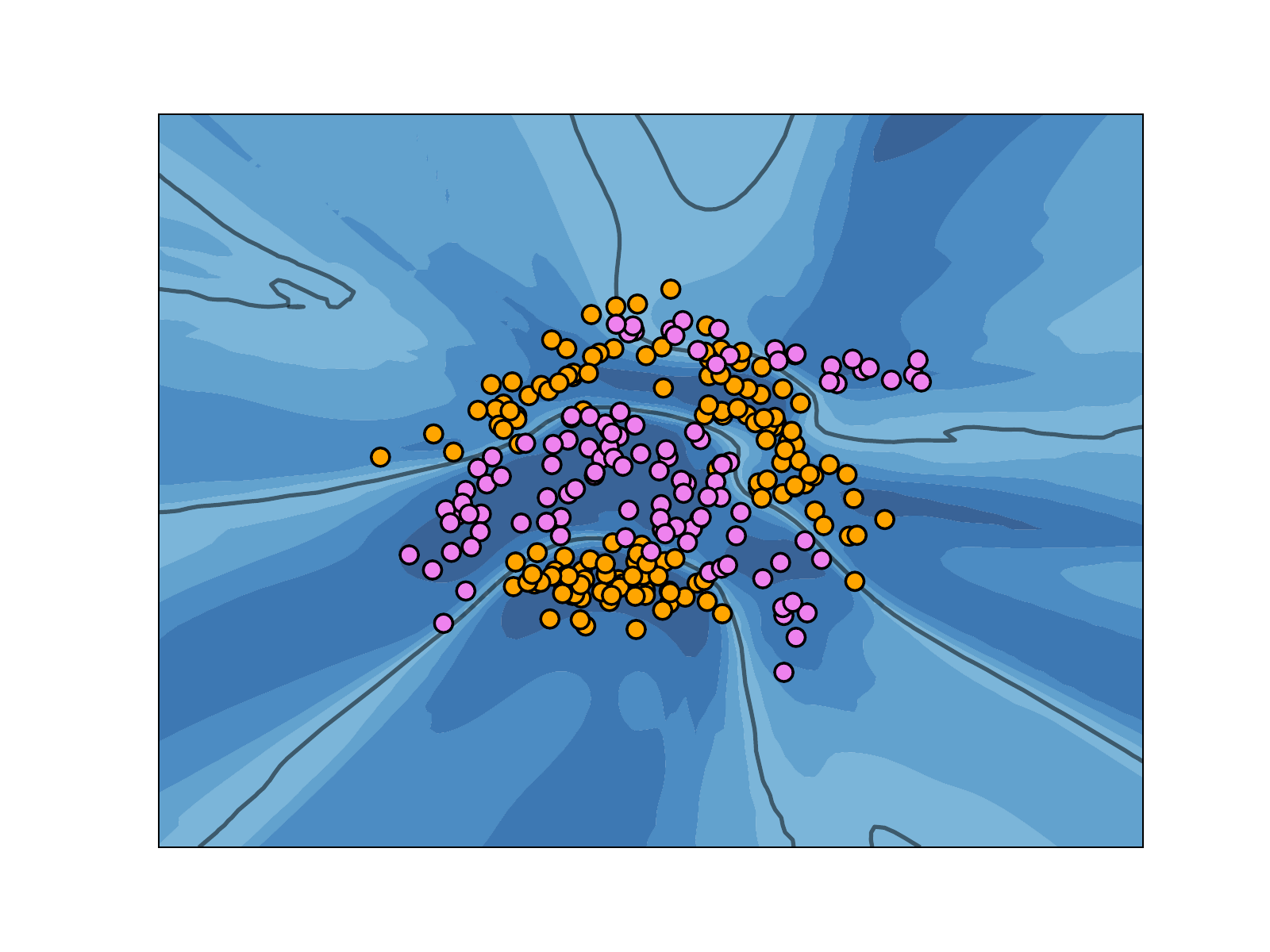}
        \caption{\textsc{lin-la}}
        \label{fig:lin-la-conf}
    \end{subfigure}
    \begin{subfigure}[b]{0.24\textwidth}
        \includegraphics[trim={2cm 1cm 1.5cm 1.25cm},clip, width=\linewidth]{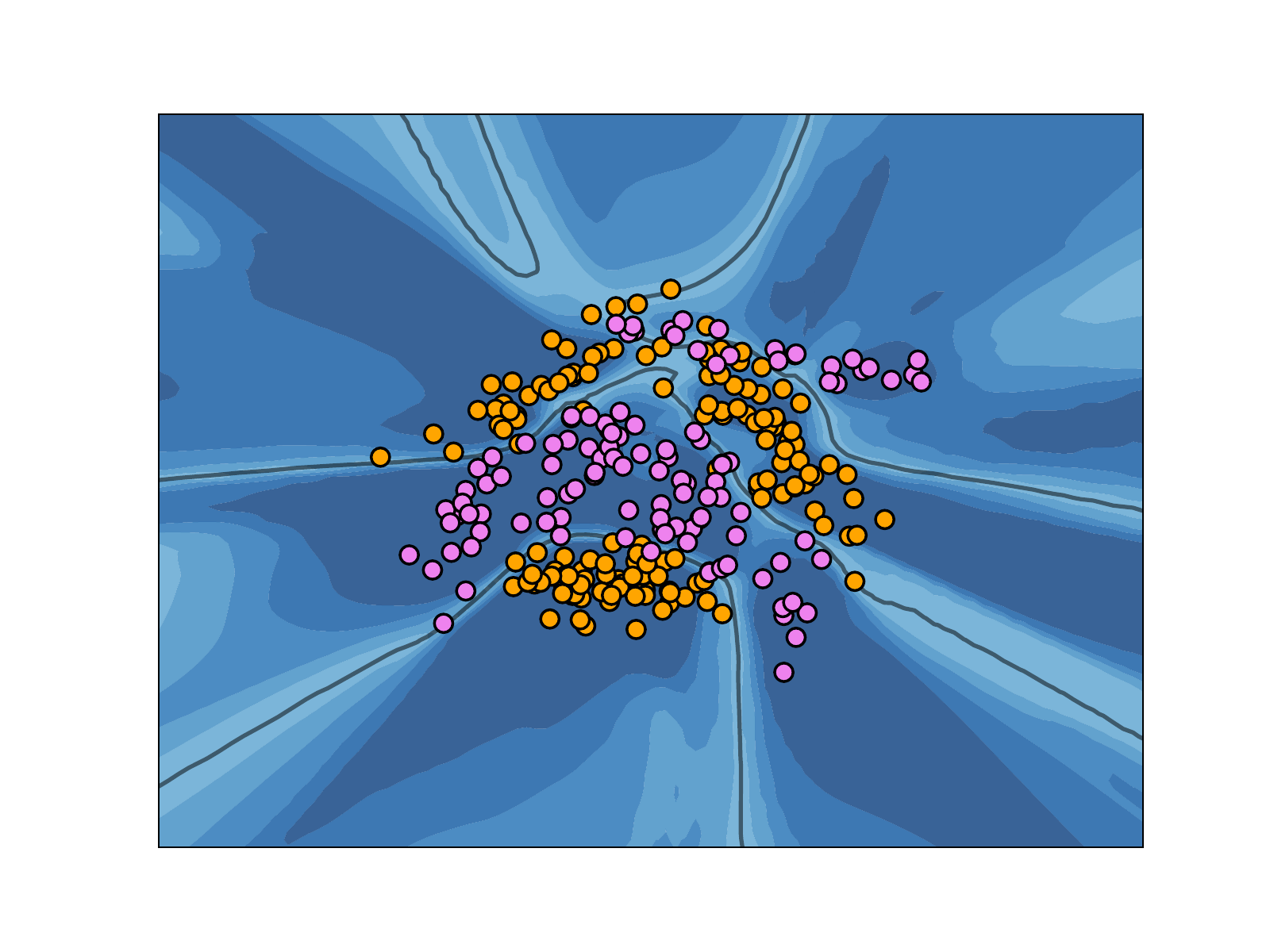}
        \caption{\textsc{lin-riem-la}}
        \label{fig:lin-riem-la-conf}
    \end{subfigure}
    \caption{Binary classification confidence estimate using $100$ Monte-Carlo samples on the banana dataset. Vanilla \textsc{la} underfit, while the other three methods are able to be certain within the data and uncertain far away. Note, for linearized \textsc{riem-la} we solve the expmap using a different subset of the data. Confidence plots of all different methods can be found in the supplementary material. Black lines are the decision boundaries.}
    \label{fig:experiment-classification-banana}
\end{figure}

\paragraph{Illustrative example.} We consider a 2-dimensional binary classification problem using the banana dataset which is shown in Fig.~\ref{fig:experiment-classification-banana}.
We train a 2-layer fully connected neural net with 16 hidden units per layer and \texttt{tanh} activation. For all methods, we use 100 MC samples for the predictive distribution. 

As in regression, direct samples from the vanilla \textsc{la} lead to a really poor model (Fig.~\ref{fig:vanilla-la-conf}) with high uncertainty both within and away from the data support. 
Instead, the other three methods (Fig.~\ref{fig:riem-la-conf}-\ref{fig:lin-riem-la-conf}) show a better-behaved confidence that decreases outside of the data support.
This is also supported by the metrics in Table~\ref{tab:banana_dataset_results}, where remarkably \textsc{riem-la} performs better in terms of NLL and Brier score on a separate test set.

As we discussed in Sec.~\ref{subsec:batching}, using a subset of the dataset for computing the exponential map can be beneficial for our linearized manifold in addition to speeding up computation. 
In Fig.~\ref{fig:lin-riem-la-conf} we plot the confidence for our linearized approach using batches while in appendix~\ref{app:experiments} we show the confidence of the same approach using the full data for solving the \textsc{ode}s. We can see that our linearized \textsc{riem-la} tends to be overconfident outside the data region and also close to the decision boundary.
This behaviour can be found in the high NLL that linearized \textsc{riem-la} gets compared to our vanilla approach and linearized \textsc{la}.

\begin{table}[t!]
    \caption{In-distribution results in the banana dataset. We use 100 MC samples both for the \textsc{la} and our variants. ECE and MCE are computed using $M=10$ bins. We report mean and standard error over 5 different seeds.\vspace{0.25cm}}
    \label{tab:banana_dataset_results}	
	\resizebox{\textwidth}{!}{
		\color{black}\scriptsize
		\begin{tabular}{ccccccc}
			\toprule
            &\multicolumn{6}{c}{\textsc{banana dataset}}\\
            \cmidrule{2-7} 
            & \multicolumn{3}{c}{\textcolor{black}{\textsc{prior optimized}}} & \multicolumn{3}{c}{\textcolor{black}{\textsc{prior not optimized}}}\\
            \cmidrule(r){2-4}\cmidrule(l){5-7}
            \textsc{method}  & \textcolor{black}{Accuracy $\uparrow$} & \textcolor{black}{NLL$\downarrow$} & \textcolor{black}{Brier$\downarrow$} & \textcolor{black}{Accuracy $\uparrow$} &\textcolor{black}{NLL$\downarrow$} & \textcolor{black}{Brier$\downarrow$} \\
            \midrule
			\textsc{map} & $86.69 \pm 0.34$ & $0.333 \pm 0.005$ & $0.0930 \pm 0.0015$ & $86.69 \pm 0.34$ & $0.333 \pm 0.005$ & $0.0930 \pm 0.0015$   \\
			\textsc{vanilla la} & $59.50 \pm 5.07$ & $0.678 \pm  0.009$ & $0.2426 \pm  0.0046$ &  $48.85 \pm 2.32$ & $0.700 \pm 0.003$ & $0.2534 \pm 0.0017$  \\
           \textsc{lin-la} & $86.99 \pm 0.37$ & $ 0.325 \pm  0.008$ & $0.0956 \pm 0.0023$ & $\maxf{86.92 \pm 0.40}$ & $0.403 \pm 0.012$ & $0.1196 \pm 0.0044$  \\
			\textsc{riem-la} & $\maxf{87.57 \pm 0.07}$ & $\maxf{0.287 \pm 0.002}$ & $\maxf{0.0886 \pm  0.0006}$ & $\maxf{87.14 \pm 0.20}$ & $\maxf{0.285 \pm 0.001}$ & $\maxf{0.0878 \pm 0.0006}$  \\
            \textsc{riem-la (batches)} & $87.30 \pm  0.08$ & $\maxf{0.286 \pm 0.001}$ & $\maxf{0.0890 \pm 0.0000}$ &$\maxf{87.32 \pm 0.17}$ & $\maxf{0.294 \pm 0.002}$ & $\maxf{0.0895 \pm 0.0004}$ \\
           \textsc{lin-riem-la} & $87.02 \pm  0.38$ & $0.415 \pm 0.029$ & $0.0967\pm  0.0024$ & $85.33 \pm 0.31$ & $0.884 \pm 0.037$ & $0.1252 \pm 0.0022$ \\
	        \textsc{lin-riem-la (batches)} & $\maxf{87.77 \pm 0.24}$ & ${0.298 \pm 0.006}$ & $\maxf{0.0887 \pm 0.0011}$ & $86.16 \pm 0.21$ & $ 0.352 \pm 0.002$ & $0.0994 \pm  0.0011$\\		
        \bottomrule
		\end{tabular}
    }
\end{table}

\paragraph{UCI datasets.~~} We compare our approach against the standard \textsc{la} on a set of six UCI classification datasets using a fully connected network with a single layer, 50 hidden units and \texttt{tanh} activation. 
The predictive distribution is estimated using MC with 30 samples from the approximate posterior of each approach. 
In Table~\ref{tab:uci_nll_results} we compare the methods in terms of their negative log-likelihood (NLL) in the test set. 
All other metrics are reported in appendix~\ref{app:experiments}. 
We are considering the setting where we optimize the prior-precision post-hoc, which is the optimal setting for \textsc{la} and linearized \textsc{la}. 
We consider our standard approaches without using batches, which we have seen that specifically for our linearized approach may lead to sub-optimal performance.

From the results in Table~\ref{tab:uci_nll_results} we see that our \textsc{riem-la} consistently performs better in terms of negative log-likelihood than vanilla and linearized \textsc{la}. 
We also observe that in two datasets the performance of our linearized \textsc{riem-la} is not optimal.
This implies that the loss surface of the linearized loss potentially over-regularizes the geodesics as we analyzed in Sec.~\ref{sec:efficient-implementation}, and in this case, considering mini-batching could have been beneficial.

\begin{table}[!h]
\smallskip
     \caption{Negative log-likelihood (lower is better) on UCI datasets for classification. Predictive distribution is estimated using 30 MC samples. Mean and standard error over 5 different seeds.\vspace{0.25cm}}
      \label{tab:uci_nll_results}
	\resizebox{\textwidth}{!}{
		\color{black}\scriptsize
		\begin{tabular}{cccccc}
			\toprule
            &\multicolumn{5}{c}{\textsc{prior precision optimized}}\\
            \cmidrule{2-6}
			\textsc{dataset}  & \textcolor{black}{\textsc{map}} & \textcolor{black}{\textsc{vanilla la}} & \textcolor{black}{\textsc{riem-la}} & \textcolor{black}{\textsc{linearized la}} & \textcolor{black}{\textsc{linearized riem-LA}} \\
            \midrule
			\textsc{vehicle} & $0.975 \pm 0.081$ & $1.209 \pm  0.020$ & $\maxf{0.454 \pm 0.024}$ & $0.875 \pm 0.020$ & $\maxf{0.494 \pm 0.044}$  \\
			\textsc{glass} & $2.084 \pm 0.323$ & $ 1.737 \pm  0.037$ & $\maxf{1.047 \pm  0.224}$ & $1.365 \pm  0.058$ & $1.359 \pm  0.299$  \\
           \textsc{ionosphere} & $1.032 \pm 0.175$ & $ 0.673 \pm  0.013$ & $\maxf{0.344 \pm 0.068}$ & $0.497 \pm 0.015$ & $ 0.625 \pm 0.110$  \\
			\textsc{waveform} & $1.076 \pm 0.110$ & $0.888 \pm 0.030$ & $\maxf{0.459 \pm 0.057}$ & $0.640 \pm 0.002$ & $0.575 \pm  0.065$ \\
            \textsc{australian} & $1.306 \pm  0.146$ & $ 0.684 \pm 0.011$ & $\maxf{0.541 \pm 0.053}$ & $\maxf{0.570 \pm 0.016}$ & $0.833 \pm 0.108$ \\
            \textsc{breast cancer} & $\maxf{0.225 \pm 0.076}$ & $0.594 \pm 0.030$ & $\maxf{0.176 \pm 0.092}$ & $0.327 \pm  0.022$ & $\maxf{0.202 \pm 0.073}$ \\
   \bottomrule
		\end{tabular}
  }   
  \smallskip
\end{table}

\paragraph{Image classification.~~} We consider a small convolutional neural network on MNIST and FashionMNIST. 
Our network consists of two convolutional layers followed by average pooling layers and three fully connected layers.
We consider a model of this size as the high dimensionality of the parameter space is one of the main limitations of the \textsc{ode} solver.
For the training of the model, we subsample each dataset and we consider 5000 observations by keeping the proportionality of labels, and we test in the full test set containing 8000 examples. 
In Table~\ref{tab:mnist_fmnist_results} we compare the different methods with the prior precision optimized as this is the ideal setting for the linearized \textsc{la}. 
We refer to appendix~\ref{app:experiments} for the setting with the prior precision not optimized. 

From the results we observe that our standard \textsc{riem-la} performs better than all the other methods in terms of NLL and Brier score, meaning that the models are better calibrated, but it also leads to a more accurate classifier than the MAP. 
In terms of ECE, it seems that considering the linearized approach is beneficial in producing better-calibrated models in both datasets.
This holds both for our approach linearized \textsc{riem-la} and the standard \textsc{la}. 
Optimizing the prior precision post-hoc is crucial for the vanilla \textsc{la} and associated results can be seen in appendix~\ref{app:experiments}.
Instead, both our methods appear to be robust and consistent, as they achieve similar performance no matter if the prior precision is optimized or not.

Note that for the mini-batches for our approaches, we consider 20\% of the data by randomly selecting 1000 observations while we respect the label frequency based on the full dataset.
Clearly, the batch-size is a hyperparameter for our methods and can be estimated systematically using cross-validation.
Even if we do not optimize this hyperparameter, we see that our batched version of \textsc{riem-la} and \textsc{lin-riem-la} perform better than the standard \textsc{la} and on-par with our \textsc{lin-riem-la} without batches, implying that a well-tuned batch-size can potentially further improve the performance.
Nevertheless, this also shows that our method is robust with respect to the batch-size.

\begin{table}[t!]
   \caption{Image classification results using a CNN on MNIST and FashionMNIST. The network is trained on $5000$ examples and we test the in-distribution performance on the test set, which contains 8000 examples. We use $25$ Monte Carlo samples to approximate the predictive distribution and $1000$ datapoints per batch in our batched manifolds. Calibration metrics are computed using $M=15$ bins. We report mean and standard error for each metric over 3 different seeds.\vspace{0.25cm}}
  \label{tab:mnist_fmnist_results}	
	\resizebox{\textwidth}{!}{
		\color{black}\scriptsize
		\begin{tabular}{cccccc}
            \toprule
            &\multicolumn{5}{c}{\textsc{CNN on MNIST - prior precision optimized}}\\
            \cmidrule{2-6}
			\textsc{method}  & \textcolor{black}{Accuracy $\uparrow$} & \textcolor{black}{NLL$\downarrow$} & \textcolor{black}{Brier$\downarrow$} & \textcolor{black}{ECE$\downarrow$} & \textcolor{black}{MCE$\downarrow$} \\
            \midrule
			\textsc{map} & $95.02 \pm 0.17$ & $0.167 \pm  0.005$ & $0.0075 \pm 0.0002$ & $ \maxfsecond{1.05 \pm  0.14}$ & $39.94 \pm 14.27$  \\
			\textsc{vanilla la} & $88.69 \pm 1.84$ & $0.871 \pm 0.026$ & $0.0393 \pm  0.0013$ & $42.11 \pm 1.22$ & $50.52 \pm 1.45$  \\
           \textsc{lin-la} & $94.91 \pm 0.26$ & $ 0.204 \pm  0.006$ & $0.0087 \pm 0.0003$ & $6.30 \pm 0.08$ & $ 39.30 \pm 16.77$  \\
			\textsc{riem-la} & $\maxf{96.74 \pm 0.12}$ & $\maxf{0.115 \pm 0.003}$ & $\maxf{0.0052 \pm 0.0002}$ & $ 2.48 \pm 0.06$ & $38.03 \pm 15.02$ \\
            \textsc{riem-la (batches)} & $95.67 \pm 0.19$ & $0.170 \pm 0.005$ & $\maxfsecond{0.0072 \pm 0.0002}$ & $5.40 \pm 0.06$ & $\maxfsecond{22.40 \pm 0.51}$ \\
           \textsc{lin-riem-la} & $95.44 \pm  0.18$ & $\maxfsecond{0.149 \pm 0.004}$ & $\maxfsecond{0.0068 \pm 0.0003}$ & $\maxf{0.66 \pm 0.03}$ & $39.40 \pm 14.75$ \\
	        \textsc{lin-riem-la (batches)} & $ 95.14 \pm 0.20$ & $0.167 \pm 0.004$ & $0.0076 \pm 0.0002$ & $3.23 \pm 0.04$ & $\maxf{18.10 \pm 2.50}$ \\		
			\bottomrule
            &  &  &    &  & \\
            \toprule
            &\multicolumn{5}{c}{\textsc{CNN on FashionMNIST - prior precision optimized}}\\
            \cmidrule{2-6}
			\textsc{Method}  & \textcolor{black}{Accuracy $\uparrow$} & \textcolor{black}{NLL$\downarrow$} & \textcolor{black}{Brier$\downarrow$} & \textcolor{black}{ECE$\downarrow$} & \textcolor{black}{MCE$\downarrow$} \\
            \midrule
			\textsc{map} & $79.88 \pm 0.09$ & $0.541 \pm 0.002$ & $0.0276\pm 0.0000$ & $\maxf{1.66 \pm 0.07}$ & $24.07 \pm 1.50$  \\
			\textsc{vanilla la} & $74.88 \pm 0.83$ & $1.026 \pm  0.046$ & $ 0.0482 \pm 0.0019$ & $31.63 \pm 1.28$ & $43.61 \pm 2.95$  \\
           \textsc{lin-la} & $79.85 \pm 0.13$ & $ 0.549 \pm 0.001$ & $0.0278 \pm 0.0000$ & $3.23 \pm 0.44$ & $ 37.88 \pm 17.98$  \\
			\textsc{riem-la} & $\maxf{83.33 \pm 0.17}$ & $\maxf{0.472 \pm 0.001}$ & $\maxf{0.0237 \pm 0.0001}$ & $3.13 \pm 0.48$ & $\maxfsecond{10.94 \pm 2.11}$ \\
            \textsc{riem-la (batches)} & $\maxfsecond{81.65 \pm 0.18}$ & $\maxfsecond{0.525 \pm  0.004}$ & $\maxfsecond{0.0263 \pm 0.0002}$ & $5.80 \pm 0.73$ & $35.30 \pm 18.40$ \\
           \textsc{lin-riem-la} & $81.33 \pm 0.10$ & $\maxfsecond{0.521 \pm 0.004}$ & $\maxfsecond{0.0261 \pm 0.0002}$ & $\maxf{1.59 \pm 0.40}$ & $25.53 \pm 0.10$ \\
	        \textsc{lin-riem-la (batches)} & $ 80.49 \pm 0.13$ & $0.529 \pm 0.003$ & $0.0269 \pm  0.0002$ & $\maxfsecond{2.10 \pm 0.42}$ & $\maxf{6.14 \pm 1.42}$ \\		
            \bottomrule
		\end{tabular}     }
\end{table}

\section{Conclusion}
\label{sec:conclusion}

We propose an extension to the standard Laplace approximation, which leverages the natural geometry of the parameter space. Our method is parametric in the sense that a Gaussian distribution is estimated using the standard Laplace approximation, but it adapts to the true posterior through a nonparametric Riemannian metric. This is a general mechanism that, in principle, can also apply to, e.g., variational approximations. In a similar vein, while the focus of our work is on Bayesian neural networks, nothing prevents us from applying our method to other model classes.

Empirically, we find that our Riemannian Laplace approximation is better or on par with alternative Laplace approximations. The standard Laplace approximation crucially relies on both linearization and on a fine-tuned prior to give useful posterior predictions. Interestingly, we find that the Riemannian Laplace approximation requires neither. This could suggest that the standard Laplace approximation has a rather poor posterior fit, which our adaptive approach alleviates.

\textbf{Limitations.~~} The main downside of our approach is the computational cost involved in integrating the \textsc{ode}, which is a common problem in computational geometry \citep{arvanitidis:2019}. The cost of evaluating the \textsc{ode} scales linearly with the number of observations, and we have considered the `obvious' mini-batching solution. Empirically, we find that this introduces some stochasticity in the sampling, which can actually be helpful in the posterior exploration. The computational cost also grows with the dimensionality of the parameter space, predominantly because the number of necessary solver steps increases as well. Our implementation relies on an off-the-shelf \textsc{ode} solver, and we expect that significant improvements can be obtained using a tailor-made numerical integration method.\looseness=-1

\newpage
\begin{ack}
This work was funded by the Innovation Fund Denmark (0175-00014B) and the Novo Nordisk Foundation through the Center for Basic Machine Learning Research in Life Science (NNF20OC0062606). It also received funding from the European Research Council (ERC) under the European Union’s Horizon 2020 research, innovation programme (757360). SH was supported in part by a research grant (42062) from VILLUM FONDEN.
\end{ack}

{
\small
\bibliography{references}
\bibliographystyle{plainnat}
}

\newpage
\appendix

\makeatletter
\renewcommand*\l@section{\@dottedtocline{1}{1.5em}{2.3em}}

\setcounter{page}{1}
\counterwithin{figure}{section}
\counterwithin{equation}{section}

\hrule height 4pt
\vskip 0.25in
\vskip -\parskip%
    {\centering
    {\LARGE{\bf{Riemannian Laplace approximations for Bayesian neural networks \\ (Appendix)}}\par}}
\vskip 0.29in
\vskip -\parskip
\hrule height 1pt
\vskip 0.09in%

\renewcommand{\contentsname}{Contents of the Appendix}
\addtocontents{toc}{\protect\setcounter{tocdepth}{2}}
\tableofcontents

\section{Laplace background}
\label{app:laplace_background}

We provide a more detailed introduction to Laplace approximation and discuss the optimiziation of the prior and likelihood hyperparameters by maximizing the marginal likelihood. In \textsc{BNN}s, Laplace approximation is use to approximate the posterior distribution, i.e.:

\begin{equation}
\label{eq:posterior}
    p(\theta \mid \mathcal{D}) = \frac{p(\mathcal{D}\mid \theta)p(\theta)}{p(\mathcal{D})} = \frac{p(\mathcal{D}\mid \theta)p(\theta)}{\int_{\Theta}p(\mathcal{D}\mid \theta)p(\theta)}= \frac{1}{Z} p(\mathcal{D}\mid \theta)p(\theta).
\end{equation}

This is done by fitting a Gaussian approximation to the unnormalized distribution $p(\mathcal{D}\mid \theta)p(\theta)$ at its peak, where $p(\mathcal{D}\mid \theta)$ is the likelihood distribution and $p(\theta)$ is the prior over the weights. 
In standard training of neural networks the mean-squared error loss is usually used for regression. This corresponds to optimizing a Gaussian log-likelihood up to a scaling factor while using the cross-entropy loss in classification correspond to minimizing the negative log-likelihood.
The usual weight decay, or L2 regularization, corresponds instead to a Gaussian prior $p(\theta)$ distribution. More specifically, training with $\lambda ||\theta||_2^2$ corresponds to a Gaussian prior $\mathcal{N}(\theta; 0, \frac{\lambda^{-2}}{2}I)$

Therefore a natural peak to choose is given by the $\thetaMAP$, which is the set of weights obtained at the end of the training. Indeed, $\thetaMAP$ is usually computed by:
\begin{align}
    \thetaMAP = \argmin_\theta \underbrace{\left\{ -\sum_{n=1}^N \log p(\b{y}_n ~|~\b{x}_n, \theta) - \log p(\theta) \right\}}_{\Loss(\theta)}.
\end{align}

Once we have $\thetaMAP$, the Laplace approximation uses a a second-order Taylor expansion of $\mathcal{L(\theta)}$ around  $\thetaMAP$, which yields:
\begin{align}
     \hat{\Loss}(\theta) \approx \Loss(\thetaMAP) + \inner{\grad{\theta}{\Loss(\theta)}\big|_{\theta=\thetaMAP}}{(\theta - \thetaMAP)} + \frac{1}{2}\inner{(\theta - \thetaMAP)}{\Hess{\theta}{\Loss}\big|_{\theta=\thetaMAP}(\theta - \thetaMAP)},
\end{align}
where the first-order term $\nabla_\theta \Loss(\theta)\big|_{\theta=\thetaMAP} \approx 0$ because the gradient at $\thetaMAP$ is $0$.

By looking at $\mathcal{L}(\theta)$, we can notice that the Hessian is composed by two terms: a data fitting term and a prior term. Assuming $p(\theta) = \mathcal{N}(\theta; 0, \gamma^2 I)$, then the Hessian can be expressed as
\begin{align}
   \Hess{\theta}{\Loss}\big|_{\theta=\thetaMAP} &= \gamma^{-2}\mathbf{I} + \sum_{i=1}^{n} \nabla_{\theta}^2 \log p(y_i\mid x_i) \mid_{\thetaMAP}= (H + \alpha \mathbf{I}),
\end{align}
where we defined $\alpha = \frac{1}{\gamma^2}$, i.e. the prior precision.

Using the fact that $ \hat{\Loss}(\theta)$ is the negative log-numerator of (\ref{eq:posterior}), we can get $ p(\mathcal{D}\mid \theta)p(\theta)$ by taking the exponential of $ - \hat{\Loss}(\theta)$. By doing so we have:
\begin{align}
    p(\mathcal{D}\mid \theta)p(\theta) \approx \exp( -\hat{\Loss}(\theta)) = \exp(-\mathcal{L}(\thetaMAP) - \frac{1}{2} (\theta - \thetaMAP)^T (\mathbf{H} + \alpha \mathbf{I})(\theta - \thetaMAP)).
\end{align}
For simplicity, we can define $\Sigma = (\mathbf{H} + \alpha \mathbf{I})^{-1}$ and rewrite the equation above to obtain:
\begin{align}
    p(\mathcal{D}\mid \theta)p(\theta) \approx \exp(-\mathcal{L}(\thetaMAP)) \exp(- \frac{1}{2} (\theta - \thetaMAP)^T \Sigma^{-1} (\theta - \thetaMAP)))
\end{align}

We can then use this approximation to estimate the normalizing constant of our approximate posterior, which corresponds to the marginal log-likelihood $p(\mathcal{D})$:
\begin{align}
    p(\mathcal{D}) = Z \approx \int_{\Theta} \exp(-\mathcal{L}(\thetaMAP)) \exp(- \frac{1}{2} (\theta - \thetaMAP)^T \Sigma^{-1} (\theta - \thetaMAP))) d\theta.
\end{align}
This can be rewritten as
\begin{align}
p(\mathcal{D}) \approx \exp(-\mathcal{L}(\thetaMAP)) \int_{\Theta} \exp(- \frac{1}{2} (\theta - \thetaMAP)^T \Sigma^{-1} (\theta - \thetaMAP))),
\end{align}
and by using the Gaussian integral properties we can write it as:
\begin{align}
    p(\mathcal{D})  &\approx \exp (-\mathcal{L}(\thetaMAP))(2\pi)^{d/2}(\det \Sigma)^{1/2},
\end{align}
and by taking the logarithm we get
\begin{align}
\log p(\mathcal{D}) \approx -\mathcal{L}(\thetaMAP) + \log (2\pi)^{d/2} + \log(\det \Sigma)^{1/2},
\end{align}
which is the approximation of the log marginal likelihood that we want to maximize to optimize the parameters that appears in $\mathcal{L}(\thetaMAP)$. In a regression problem, we are interested in optimizing both the variance of the Gaussian likelihood and the prior precision. In classification, instead, we just have the prior precision as a hyperparameter to tune.

\section{Riemannian geometry}
\label{app:riemannian-geometry}

We rely on Riemannian geometry \citep{lee2019introduction, do1992riemannian} in order to construct our approximate posterior.
In a nutshell, a $d$-dimensional Riemannian manifold can be seen intuitively as a smooth $d$-dimensional surface that lies within a Euclidean space of dimension $D>d$, which allows one to compute distances between points that respect the geometry of the surface.

\begin{definition}
    A Riemannian manifold $\M$ is a smooth manifold together with a Riemannian metric $\b{M}(\b{x})$ that acts on the associated tangent space $\Tangent{\b{x}}{\M}$ at any point $\b{x}\in\M$.
\end{definition}

\begin{definition}
    A Riemannian metric $\b{M}:\M \to \R^{\text{dim}(\M)\times \text{dim}(\M)}$ is a smoothly changing positive definite metric tensor that defines an inner product on the tangent space $\Tangent{\b{x}}{\M}$ at any point $\b{x}\in\M$.
\end{definition}

We focus on the Bayesian neural network framework where $\Theta=\R^K$ is the parameter space of the associated deep network, and we consider the manifold $\M=g(\theta)=[\theta, \Loss(\theta)]$.
This is essentially the loss surface which is $K$-dimensional and lies within a $(K+1)$-dimensional Euclidean space.
In order to satisfy the smoothness condition for $\M$ we restrict to activation functions as the \texttt{tanh}, while common loss function as the mean squared error and softmax are also smooth.

The parameter space $\Theta$ is a parametrization of this surface and technically represents the \emph{intrinsic coordinates} of the manifold, which is known as the \emph{global chart} in the literature.
The Jacobian $\b{J}_g(\theta)\in\R^{K+1\times K}$ of the map $g$ spans the tangent space on the manifold, and a tangent vector can be written as $\b{J}_g(\theta)\b{v}$, where $\b{v}\in\R^K$ are the \emph{tangential coordinates}.
We can thus compute the inner product between two tangent vectors in the ambient space using the Euclidean metric therein as
\begin{equation}
\inner{\b{J}_g(\theta)\b{v}}{\b{J}_g(\theta)\b{u}} = \inner{\b{v}}{\b{M}(\theta)\b{u}}.
\end{equation}
Here the matrix $\b{M}(\theta) = \b{J}_g(\theta)\T \b{J}_g(\theta) = \Id_K + \nabla_\theta\Loss(\theta)\nabla_\theta\Loss(\theta)\T$ is a Riemannian metric that is known in the literature as the \emph{pull-back metric}.
Note that the flat space $\Theta$ is technically a smooth manifold and together with $\b{M}(\theta)$ is transformed into a Riemannian manifold.
This can be also seen as the \emph{abstract manifold} definition where we only need the intrinsic coordinates and the Riemannian metric to compute geometric quantities, and not the actual embedded manifold in the ambient space.

One example of a geometric quantity is the shortest path between two points $\theta_1, \theta_2\in \Theta$.
Let a curve $c:[0,1]\to\Theta$ with $c(0)=\theta_1$ and $c(1)=\theta_2$, the its length defined under the Riemannian metric as $\texttt{length}[c]=\int_0^1 \sqrt{\inner{\dot{c}(t)}{\b{M}(c(t))\dot{c}(t)}}dt$.
This quantity is computed intrinsically but it also corresponds to the length of the associated curve on $\M$.
The shortest path then is defined as $c^*(t) = \argmin_c \texttt{length}[c]$, but as the length is invariant under re-parametrizations of time, we consider the energy functional instead, which we optimize using the Euler-Lagrange equations.
This gives the following system of second-order nonlinear ordinary differential equations (\textsc{ode}s) 
\begin{equation}
    \label{eq:ode-original}
    \ddot{c}(t) = -\frac{\b{M}^{-1}(c(t))}{2}\left[2\left[ \frac{\partial\b{M}(c(t))}{\partial c_1(t)},\dots,\frac{\partial\b{M}(c(t))}{\partial c_K(t)}\right]
    -
    \frac{\partial \text{vec}[\b{M}(c(t))]}{\partial c(t)}\T
    \right]
    (\dot{c}(t)\otimes \dot{c}(t)),
\end{equation}
where $\text{vec}[\cdot]$ stacks the columns of a matrix and $\otimes$ the Kronocher product \citep{arvanitidis:iclr:2018}.
A curve that satisfies this system is known as \emph{geodesic} and is potentially the shortest path.
When the system is solved as a Boundary Value Problem (\textsc{bvp}) with initial condition $c(0)=\theta_1$ and $c(1)=\theta_2$, we get the geodesic that connects these two points.
Let $\b{v}=\dot{c}(0)$ be the velocity of this curve at $t=0$.
When the system is solved as an Initial Value Problem (\textsc{IVP}) with conditions $c(0)=\theta_1$ and $\dot{c}(0)=\b{v}$, we get the geodesic $c_\b{v}(t)$ between $c_\b{v}(0)=\theta_1$ and $c_\b{v}(1)=\theta_2$.
This operation is known as the \emph{exponential map} and we use it for our approximate posterior.

In general, an analytic solution for this system of \textsc{ode}s does not exist \citep{hennig:aistats:2014, arvanitidis:2019}, and hence, we rely on an approximate numerical off-the-shelf \textsc{ode} solver.
Note that this is a highly complicated system, especially when the Riemannian metric depends on a finite data set, while it is computationally expensive to evaluate it.
As we show below, the structure of the Riemannian metric that we consider, allows us to simplify significantly this system.

\begin{lemma}
For the Riemannian metric in \eqref{eq:riemannian_metric} the general \textsc{ode}s system in \eqref{eq:ode-original} becomes
\begin{equation}
    {\normalfont\ddot{c}(t) = -\frac{\grad{\theta}{\Loss(c(t))}}{1 + \inner{\grad{\theta}{\Loss(c(t))}}{\grad{\theta}{\Loss(c(t))}}}\inner{\dot{c}(t)}{\text{H}_{\theta}[\Loss](c(t))\dot{c}(t)}}
\end{equation}
\end{lemma}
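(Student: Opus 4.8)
The plan is to reduce the general vec-form geodesic system~\eqref{eq:ode-original} to the familiar contracted form $\ddot{c}^{\,i} = -\tfrac12\,M^{il}\big(2\,\partial_k M_{lj} - \partial_l M_{jk}\big)\dot{c}^{\,j}\dot{c}^{\,k}$ (summation over $j,k,l$) and then exploit the rank-one structure of the metric~\eqref{eq:riemannian_metric}. Throughout I abbreviate $\b{g}=\grad{\theta}{\Loss(c(t))}\in\R^K$, $\b{H}=\HH_\theta[\Loss](c(t))\in\R^{K\times K}$ and $\b{v}=\dot{c}(t)$; recall $\b{M}(c(t)) = \Id_K + \b{g}\b{g}\T$ and that $\partial\b{g}/\partial\theta = \b{H}$ is symmetric.

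First I would record two elementary facts. Differentiating $M_{jk}=\delta_{jk}+g_j g_k$ gives $\partial M_{jk}/\partial c_l = H_{jl}\,g_k + g_j\,H_{kl}$. And by Sherman--Morrison, $\b{M}^{-1}=\Id_K-\b{g}\b{g}\T/(1+\inner{\b{g}}{\b{g}})$, whence $\b{M}^{-1}\b{g} = \b{g}\big(1 - \inner{\b{g}}{\b{g}}/(1+\inner{\b{g}}{\b{g}})\big) = \b{g}/(1+\inner{\b{g}}{\b{g}})$.

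Next I would identify the two pieces inside the bracket of~\eqref{eq:ode-original} with the two contracted derivative terms above. Tracking the Kronecker/vec conventions, the $l$-th component of $[\partial\b{M}/\partial c_1,\dots,\partial\b{M}/\partial c_K]\,(\b{v}\otimes\b{v})$ is $(\partial M_{lj}/\partial c_k)\,\dot{c}^{\,j}\dot{c}^{\,k}$ and the $l$-th component of $(\partial\,\mathrm{vec}[\b{M}]/\partial c)\T(\b{v}\otimes\b{v})$ is $(\partial M_{jk}/\partial c_l)\,\dot{c}^{\,j}\dot{c}^{\,k}$. Substituting the derivative formula and using symmetry of $\b{H}$, contraction with $\b{v}$ yields $(\partial M_{lj}/\partial c_k)\,\dot{c}^{\,j}\dot{c}^{\,k} = \inner{\b{g}}{\b{v}}\,(\b{H}\b{v})_l + g_l\,\inner{\b{v}}{\b{H}\b{v}}$ and $(\partial M_{jk}/\partial c_l)\,\dot{c}^{\,j}\dot{c}^{\,k} = 2\,\inner{\b{g}}{\b{v}}\,(\b{H}\b{v})_l$. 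The crucial observation is that in $2\cdot(\text{first}) - (\text{second})$ the $\inner{\b{g}}{\b{v}}(\b{H}\b{v})$ contributions cancel exactly, so the bracketed vector equals $2\,\b{g}\,\inner{\b{v}}{\b{H}\b{v}}$. Plugging this into~\eqref{eq:ode-original} and invoking the second fact,
\[
    \ddot{c}(t) = -\tfrac12\,\b{M}^{-1}\cdot 2\,\b{g}\,\inner{\b{v}}{\b{H}\b{v}} = -\b{M}^{-1}\b{g}\,\inner{\b{v}}{\b{H}\b{v}} = -\frac{\b{g}}{1+\inner{\b{g}}{\b{g}}}\,\inner{\b{v}}{\b{H}\b{v}},
\]
which, restoring the abbreviations, is exactly the claimed \textsc{ode}.

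\textbf{The main obstacle} is purely bookkeeping: getting the Kronecker-product ordering in $\b{v}\otimes\b{v}$ and the transpose of $\partial\,\mathrm{vec}[\b{M}]/\partial c$ to line up with the index expression, so that each of the two bracket pieces carries the correct free index. Once that is pinned down, the cancellation of the $\inner{\b{g}}{\b{v}}(\b{H}\b{v})$ terms and the Sherman--Morrison step are immediate, and no property of $\Loss$ beyond twice-differentiability (so that $\b{H}$ exists and is symmetric) enters.
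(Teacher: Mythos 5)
Your proposal is correct and follows essentially the same route as the paper's proof: differentiate the rank-one metric, show that contracting the bracketed term of the general geodesic system with $\dot{c}\otimes\dot{c}$ collapses (after cancellation) to $2\,\grad{\theta}{\Loss}\,\inner{\dot{c}}{\HH_\theta[\Loss]\dot{c}}$, and finish with Sherman--Morrison. The only cosmetic difference is that you work in index notation and use the identity $\b{M}^{-1}\grad{\theta}{\Loss} = \grad{\theta}{\Loss}/(1+\inner{\grad{\theta}{\Loss}}{\grad{\theta}{\Loss}})$ directly, whereas the paper manipulates the column-block form and expands $\b{M}^{-1}$ explicitly.
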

\begin{proof}
We consider the general system, and we compute each term individually. 
To simplify notation we will use $\b{M} := \b{M}(c(t))$, $\nabla := \grad{\theta}{\Loss(c(t))}$, $\HH := \text{H}_{\theta}[\Loss](c(t))$, $\HH_i$ the $i$-th column of the Hessian and $\nabla_i$ the $i$-th element of the gradient $\nabla$.

Using the Sherman–Morrison formula we have that
\begin{equation}
\label{eq:metric-inverse}
    \b{M}(c(t))^{-1} = \Id_K - \frac{\nabla_{\theta}\Loss(c(t)) \nabla_{\theta}\Loss(c(t))\T}{1 + \inner{\nabla_{\theta}\Loss(c(t))}{\nabla_{\theta}\Loss(c(t))}}
\end{equation}

The first term in the brackets is 
\begin{align}
    2\left[\frac{\partial\b{M}(c(t))}{\partial c_1(t)},\dots,\frac{\partial\b{M}(c(t))}{\partial c_K(t)}\right]=
    2\left[\HH_1\nabla\T + \nabla\HH_1\T, \dots, \HH_K\nabla\T + \nabla\HH_K\T\right]_{D\times D^2}
\end{align}

and the second term in the brackets is
\begin{align}
    \frac{\partial \text{vec}[\b{M}(c(t))]}{\partial c(t)}\T = \left[\nabla_1\HH + \HH_1\nabla\T, \dots, \nabla_K\HH + \HH_K\nabla\T\right]
\end{align}

and their difference is equal to
\begin{align}
\label{eq:brackets-result}
    \left[2\nabla\HH_1\T + \HH_1\nabla\T - \nabla_1\HH, \dots, 2\nabla\HH_K\T + \HH_K\nabla\T - \nabla_K\HH\right].
\end{align}

We compute the matrix-vector product between the matrix \eqref{eq:brackets-result} and the Kronocker product $\dot{c}(t)\otimes\dot{c}(t)=[\dot{c}_1 \dot{c}, \dots, \dot{c}_K \dot{c}]\T\in\R^{D^2 \times 1}$ which gives
\begin{align}
    \sum_{i=1}^K 2\nabla \HH_i\T \dot{c}\dot{c}_i + \HH_i\nabla\T\dot{c}\dot{c}_i - \nabla_i\HH \dot{c}\dot{c}_i 
    =2\nabla\sum_{i=1}^K \dot{c}_i \HH_i\T \dot{c} + \nabla\T \dot{c} \sum_{i=1}^K\HH_i \dot{c}_i - \HH\dot{c} \sum_{i=1}^K\nabla_i \dot{c}_i = 2 \nabla\inner{\dot{c}}{\HH \dot{c}}
\end{align}
where we used that $\sum_{i=1}^K\HH_i\dot{c}_i = \HH\dot{c}$ and $\sum_{i=1}^K\nabla_i \dot{c}_i = \nabla\T \dot{c}$. As a final step, we plug-in this result and the inverse of the metric in the general system which gives
\begin{align}
    \ddot{c} = -\frac{1}{2}\left(\Id_K - \frac{\nabla\nabla\T}{1 + \inner{\nabla}{\nabla}}\right)2\nabla\inner{\dot{c}}{\HH\dot{c}} = -\frac{\nabla}{1+\inner{\nabla}{\nabla}}\inner{\dot{c}}{\HH\dot{c}}.
\end{align}

\end{proof}

\textbf{Taylor expansion.~} As regards the \emph{Taylor expansion} we consider the space $\Theta$ and the Riemannian metric therein $\b{M}(\theta)$ and an arbitrary smooth function $f:\Theta\to\R$.
If we ignore the Riemannian metric the second-order approximation of the function $f$ around a point $\b{x}\in\Theta$ is known to be
\begin{equation}
    \hat{f}_{\text{Eucl}}(\b{x}+\b{v}) \approx f(\b{x}) + \inner{\nabla_\theta f(\theta)\big|_{\theta = \b{x}}}{\b{v}} + \frac{1}{2}\inner{\b{v}}{\HH_\theta[f](\theta)\big|_{\theta=\b{x}}\b{v}},
\end{equation}
where ${\nabla_\theta f(\theta)\big|_{\theta = \b{x}}}$ is the vector with the partial derivatives evaluated at $\b{x}$ and $\HH_\theta[f](\theta)\big|_{\theta=\b{x}}$ the corresponding  Hessian matrix with the partial derivatives $\frac{\partial^2 f(\theta)}{\partial{\theta_i}\partial{\theta_j}}\Big|_{\theta=\b{x}}$.
When we take into account the Riemannian metric, then the approximation becomes
\begin{equation}
\hat{f}_{\text{Riem}}(\b{x}+\b{v}) \approx f(\b{x}) + \inner{\nabla_\theta f(\theta)\big|_{\theta = \b{x}}}{\b{v}} + \frac{1}{2}\inner{\b{v}}{\left[\HH_\theta[f](\theta) - \Gamma_{ij}^k \nabla_\theta f(\theta)_k\right]\big|_{\theta=\b{x}}\b{v}},
\end{equation}
where $\Gamma_{ij}^k$ are the Christoffel symbols and the Einstein summation is used. 
Note that even if the Hessian is different, the approximation again is a quadratic function.

Now we consider the Taylor expansion on the associated tangent space at the point $\b{x}$ instead of directly on the parameter space $\Theta$.
We define the function $h(\b{v}) = f(\Exp{\b{x}}{\b{v}})$ on the tangent space centered at $\b{x}$ and we get that
\begin{align}
    \hat{h}(\b{u}) \approx h(0) &+ \inner{\partial_{\b{v}}f(\Exp{\b{x}}{\b{v}})\big|_{\b{v}=0}}{\b{u}}\\
    &+ \frac{1}{2}\inner{\b{u}}{\left[\HH_{\b{v}}[f](\Exp{\b{x}}{\b{v}}) - \Gamma_{ij}^k \partial_{\b{v}} f(\Exp{\b{x}}{\b{v}})_k\right]\big|_{\b{v}=0}\b{u}},
\end{align}
where we apply the chain-rule and we use the fact that $\partial_\b{v}\Exp{\b{x}}{\b{v}}\big|_{\b{v}=0}=\Id$ and $\partial^2_\b{v}\Exp{\b{x}}{\b{v}}\big|_{\b{v}=0}=0$.
So, we get that $\partial_{\b{v}}f(\Exp{\b{x}}{\b{v}})\big|_{\b{v}=0} = \nabla_\theta f(\theta)\big|_{\theta=\b{x}}$ and $\HH_{\b{v}}[f](\Exp{\b{x}}{\b{v}})\big|_{\b{v}=0} = \frac{\partial^2 f(\theta)}{\partial{\theta_i}\partial{\theta_j}}\Big|_{\theta=\b{x}}$.
The difference here is that the quadratic function is defined on the tangent space, and the exponential map is a non-linear mapping.
Therefore, the actual approximation on the parameter space $\hat{f}_{\text{Tangent}}(\Exp{\b{x}}{\b{v}}) = \hat{h}(\b{v})$ is not a quadratic function as before, but it adapts to the structure of the Riemannian metric.
Intuitively, the closer a point is to the base point $\b{x}$ with respect to the Riemannian distance, the more similar is the associated value $\hat{f}_{\text{Tangent}}(\Exp{\b{x}}{\b{v}})$ to  $f(\b{x})$. 
In our problem of interest, this behavior is desirable implying that if a parameter $\theta'$ is connected through low-loss regions with a continuous curve to $\thetaMAP$, then we will assign to $\theta$ high approximate posterior density.

We can easily consider the approximation on the normal coordinates, where we know that the Christoffel symbols vanish, by using the relationship $\b{u} = \b{A}\bar{\b{u}}$ with $\b{A}=\b{M}(\b{x})^{-\nicefrac{1}{2}}$, and thus the Taylor approximation of the function $\bar{h}(\bar{\b{u}}) = f(\Exp{\b{x}}{\b{A}\bar{\b{u}}})$ becomes 
\begin{equation}
    \hat{\bar{h}}(\bar{\b{u}})
    \approx \hat{h}(0) + \inner{\b{A}\T\nabla_\theta f(\theta)\big|_{\theta=\b{x}}}{\b{A}\bar{\b{u}}}+ \frac{1}{2}\inner{\bar{\b{u}}}{\b{A}\T\HH_\theta[f](\theta)\big|_{\theta=\b{x}}\b{A}\bar{\b{u}}}.
\end{equation}
Further details can be found in related textbooks \citep{AbsilMahonySepulchre2008} and articles \citep{pennec:JMIV:2006}.

\textbf{Linearized manifold.~} Let us consider a regression problem with likelihood $p(\b{y}|\b{x},\theta) = \N(\b{y}|f_\theta(\b{x}),\sigma^2)$, where $f_\theta$ is a deep neural network, and prior $p(\theta)=\N(\theta|0,\lambda \Id_K)$.
The loss of the $\flin(\b{x})$ is then defined as
\begin{equation}
    \Losslin{\theta} = \frac{1}{2\sigma^2}\sum_{n=1}^N(\b{y} - f_{\thetaMAP}(\b{x}_n) - \inner{\nabla_\theta f(\b{x}_n)\big|_{\theta=\thetaMAP}}{\theta - \thetaMAP})^2 + \lambda ||\theta||^2.
\end{equation}
The gradient and the Hessian of this loss function can be easily computed as
\begin{align}
    \nabla_\theta \Losslin{\theta} &= \frac{1}{\sigma^2}\sum_{n=1}^N -(\b{y} - f_{\thetaMAP}(\b{x}_n) - \inner{\nabla_\theta f(\b{x}_n)\big|_{\theta=\thetaMAP}}{\theta - \thetaMAP})\nabla_\theta f(\b{x}_n)\big|_{\theta=\thetaMAP} + 2\lambda\theta\\
    \HH_\theta[\mathcal{L}^{\text{lin}}](\theta)&=\frac{1}{\sigma^2}\sum_{n=1}^N \nabla_\theta f(\b{x}_n)\big|_{\theta=\thetaMAP} \nabla_\theta f(\b{x}_n)\big|_{\theta=\thetaMAP}\T + 2\lambda\Id_K,
\end{align}
which can be used to evaluate the \textsc{ode}s system in \eqref{eq:ode}.
A similar result can be derived for the binary cross entropy loss and the Bernoulli likelihood.

\section{Implementation details}
In this section we present the implementation details of our work. The code will be released upon acceptance.

\textbf{Gradient, Hessian, and Jacobian computations.~~} The initial velocities used for our methods are samples from the Laplace approximation. We rely in the \texttt{Laplace} library \citep{daxberger:neurips:2021} for fitting the Laplace approximation and for optimizing the hyperparameters by using the marginal log-likelihood. We also used the same library to implement all the baselines consider in this work. As we have seen from Sec.~\ref{sec:efficient-implementation}, to integrate the \textsc{ode} we have to compute \eqref{eq:ode}, which we report also here for clarity:
\begin{align}
    \ddot{c}(t) = -\grad{\theta}{\Loss}(c(t))\left(1 + \grad{\theta}{\Loss}(c(t))\T \grad{\theta}{\Loss}(c(t))\right)^{-1}\inner{\dot{c}(t)}{\text{H}_\theta[\mathcal{L}](c(t)) \dot{c}(t)}.
\end{align}

We use \texttt{functorch} \citep{functorch2021} to compute both the gradient and the Hessian-vector-product. We then rely on \texttt{scipy} \citep{2020SciPy-NMeth} implementation of the explicit Runge-Kutta method of order $5(4)$ \citep{dormand1980family} to solve the initial-value problem. We use default tolerances in all our experiments.

\textbf{Linearized manifold.~~} We define our linearized manifold by considering the ``linearized'' function $\flin(\b{x}) = f_{\thetaMAP}(\b{x}) + \inner{\nabla_\theta f_\theta(\b{x})\big|_{\theta=\thetaMAP}}{\theta - \thetaMAP}$ to compute the loss, where $\nabla_\theta f_\theta(\b{x})\big|_{\theta=\thetaMAP} \in \R^{C\times K}$ is the Jacobian. To compute $\inner{\nabla_\theta f_\theta(\b{x})\big|_{\theta=\thetaMAP}}{\theta - \thetaMAP}$ we use \texttt{functorch} to compute a jacobian-vector product. This way, we avoid having to compute and store the Jacobian, which for big networks and large dataset is infeasible to store.

\section{Experiments details and additional results}
\label{app:experiments}

\subsection{Regression example} 
For the regression example we consider two fully connected networks, one with one hidden layer with 15 units and one with two layers with $10$ units each, both with \texttt{tanh} activations. We train both model using full-dataset GD, using a weight decay of $1e-2$ for the larger model and $1e-3$ for the smaller model for $35000$ and $700000$ epochs respectively. In both cases, we use a learning rate of $1e-3$.
In Sec.~\ref{sec:experiments}, we show some samples from the posterior distribution obtained by using vanilla LA and our \textsc{riem-la} approach. In Fig.~\ref{fig:regression_vanilla_ex} we report the predictive distribution for our classic approach while in Fig.~\ref{fig:regression_lin_ex} we show both the posterior and the predictive for our linearized manifold and linearized LA. We can see that our linearized approach perform similarly to linearized LA in terms of posterior samples and predictive distribution.

A more interesting experiment is to consider a gap in our dataset to measure the in-between uncertainty. A good behaviour would be to give calibrated uncertainty estimates in between separated regions of observations \citep{foong2019between}. In our regression example, we consider the points between $1.5$ and $3$ as test set, and report posterior samples and predictive distribution for both our methods, vanilla and linearized, and LA. From Fig.~\ref{fig:regression_in_between_classic_ex}, we can notice that our \textsc{riem-la} is able to generate uncertainty estimates that are reliable both in the small and the overparametrized model. It gets more interesting when we consider our linearized manifold approach as it can be seen in Fig.~\ref{fig:regression_in_between_lin_ex}. While for a small model the predictive distribution we get is comparable to linearized LA, when we consider the overparametrized model, it tends to overfit to a solution that it is better from the perspective of the linearized loss but very different from the MAP. Therefore, this results in uncertainty estimates that are not able to capture the true data set in this case. This behaviour is related to our discussion in Sec.~\ref{sec:efficient-implementation}. By using a subset of the training set to solve the \textsc{ode}s system we alleviate this behavior by giving more reliable uncertainty estimates.

\captionsetup[subfigure]{labelformat=empty}
\begin{figure}
        \centering
        \begin{subfigure}[b]{0.430\textwidth}
            \centering
            \includegraphics[width=\textwidth]{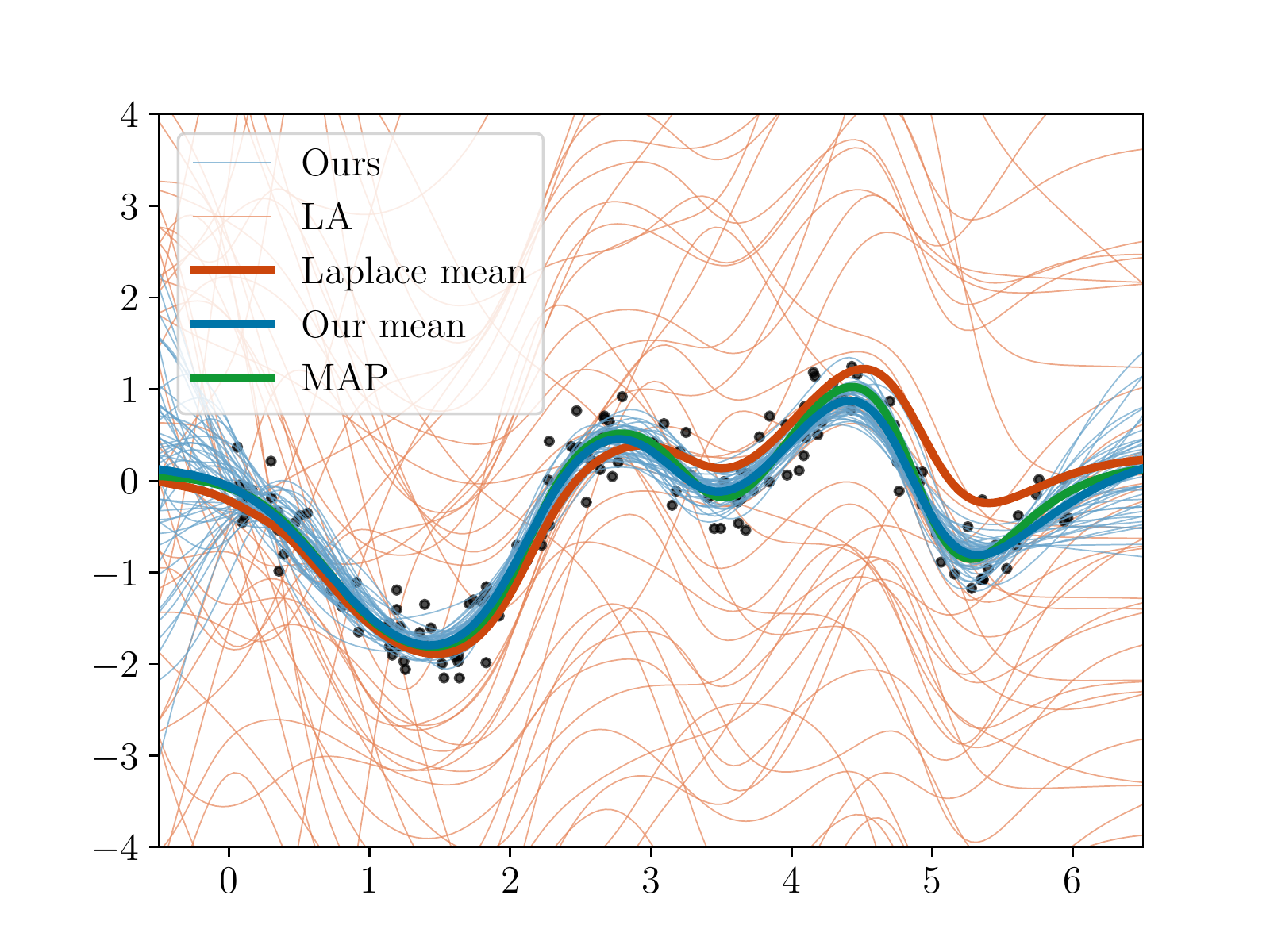}
            \caption[]%
            {{\textsc{posterior - single layer model}}}    
        \end{subfigure}
        \hfill
        \begin{subfigure}[b]{0.430\textwidth}  
            \centering 
            \includegraphics[width=\textwidth]{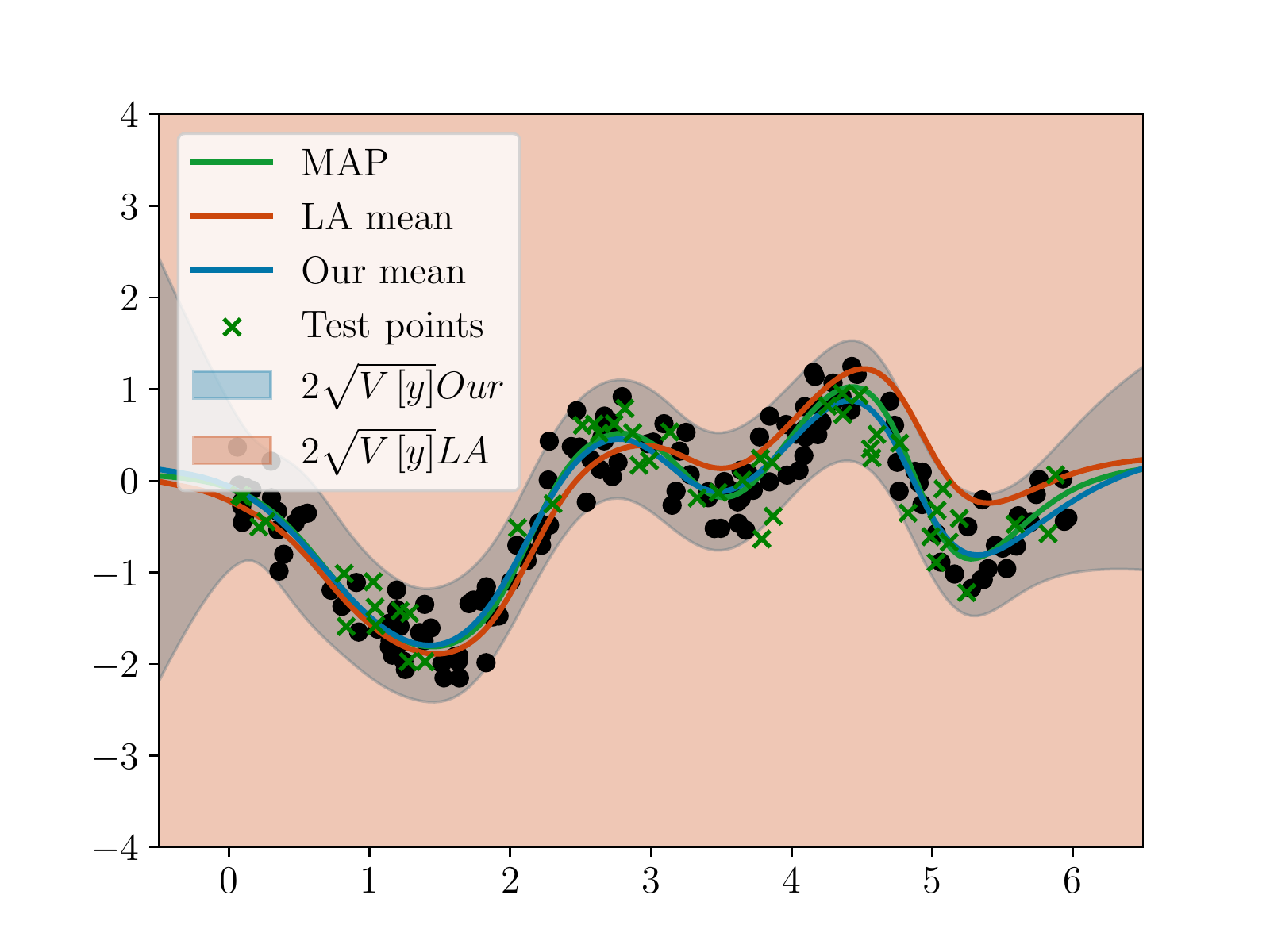}
            \caption[]%
            {{\textsc{predictive - single layer model}}}       
        \end{subfigure}
        \vskip\baselineskip
        \begin{subfigure}[b]{0.430\textwidth}   
            \centering 
            \includegraphics[width=\textwidth]{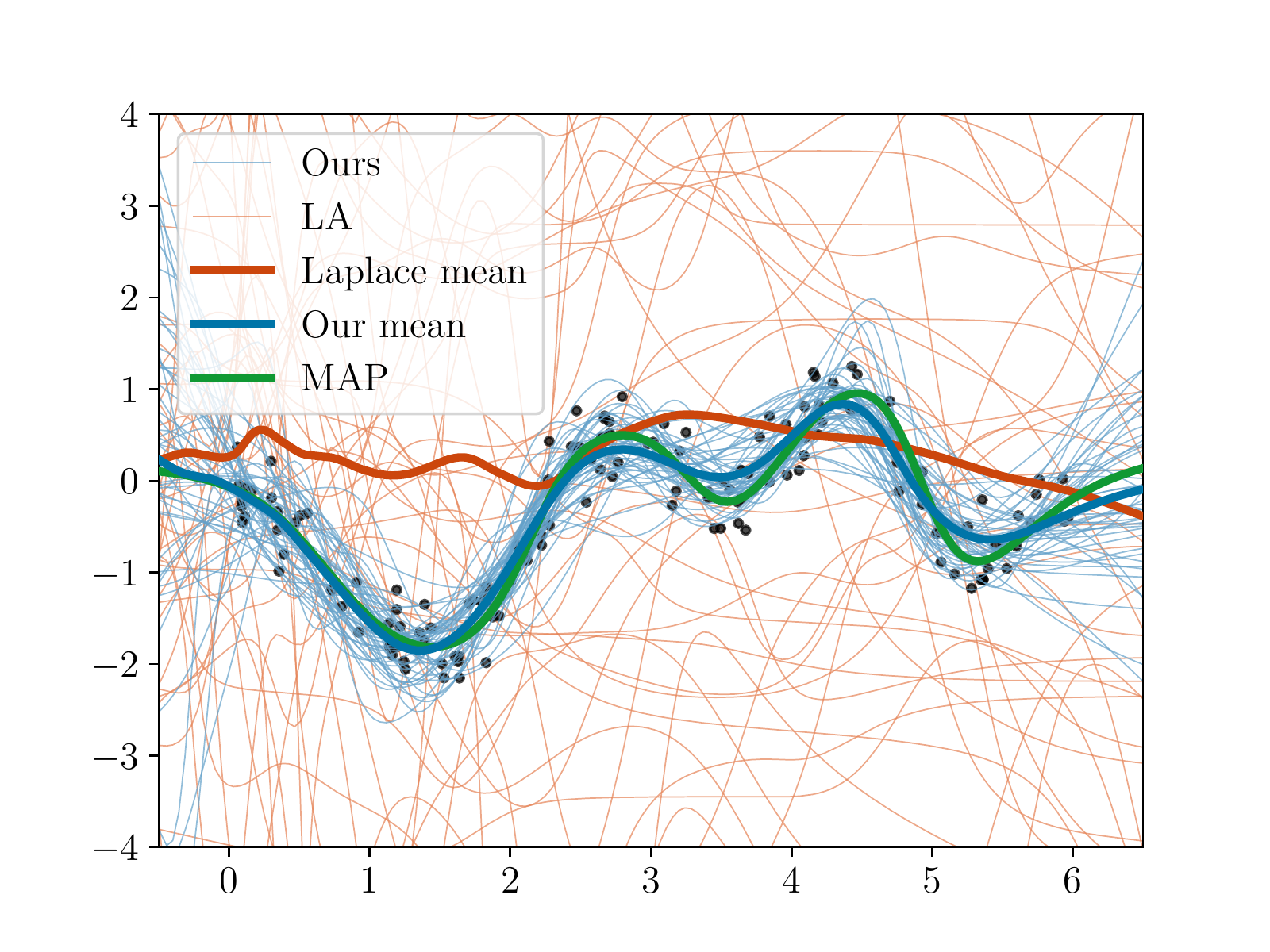}
            \caption[]%
            {{\textsc{posterior - two layers model}}}        
        \end{subfigure}
        \hfill
        \begin{subfigure}[b]{0.430\textwidth}   
            \centering 
            \includegraphics[width=\textwidth]{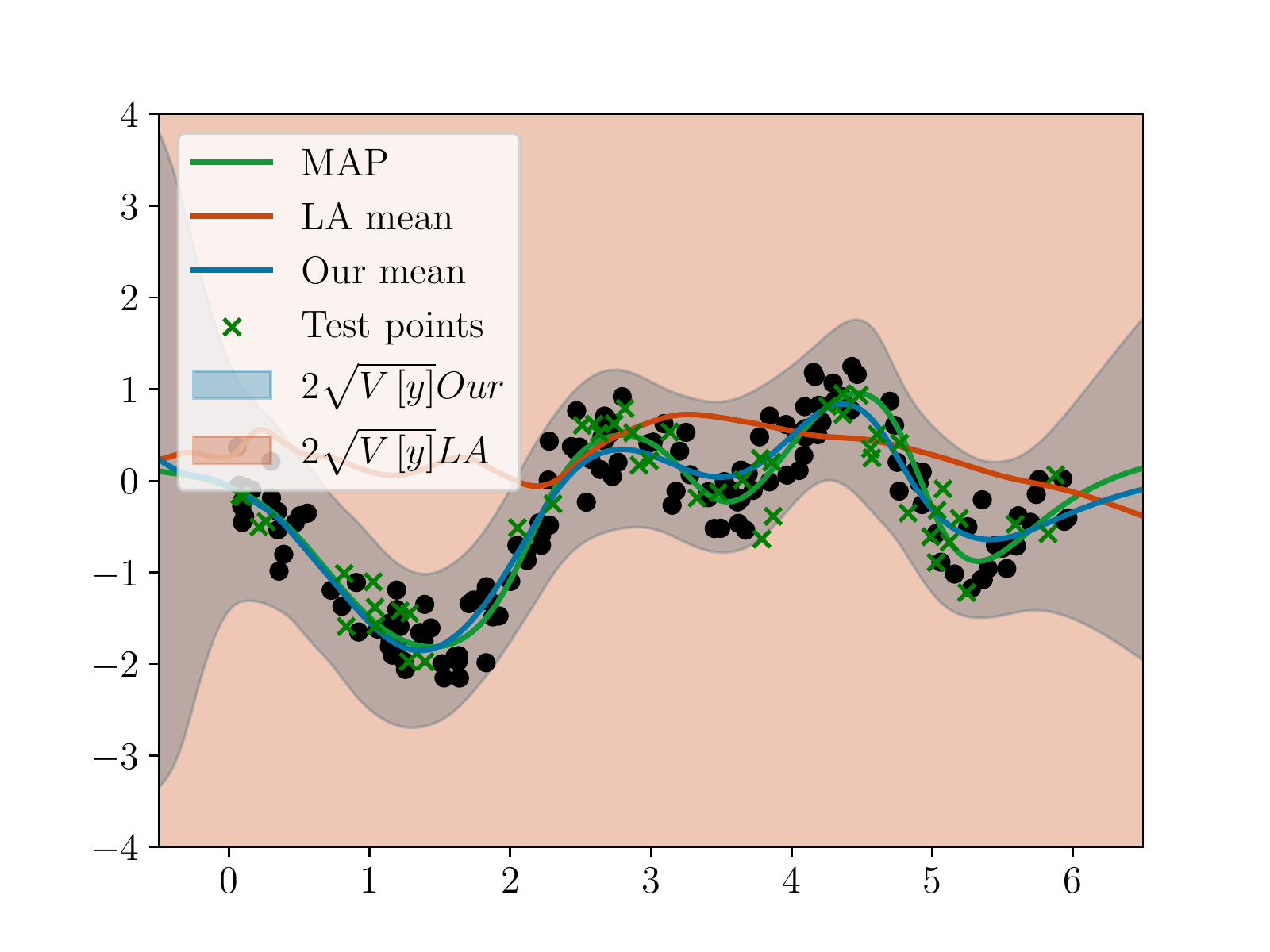}
            \caption[]%
            {{\textsc{predictive - two layers model}}}    
        \end{subfigure}
        \caption[]
        {Regression results in terms of posterior and predictive distribution using vanilla LA and our \textsc{riem-la} approach. Our proposed approach is able to get better samples and to give a reliable uncertainty estimate compared to vanilla LA. If we consider an overparametrized model.} 
        \label{fig:regression_vanilla_ex}
\end{figure}

\begin{figure}
        \centering
        \begin{subfigure}[b]{0.430\textwidth}
            \centering
            \includegraphics[width=\textwidth]{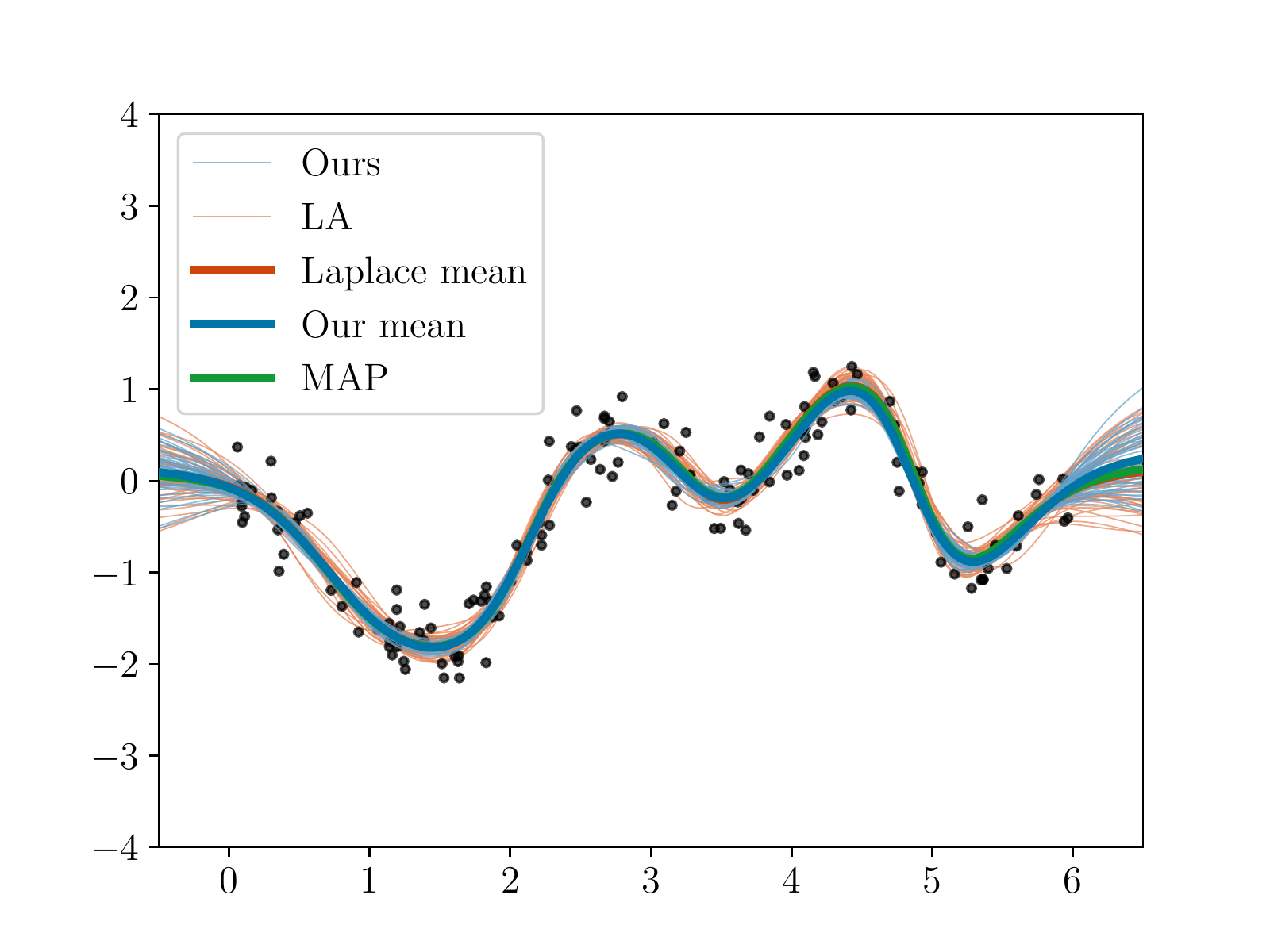}
            \caption[]%
            {{\textsc{posterior - single layer model}}}   
        \end{subfigure}
        \hfill
        \begin{subfigure}[b]{0.430\textwidth}  
            \centering 
            \includegraphics[width=\textwidth]{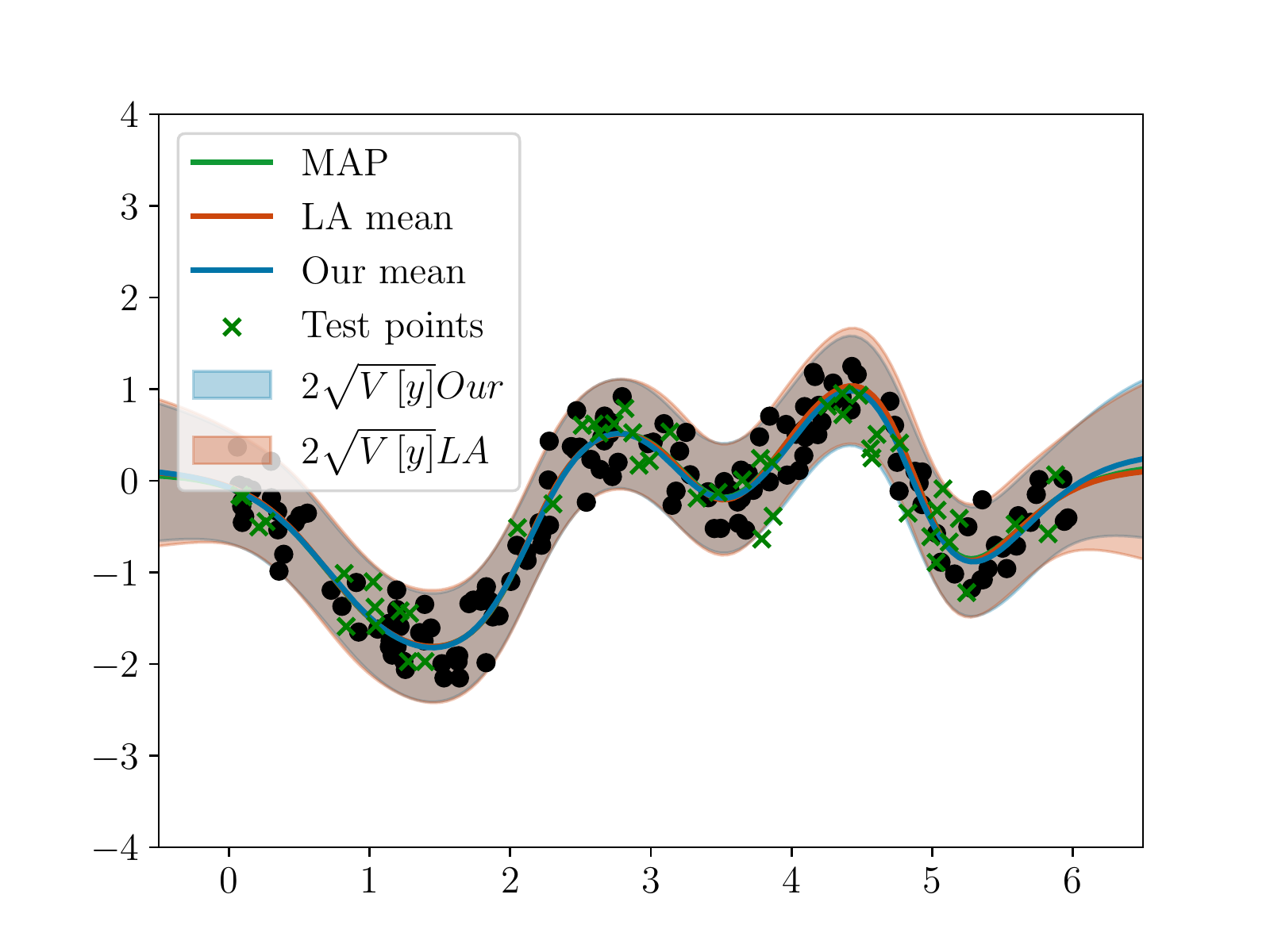}
            \caption[]%
            {{\textsc{predictive - single layer model}}}    
        \end{subfigure}
        \vskip\baselineskip
        \begin{subfigure}[b]{0.430\textwidth}   
            \centering 
            \includegraphics[width=\textwidth]{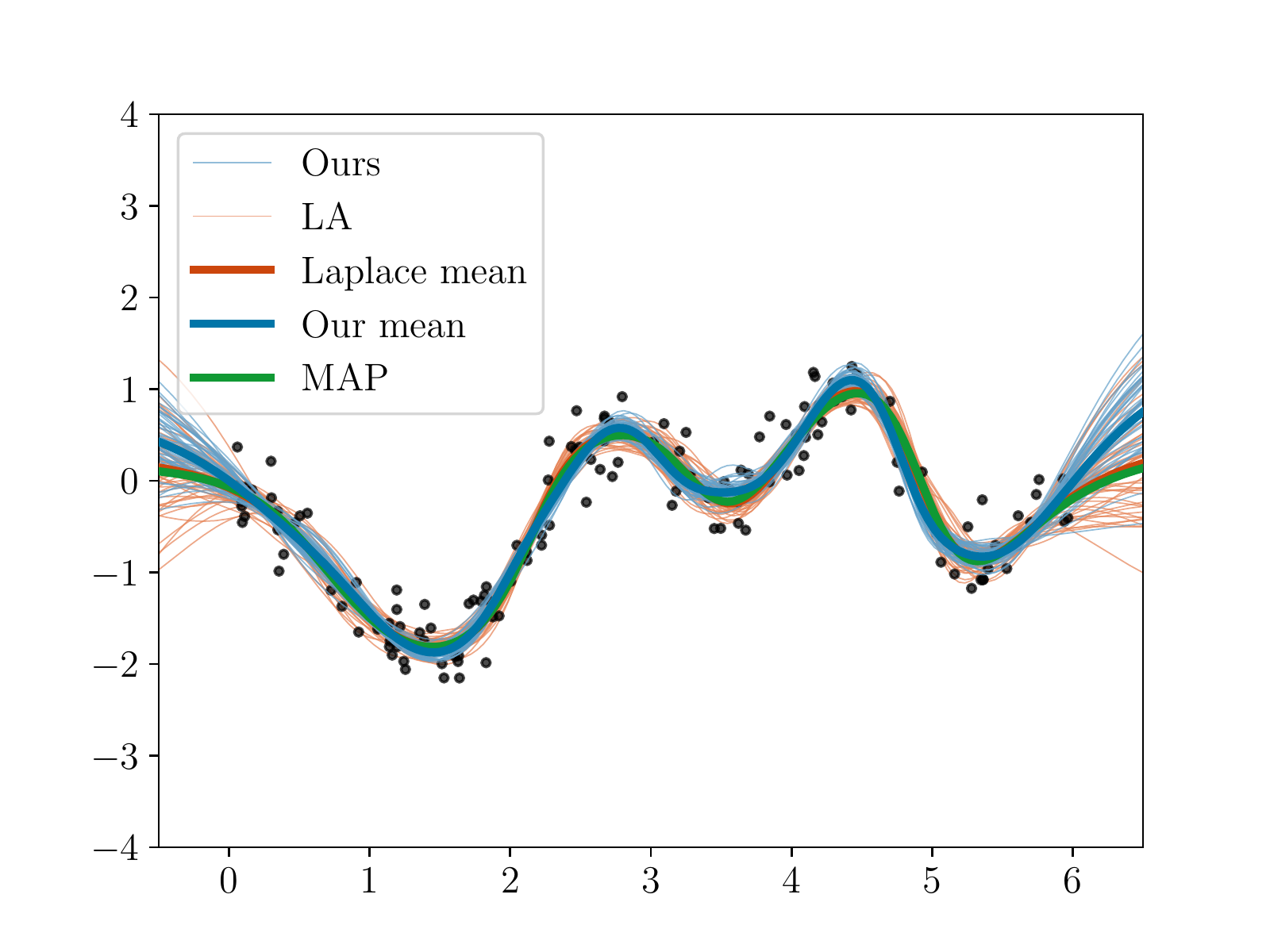}
            \caption[]%
            {{\textsc{posterior - two layers model}}}    
        \end{subfigure}
        \hfill
        \begin{subfigure}[b]{0.430\textwidth}   
            \centering 
            \includegraphics[width=\textwidth]{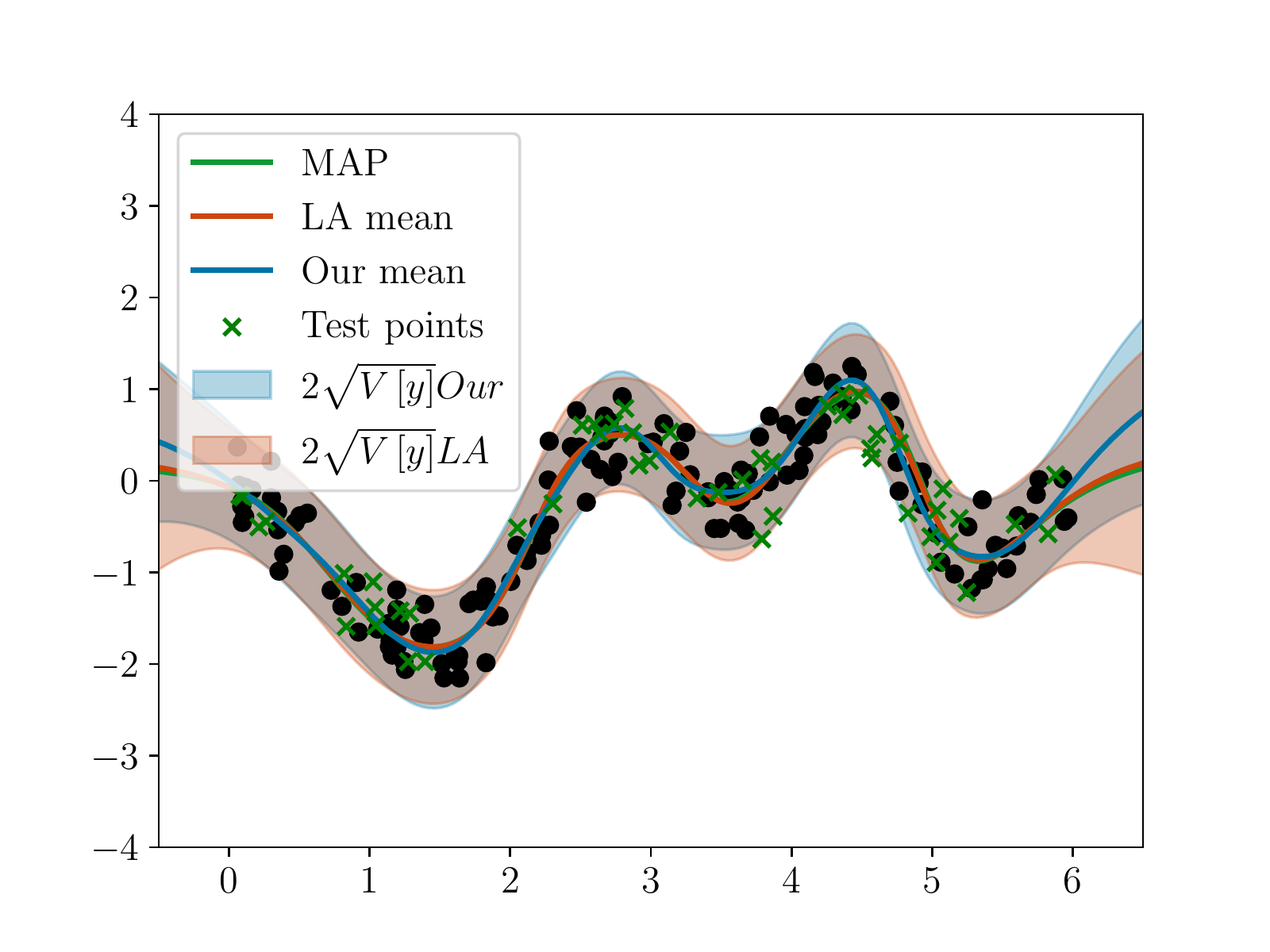}
            \caption[]%
            {{\textsc{predictive - two layers model}}}    
        \end{subfigure}
        \caption[]
        {Regression results in terms of posterior and predictive distribution using linearized LA and our linearized manifold approach. Our linearized manifold perform similarly to linearized LA no matter what model architecture we are considering.} 
        \label{fig:regression_lin_ex}
\end{figure}

\begin{figure}
        \centering
        \begin{subfigure}[b]{0.430\textwidth}
            \centering
            \includegraphics[width=\textwidth]{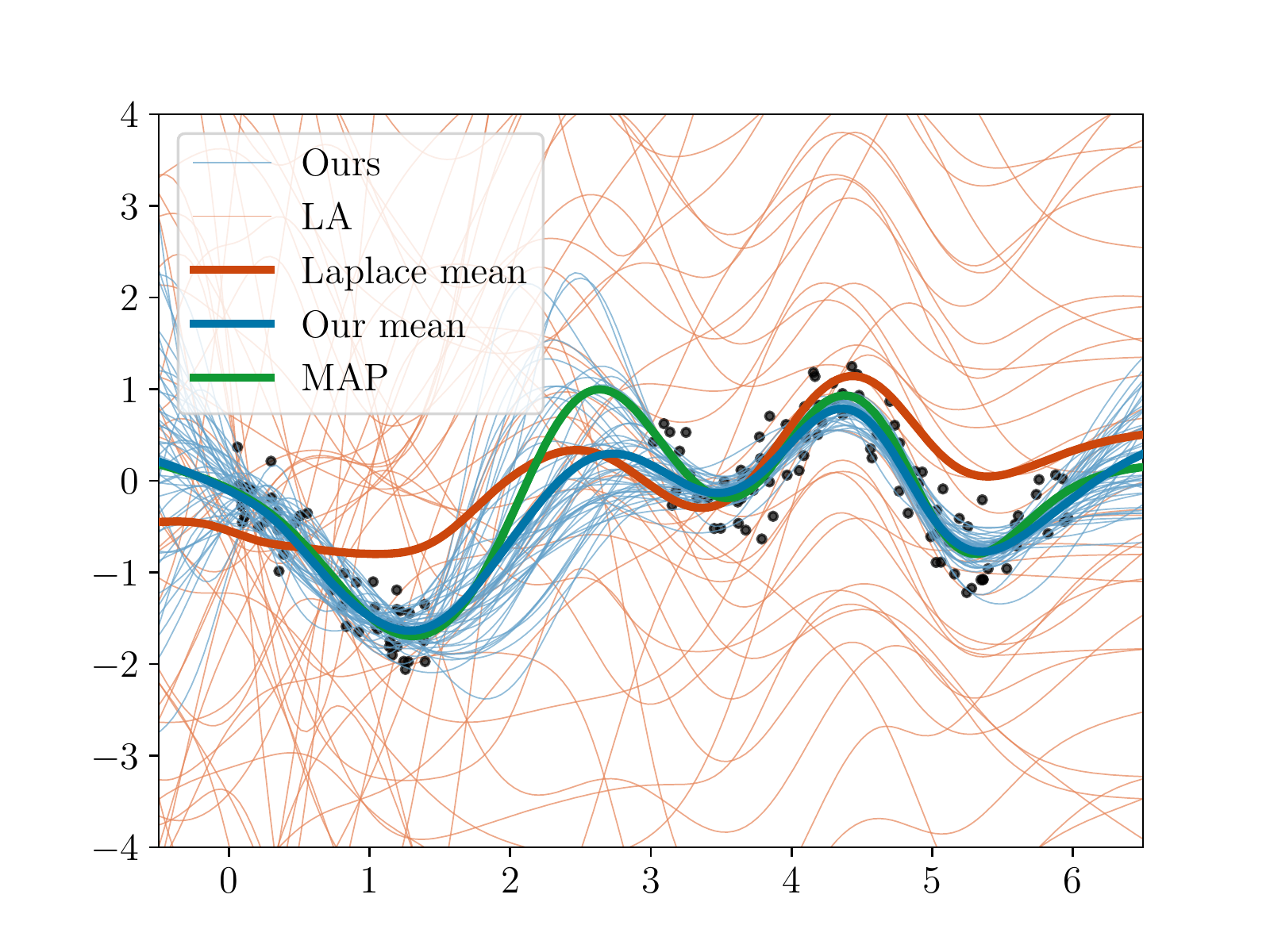}
            \caption[]%
            {{\textsc{posterior - single layer model}}}   
        \end{subfigure}
        \hfill
        \begin{subfigure}[b]{0.430\textwidth}  
            \centering 
            \includegraphics[width=\textwidth]{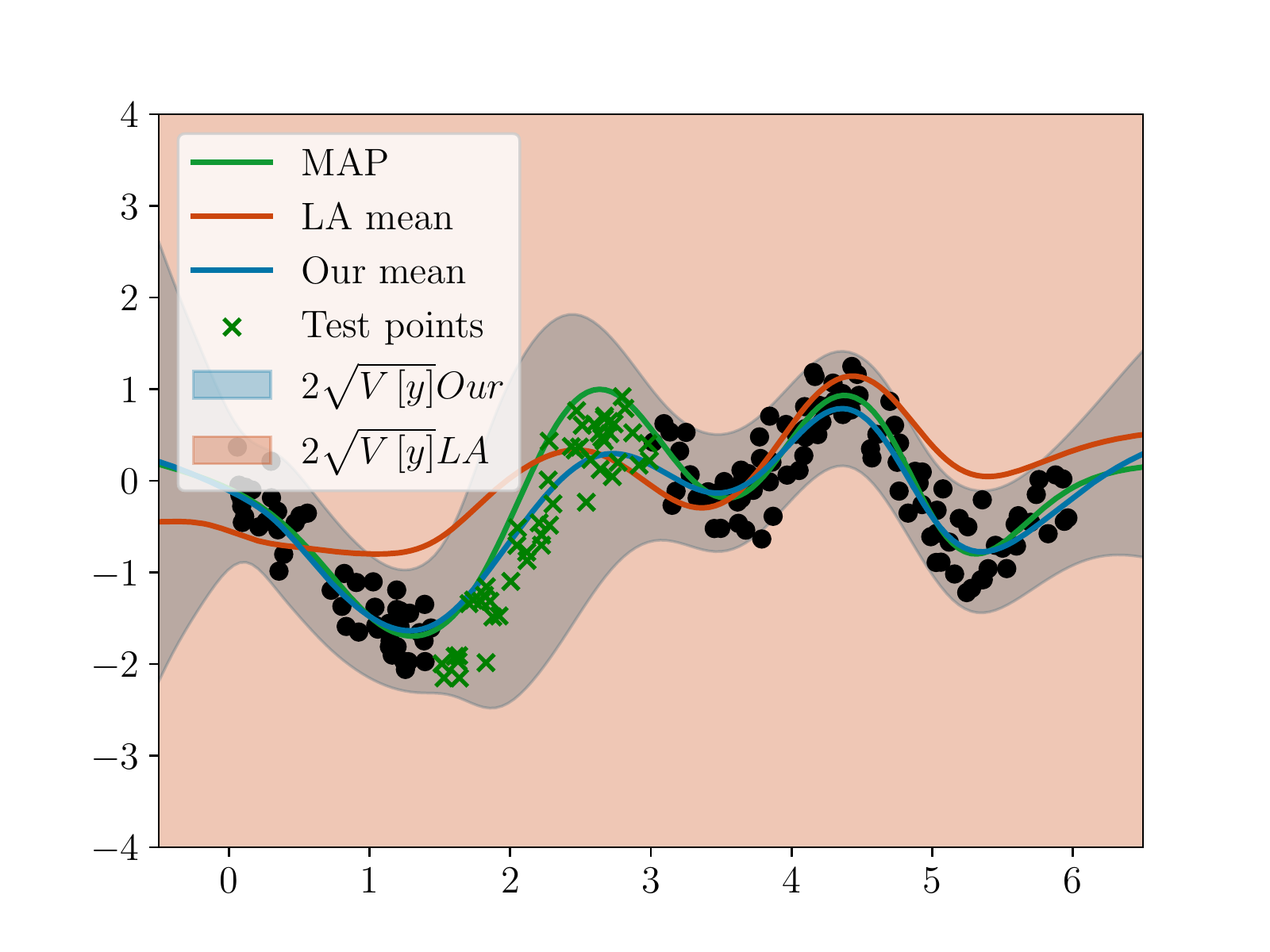}
            \caption[]%
            {{\textsc{predictive - single layer model}}}    
        \end{subfigure}
        \vskip\baselineskip
        \begin{subfigure}[b]{0.430\textwidth}   
            \centering 
            \includegraphics[width=\textwidth]{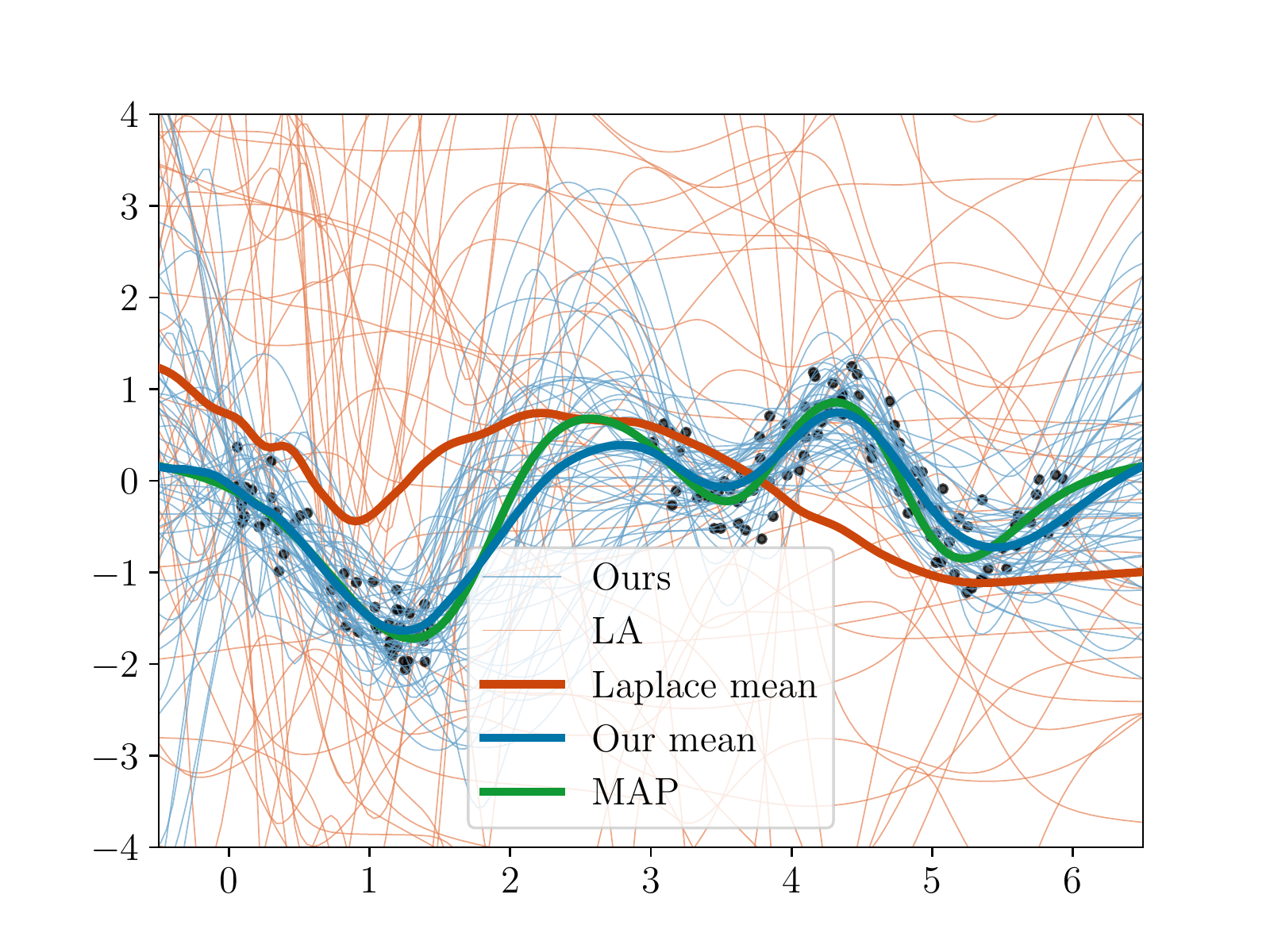}
            \caption[]%
            {{\textsc{posterior - two layers model}}}  
        \end{subfigure}
        \hfill
        \begin{subfigure}[b]{0.430\textwidth}   
            \centering 
            \includegraphics[width=\textwidth]{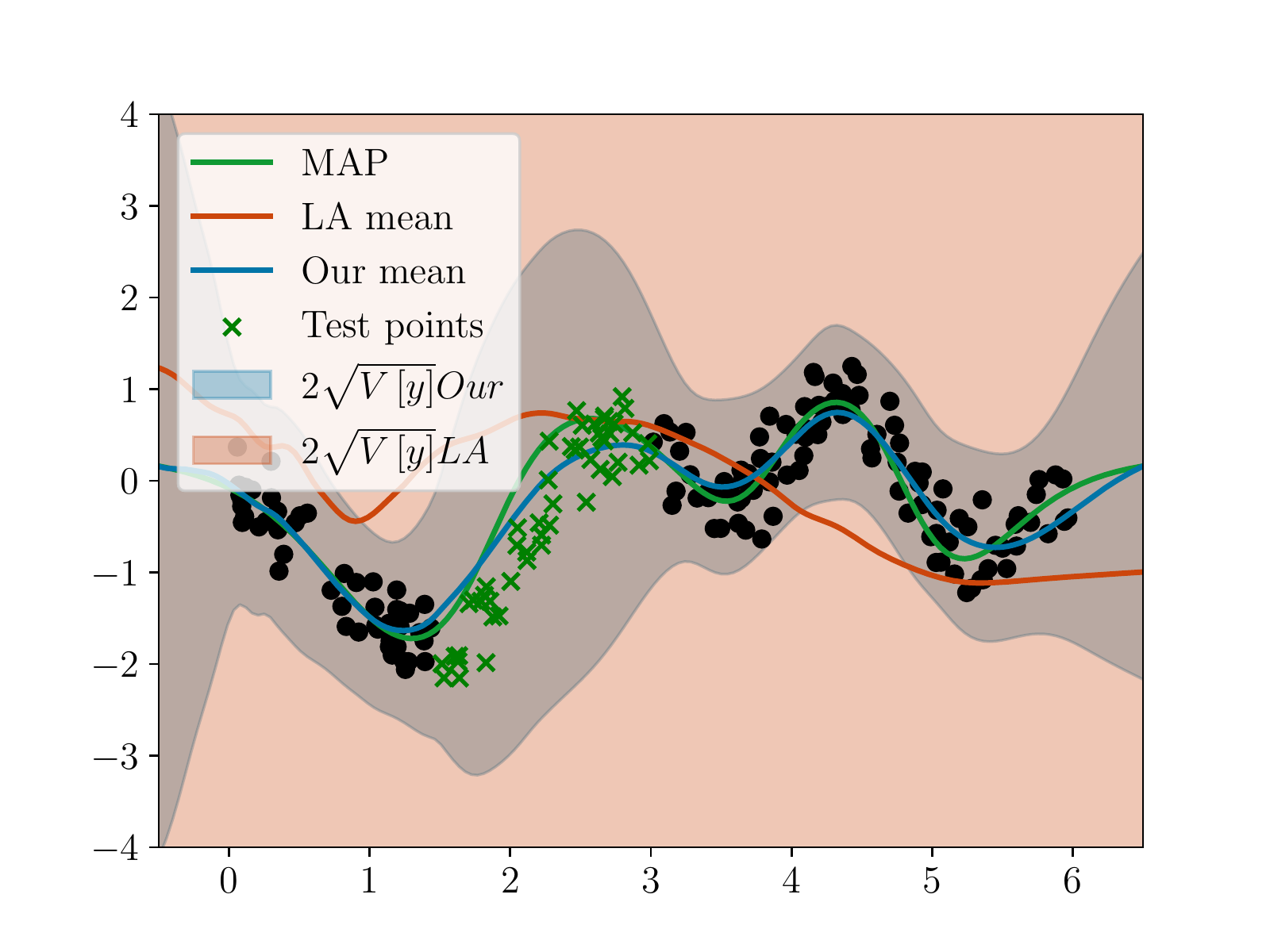}
            \caption[]%
            {{\textsc{predictive - two layers model}}}    
        \end{subfigure}
        \caption[]
        {In-between uncertainty example using vanilla LA and our \texttt{riem-la} approach. Also in this setting, our vanilla approach is able to overcome the difficulties of vanilla LA in this problem setting and giving both meaningful samples from the posterior and a reliable predictive distribution.} 
        \label{fig:regression_in_between_classic_ex}
\end{figure}

\begin{figure}
        \centering
        \begin{subfigure}[b]{0.430\textwidth}
            \centering
            \includegraphics[width=\textwidth]{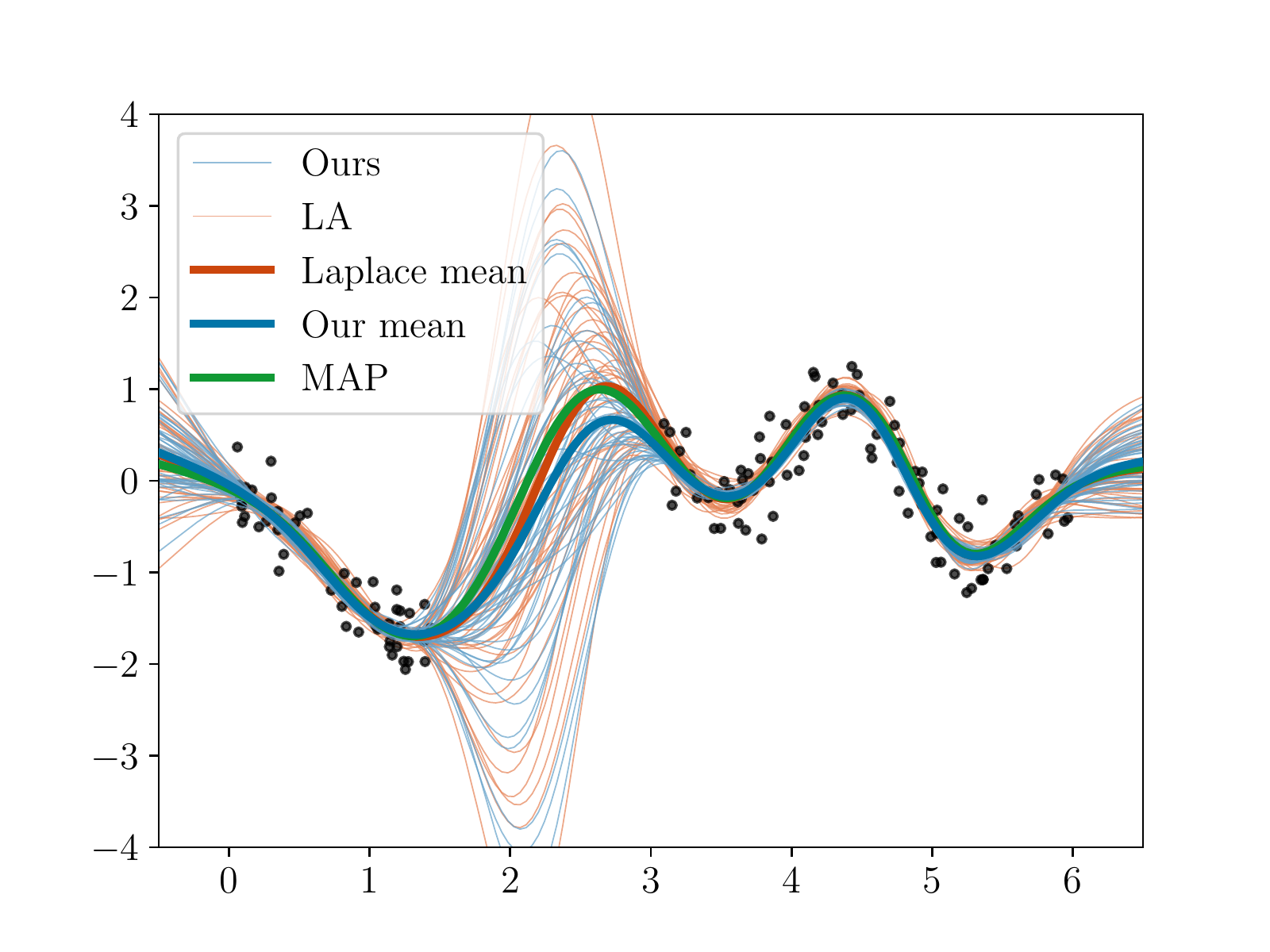}
            \caption[]%
            {{\textsc{posterior - single layer model}}}    
        \end{subfigure}
        \hfill
        \begin{subfigure}[b]{0.430\textwidth}  
            \centering 
            \includegraphics[width=\textwidth]{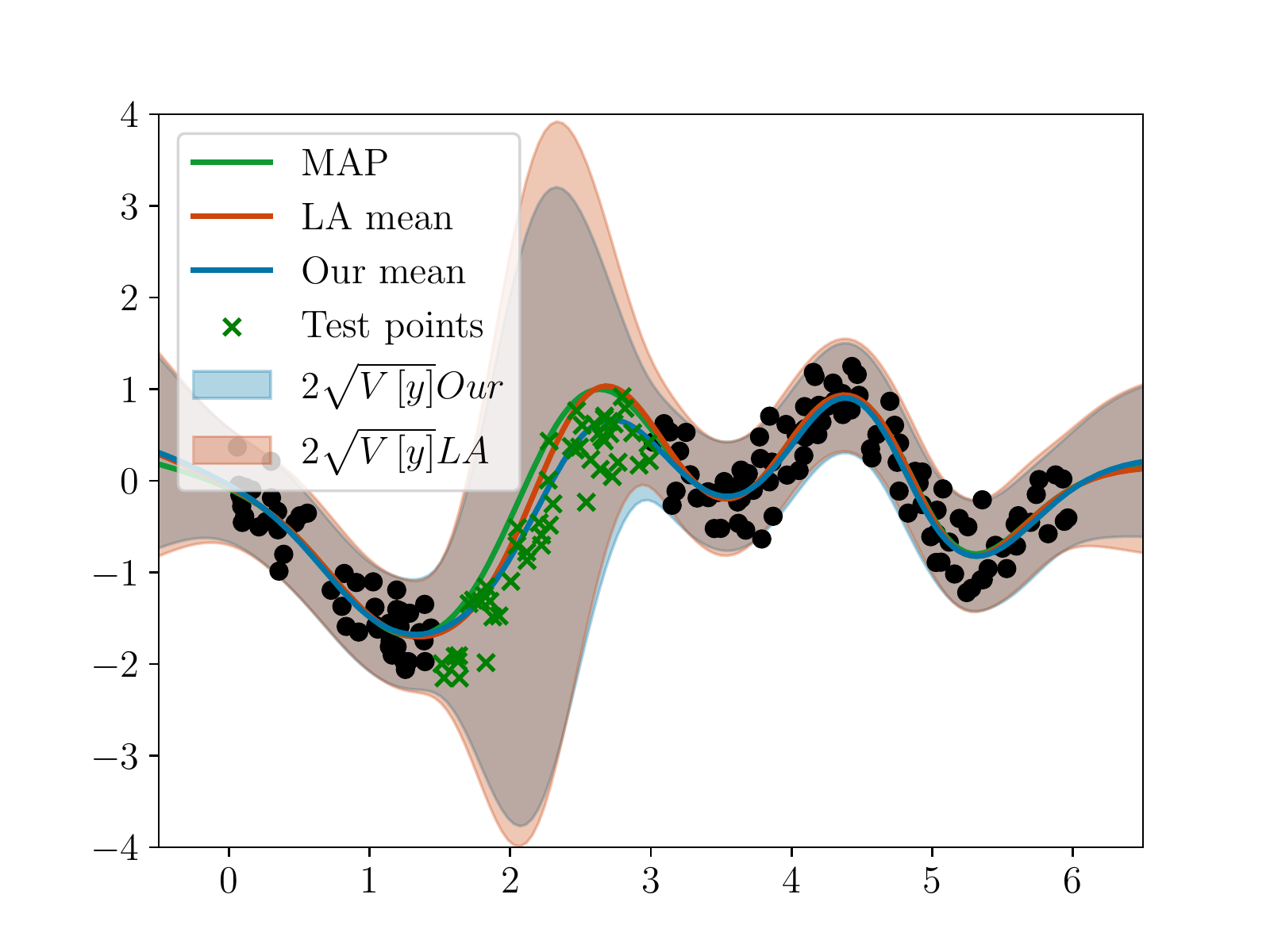}
            \caption[]%
            {{\textsc{predictive - single layer model}}}    
        \end{subfigure}
        \vskip\baselineskip
        \begin{subfigure}[b]{0.430\textwidth}   
            \centering 
            \includegraphics[width=\textwidth]{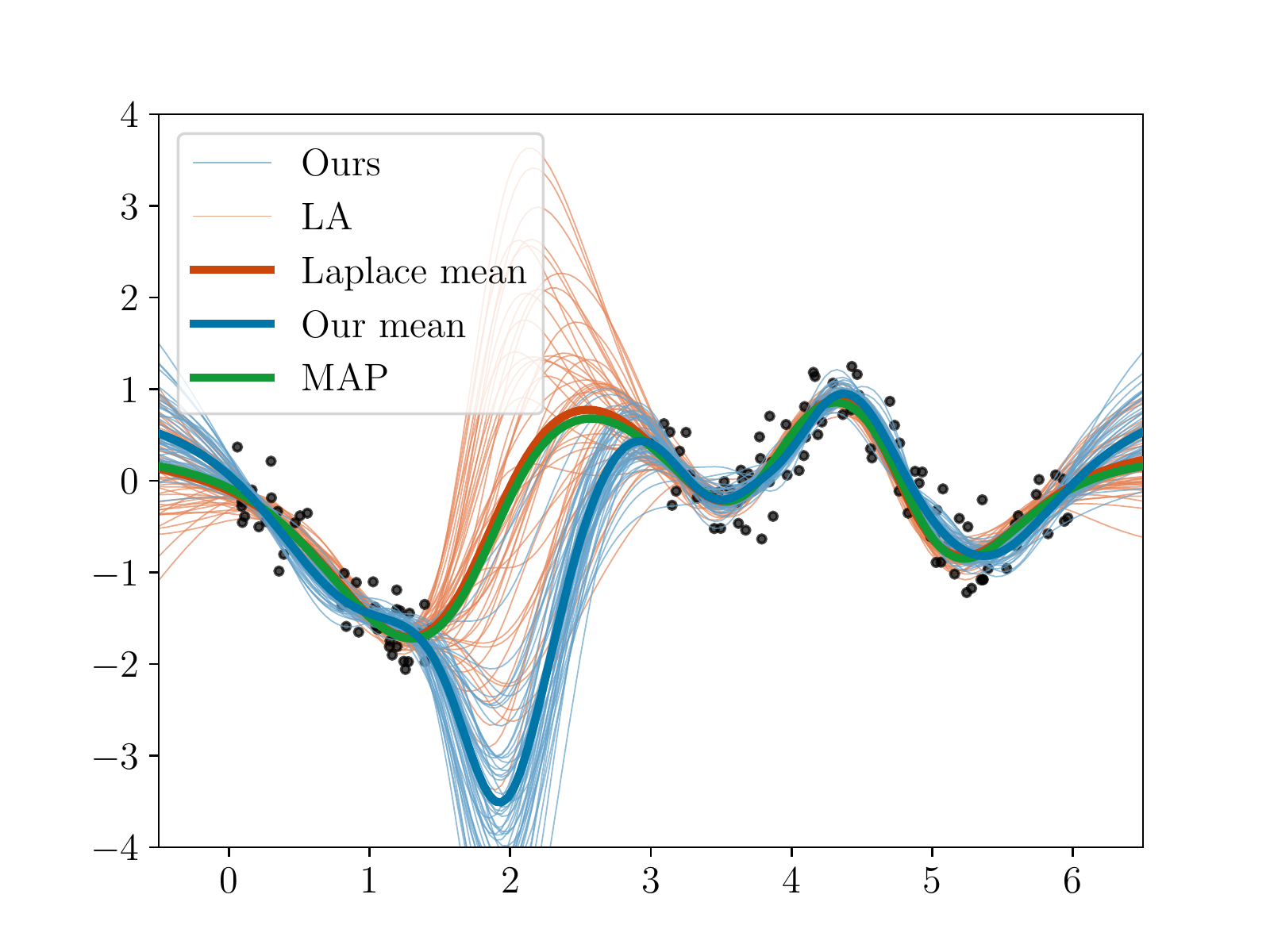}
            \caption[]%
            {{\textsc{posterior - two layers model}}} 
        \end{subfigure}
        \hfill
        \begin{subfigure}[b]{0.430\textwidth}   
            \centering 
            \includegraphics[width=\textwidth]{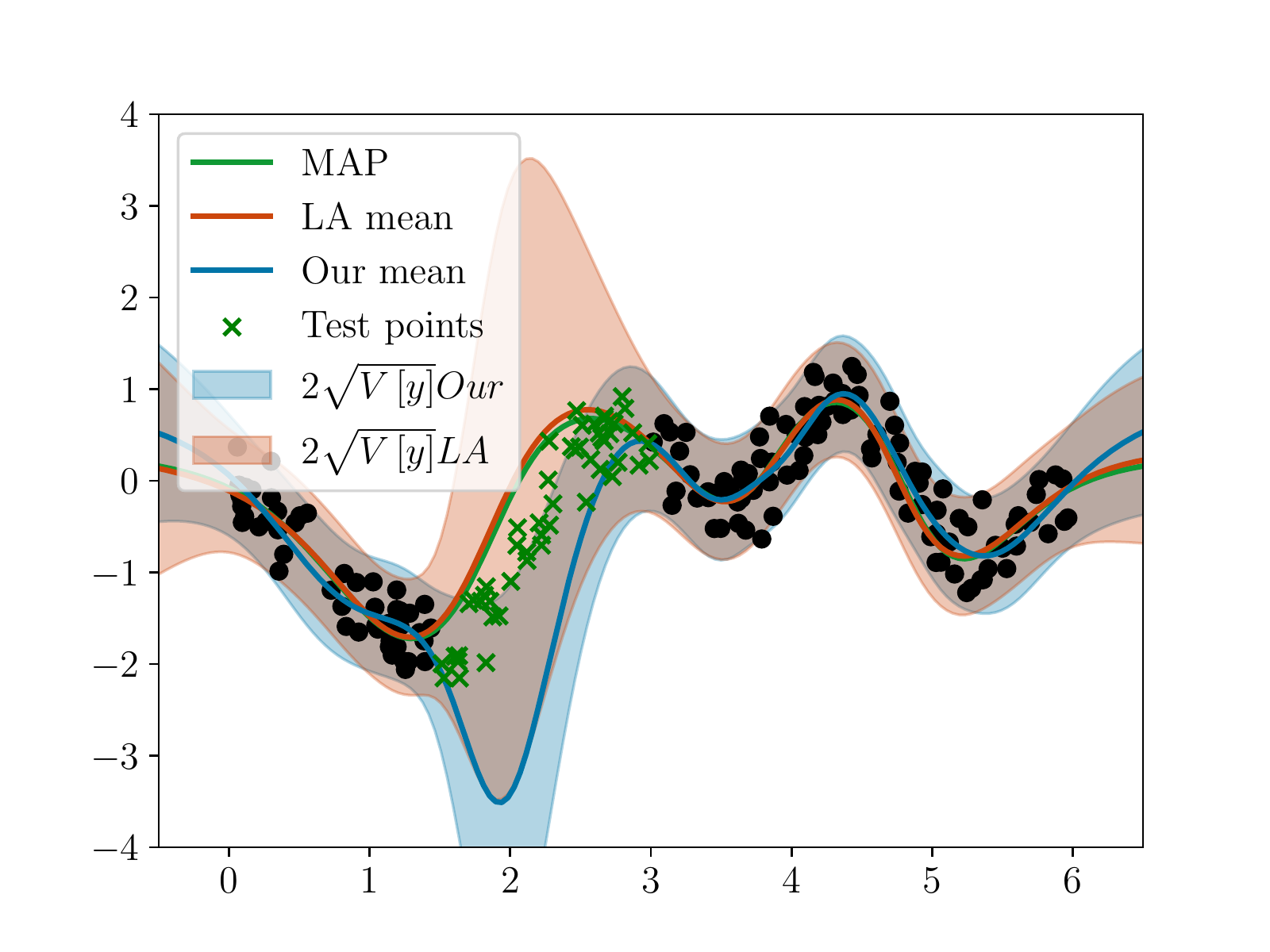}
            \caption[]%
            {{\textsc{predictive - two layers model}}}     
            \label{fig:mean and std of net44}
        \end{subfigure}
        \vskip\baselineskip
        \begin{subfigure}[b]{0.430\textwidth}   
            \centering 
            \includegraphics[width=\textwidth]{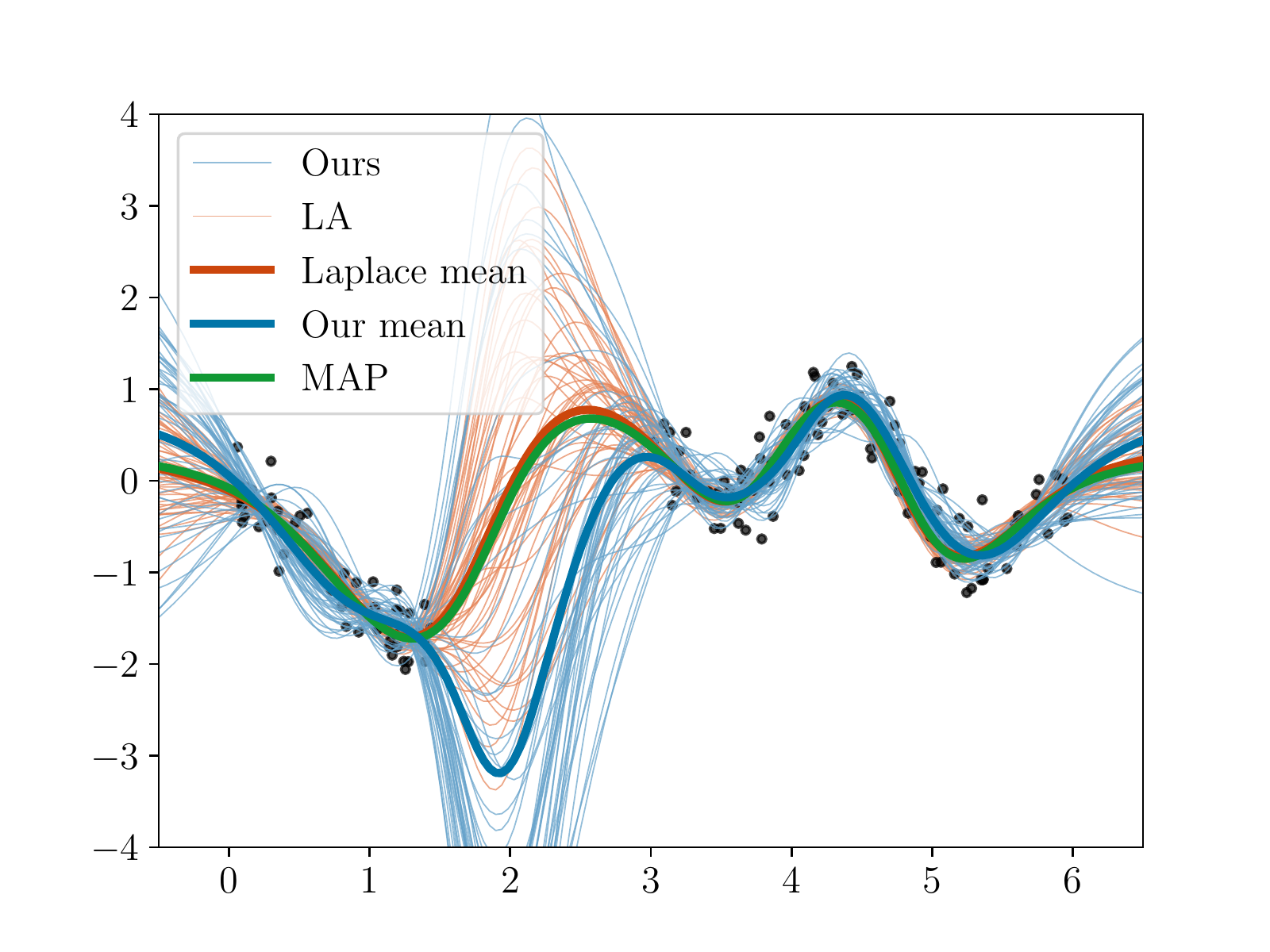}
            \caption[]%
            {{\textsc{posterior - two layers model (batch)}}}    
        \end{subfigure}
        \hfill
        \begin{subfigure}[b]{0.430\textwidth}   
            \centering 
            \includegraphics[width=\textwidth]{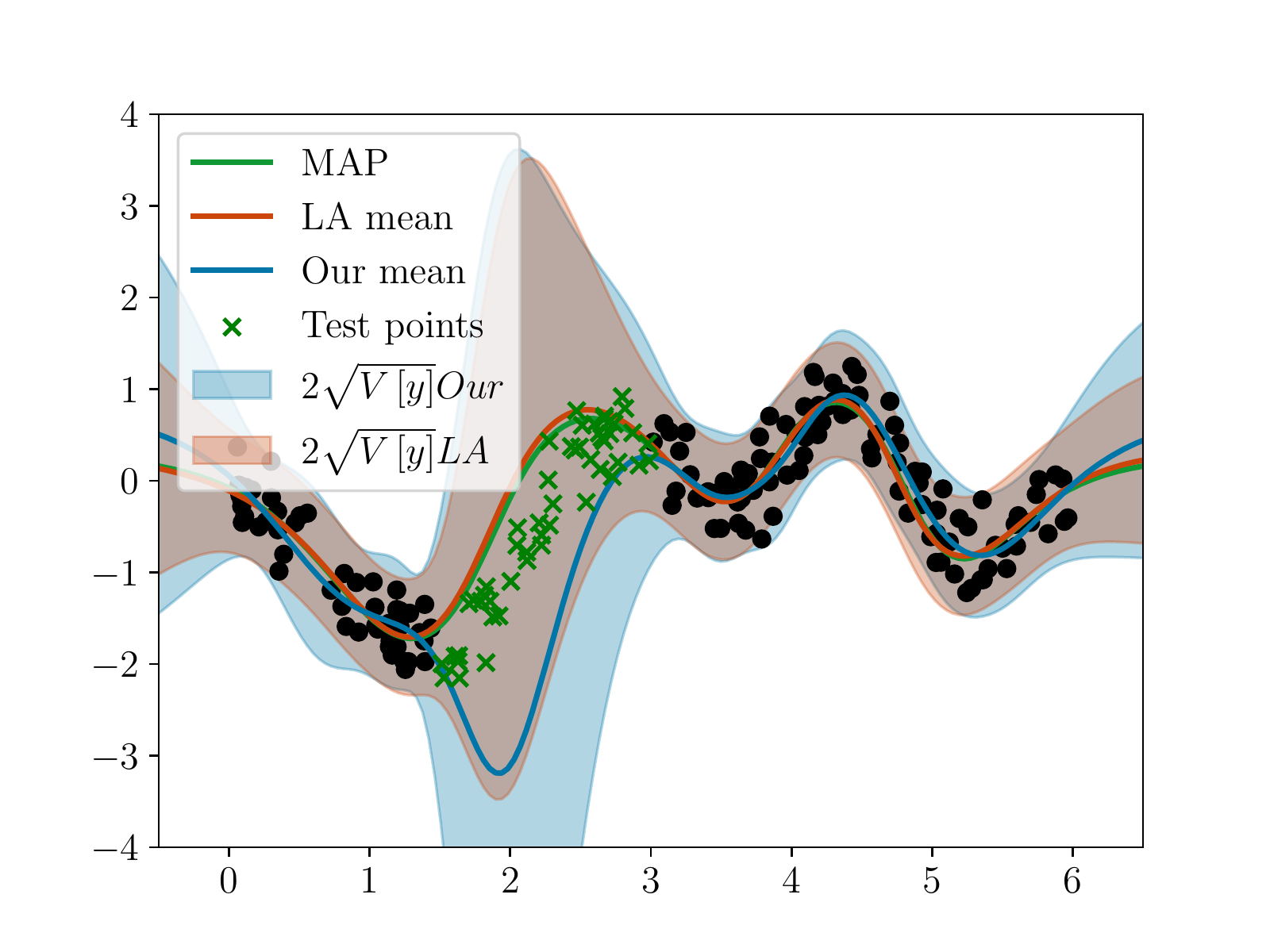}
            \caption[]%
            {{\textsc{predictive - two layers model (batch) }}}    
        \end{subfigure}
        \caption[]
        {In-between uncertainty example using linearized LA and our linearized manifold. For our linearized approach we both consider solving the \textsc{ode}s system using the entire training set and using subsets of it. We already highlight how our linearized approach tend to overfit on the linearized loss, and we can clearly see it here from the plots in the center. Using batches to solve the \textsc{ode}s system gives more reliable uncertainty estimates.} 
        \label{fig:regression_in_between_lin_ex}
\end{figure}
\subsection{Illustrative 2D classification example}
We consider the \texttt{banana} dataset as an illustrative example to study the confidence of our proposed method against Laplace and linearized Laplace. We train a 2-layer fully connected neural net with 16 hidden units per layer and \texttt{tanh} activation using SGD for 2500 epochs. We use a learning rate of $1e-3$ and weight-decaay of $1e-2$. Although it is a binary classification problem, we use a network with two outputs and use the cross-entropy loss to train the model because the \texttt{Laplace} library does not support binary cross-entropy at the moment. For all methods, we use 100 MC samples for the predictive distribution. 

As we have mentioned in Sec.~\ref{sec:experiments}, our linearized manifold when we solve the \textsc{ode}s system by using the entire training set tends to be overconfident compared to the our classic non-linearized approach. This can be easily seen form Fig.~\ref{fig:confidence_lin_no_batch} where we can compare the last two rows and see that solving the \textsc{ode}s system using batches is beneficial in terms of uncertainty quantification. We also plot the confidence of all different methods when we do not optimize the prior precision. We can see that our proposed approaches are robust to it, while linearized LA highly depends on it.

\begin{figure}
    \centering
   \begin{overpic}[trim=2cm 0 2cm 0, width=0.57\textwidth]{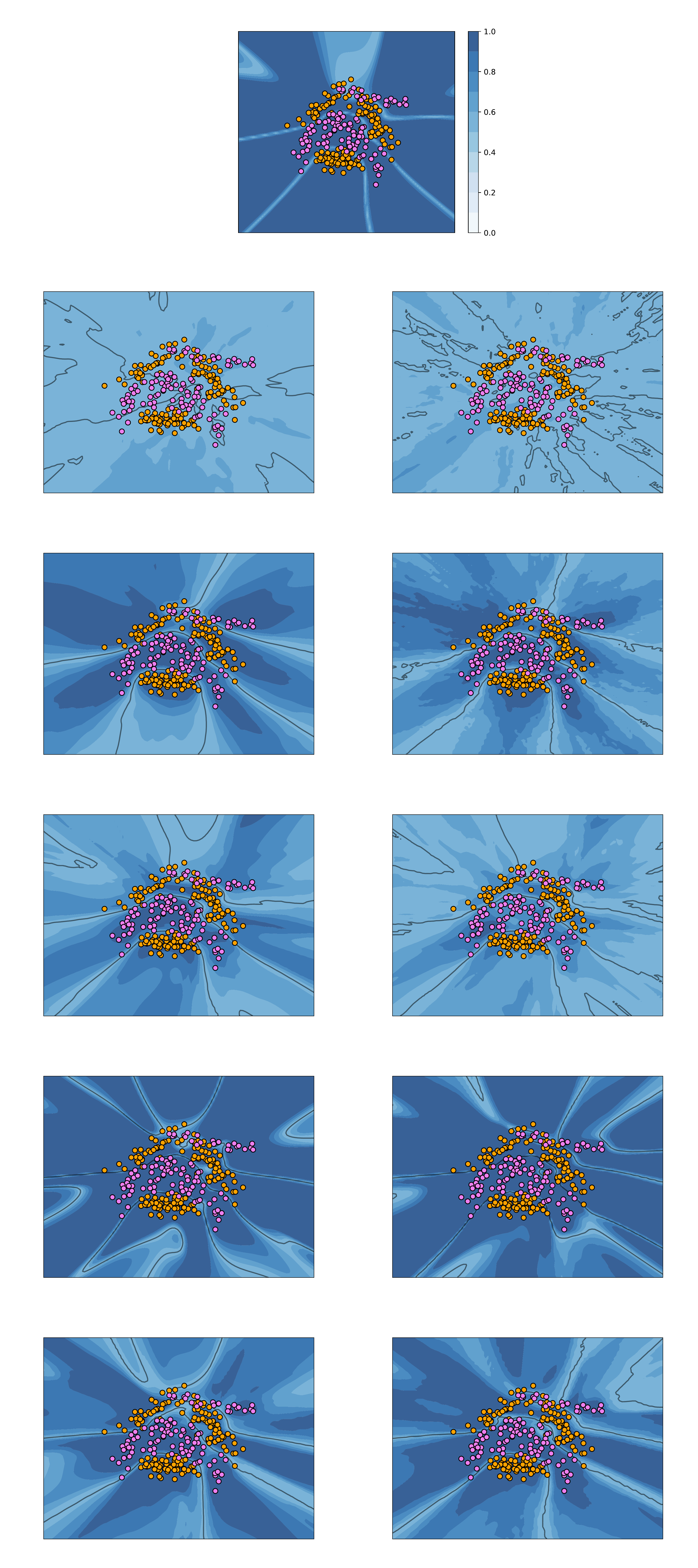} 
    \put(85,575){\textsc{map solution}}
    \put(13,480){\textsc{prior optimized}}
    \put(130,480){\textsc{prior NOT optimized}}
    \put(25,385){\textsc{vanilla la}}
    \put(155,385){\textsc{vanilla la}}
    \put(30,290){\textsc{riem-la}}
    \put(160,290){\textsc{riem-la}}
    \put(35,192){\textsc{lin-la}}
    \put(165,192){\textsc{lin-la}}
    \put(22,95){\textsc{lin-riem-la}}
    \put(152,95){\textsc{lin-riem-la}}
    \put(3,0){\textsc{lin-riem-la (batch)}}
    \put(133,0){\textsc{lin-riem-la (batch)}}
   \end{overpic}
  \caption{Confidence for all the considered approach on the \texttt{banana} dataset. All plots were computed using $50$ MC samples from the posterior. We can notice that our \textsc{riem-la} and \textsc{lin-riem-la} with batches are giving the best confidence of all methods followed by linearized LA. We can also see that our approaches are robust to prior optimization, while linearized LA gets really better if we optimize the prior precision.}
  \label{fig:confidence_lin_no_batch}
\end{figure}

\subsection{An additional illustrative example}
We consider the \texttt{pinwheel} dataset with five different classes as an additional illustrative classification example. Given the way these classes clusters, it is interesting to see if some of the approaches are able to be confident only in regions where there is data. We generate a dataset considering 200 examples per classes and we use 350 example as training set and the remaining as test set. We consider a two layer fully-connected network with 20 hidden units per layer and \texttt{tanh} activation. As before, we train it using SGD with a learning rate of $1e-3$ and a weight decay of $1e-2$ for 5000 epochs.

From Fig.~\ref{fig:confidence_pinwheel_all}, we can see not only vanilla LA but also linearized LA is failing in being confident also in-data region when we use 50 posterior samples. Our proposed approaches instead give meaningful uncertainty estimates, but we can see that by optimizing the prior precision, our methods get slightly more confident far from the data. 

\begin{figure}
    \centering
   \begin{overpic}[trim=2cm 0 2cm 0, width=0.7\textwidth]{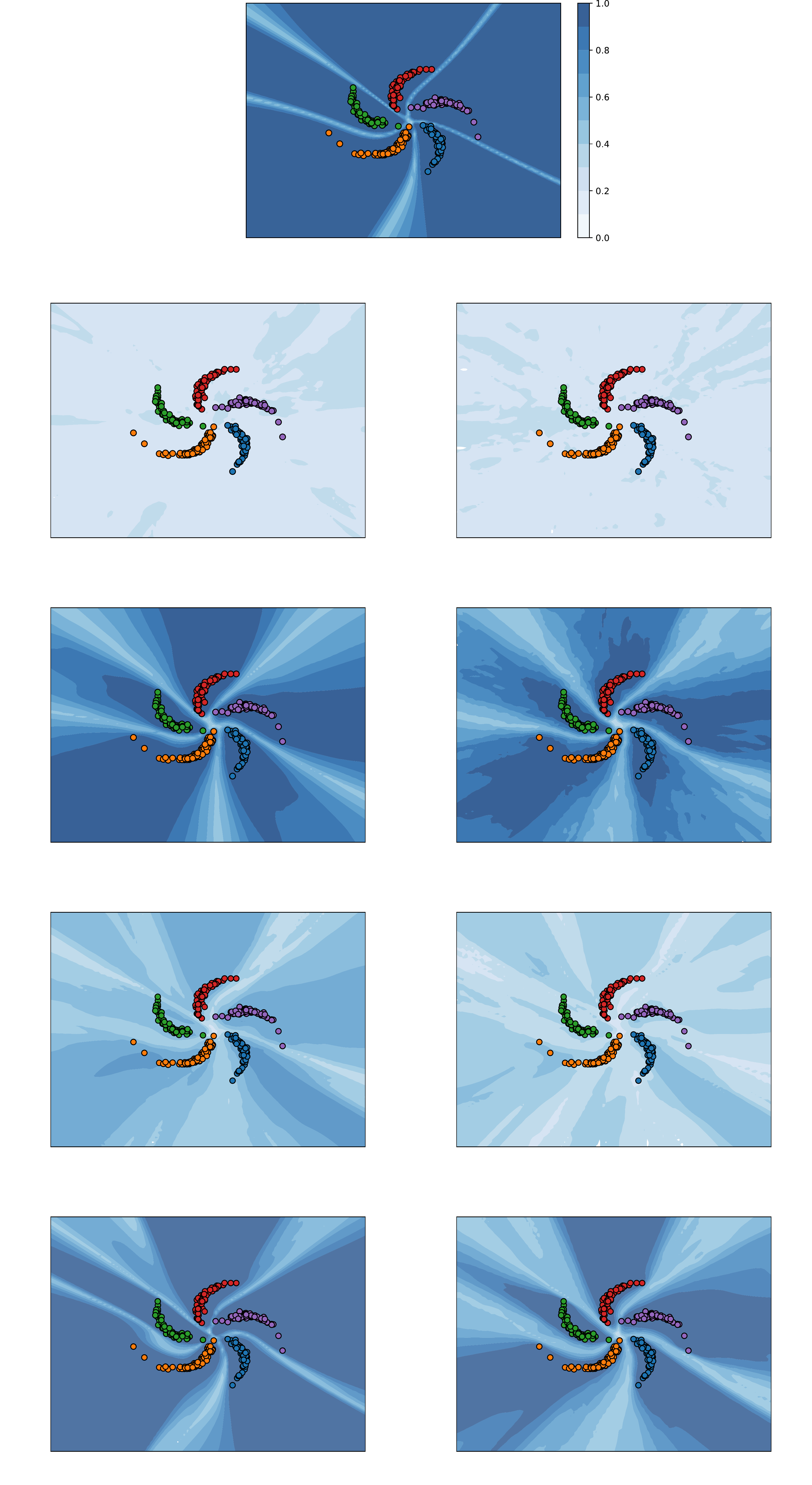} 
    \put(105,585){\textsc{map solution}}
    \put(25,470){\textsc{prior optimized}}
    \put(170,470){\textsc{prior NOT optimized}}
    \put(30,355){\textsc{vanilla la}}
    \put(190,355){\textsc{vanilla la}}
    \put(35,237){\textsc{riem-la}}
    \put(195,237){\textsc{riem-la}}
    \put(40,117){\textsc{lin-la}}
    \put(199,117){\textsc{lin-la}}
    \put(25,1){\textsc{lin-riem-la}}
    \put(194,1){\textsc{lin-riem-la}}
   \end{overpic}
  \caption{Confidence for all the considered approach on the \texttt{pinwheel} dataset. All plots were computed using $50$ MC samples from the posterior. We can notice that in this setting \textsc{riem-la} gives the better confidence followed by \textsc{lin-riem-la}. However, priorprecision optimization is making our approaches more confident outside the data region. Linearized LA struggle in producing meaningful uncertainty estimates.}
  \label{fig:confidence_pinwheel_all}
\end{figure}

\subsection{Additional results in the UCI classification tasks}
In the main paper we present results on five different UCI classification tasks in terms of test negative log-likelihood. Here, we report also results in terms of test accuracy, Brier score, and expected calibration error. Results with prior precision optimized are shown in Table~\ref{tab:uci_nll_results_full_prior_optim}, while for prior precision not optimized we refer to Table~\ref{tab:uci_full_results_no_prior}. In both cases, we consider a neural network with a single fully connected layer consisting of 50 hidden units and \texttt{tanh} activation. We train it using Adam optimizer \citep{kingma:adam} for 10000 epochs using a learning rate of $1e-3$ and a weight decay of $1e-2$.

From the two tables, we can see that our \textsc{riem-la} is better than all the other methods both in terms of NLL, brier, and ECE. It is also surprising that our method, both the classic and the linearized approach, are able to improve the accuracy of the MAP estimate in most datasets.

\subsection{Complete results on MNIST and FMNIST}
For these two image classification tasks we consider a small convolutional neural network. Our network consists of the following layers: an initial convolutional layer with 4 channels and $5\times 5$ filter followed by \texttt{tanh} activation and an average pooling layer. The we have another convolutional layer still with 4 channels and $5\times 5$ kernel also followed by \texttt{tanh} activation and an average pooling layer. Then we have three fully connected layer with 16, 10, and 10 hidden units respectively and \texttt{tanh} activation.

We train both models using SGD with a learning rate of $1e-3$ and weight decay of $5e-4$ for 100 epoch. The learning rate is annealed using the cosine decay method \citep{loshchilov:cosine}.

In Table~\ref{tab:mnist_fmnist_results} and Table~\ref{tab:fmnist_results_complete}, we can see that also when we do not optimize the prior precision our \textsc{riem-la} is mostly performing better than all the alternatives. In particular, for the CNN trained on MNIST and no prior precision optimization, we have that the MAP is also performing well in terms of NLL and Brier score. On FashionMNIST, instead, if we do not optimize the prior we have that both our approaches are better than all the other methods.

\subsection{Out-of-distribution results}
The benefit of having meaningful and robust uncertainty estimation is that our model would then be confident in-data region while being uncertain in region without data. Therefore, if this is happening, then we would expect the model to be able to detect out-of-distribution (OOD) examples more successfully.

We consider classic OOD images detection tasks, where we train a model on MNIST and tested on FashionMNIST, EMNIST, and KMNIST and one trained on FMNIST and tested on the remaining datasets. It's well known that linearized LA is one of the strongest method for OOD detection \citep{daxberger:neurips:2021}. We consider the same MAP estimates we used in the previous section and present OOD performance in Table~\ref{tab:ood_results_mnist} and Table~\ref{tab:ood_results_fmnist}.

We can see that our proposed method is consistently working better than linearized and classic LA in all the considered setting apart for models trained on FMNIST where we do not optimize the prior precision to compute our initial velocities. In that setting, however, our linearized approach using batches is getting similar performance than linearized LA. 

\begin{table}[!h]
	\caption{Full in-distribution results in the banana dataset. We used 100 samples both for the Laplace approximation and our method. ECE and MCE are computed using $M=10$ bins.}
    \label{tab:banana_daataset_results}	   
	\resizebox{\textwidth}{!}{
		\color{black}\scriptsize
		\begin{tabular}{cccccc}
			\toprule
            &\multicolumn{5}{c}{\textsc{Banana dataset}}\\
            \cmidrule{2-6} 
             &\multicolumn{5}{c}{\textsc{Prior Precision NOT optimized}}\\
            \cmidrule{2-6}
			\textsc{Method}  & \textcolor{black}{Accuracy $\uparrow$} & \textcolor{black}{NLL$\downarrow$} & \textcolor{black}{Brier$\downarrow$} & \textcolor{black}{ECE$\downarrow$} & \textcolor{black}{MCE$\downarrow$} \\
			\midrule
			\textsc{map} & $86.69 \pm 0.34$ & $0.333 \pm 0.005$ & $0.0930 \pm 0.0015$ & $ 5.91 \pm    0.34$ & $22.75 \pm 2.42$  \\
			\textsc{vanilla la} & $48.85 \pm 2.32$ & $0.700 \pm 0.003$ & $0.2534 \pm 0.0017$ & $7.10  \pm 1.96$ & $23.10 \pm 6.69$  \\
           \textsc{lin-la}  & $86.92 \pm 0.40$ & $0.403 \pm 0.012$ & $0.1196 \pm 0.0044$ & $13.04  \pm 0.60$ & $17.11 \pm 0.54$  \\
			\textsc{riem-la} & $\maxf{87.14 \pm 0.20}$ & $\maxf{0.285 \pm 0.001}$ & $\maxf{0.0878 \pm 0.0006}$ & $\maxf{2.75 \pm 0.22}$ & $\maxf{6.30 \pm 0.89}$ \\
             \textsc{riem-la (batches)} & $\maxf{87.32 \pm 0.17}$ & $0.294 \pm 0.002$ & $0.0895 \pm 0.0004$ & $ 4.79 \pm 0.31$ & $8.28 \pm 0.84$ \\
           \textsc{lin-riem-la} & $85.33 \pm 0.31$ & $0.884 \pm 0.037$ & $0.1252 \pm 0.0022$ & $ 11.50 \pm 0.31$ & $36.27 \pm 2.84$ \\
           \textsc{lin-riem-la (batches)}  & $86.16 \pm 0.21$ & $ 0.352 \pm 0.002$ & $0.0994 \pm  0.0011$ & $4.06 \pm 0.11$ & $13.67 \pm 0.69$ \\
			\bottomrule
            & & & & & \\
            \toprule
            &\multicolumn{5}{c}{\textsc{Prior Precision optimized}}\\
            \cmidrule{2-6}
			\textsc{Method}  & \textcolor{black}{Accuracy $\uparrow$} & \textcolor{black}{NLL$\downarrow$} & \textcolor{black}{Brier$\downarrow$} & \textcolor{black}{ECE$\downarrow$} & \textcolor{black}{MCE$\downarrow$} \\
            \midrule
			\textsc{map} & $86.69 \pm 0.34$ & $0.333 \pm 0.005$ & $0.0930 \pm 0.0015$ & $ 5.91 \pm   0.34$ & $22.75 \pm 2.42$ \\
			\textsc{vanilla la} & $59.50 \pm 5.07$ & $0.678 \pm  0.009$ & $0.2426 \pm  0.0046$ & $12.08 \pm 2.31$ & $27.49 \pm 5.05$  \\
           \textsc{lin-la} & $86.99 \pm 0.37$ & $ 0.325 \pm  0.008$ & $0.0956 \pm 0.0023$  & $6.59 \pm 0.41$ & $ 11.44 \pm 1.85$  \\
			\textsc{riem-la} & $\maxf{87.57 \pm 0.07}$ & $\maxf{0.287 \pm 0.002}$ & $\maxf{0.0886 \pm  0.0006}$ & $\maxf{2.55 \pm 0.37}$ & $10.95 \pm 1.16$ \\
            \textsc{riem-la (batches)} & $87.30 \pm  0.08$ & $\maxf{0.286 \pm 0.001}$ & $\maxf{0.0890 \pm 0.0000}$ & $\maxf{2.57 \pm 0.04}$ & $\maxf{6.08 \pm 0.67}$ \\
           \textsc{lin-riem-la} & $87.02 \pm  0.38$ & $0.415 \pm 0.029$ & $0.0967\pm  0.0024$ & $6.97 \pm 0.43$ & $21.24 \pm 2.28$ \\
	        \textsc{lin-riem-la (batches)} & $\maxf{87.77 \pm 0.24}$ & ${0.298 \pm 0.006}$ & $\maxf{0.0887 \pm 0.0011}$ & $\maxf{2.38 \pm 0.28}$ & $8.04 \pm 1.61$ \\		
        \bottomrule
		\end{tabular}
    }
\end{table}

\begin{table}[!h]
\smallskip
     \caption{Results for all the different techniques on UCI datasets for classification. Predictive distribution is estimated using 30 MC samples when prior precision is optimized. Mean and standard error over 5 different seeds.\vspace{0.25cm}}
      \label{tab:uci_nll_results_full_prior_optim}
		\color{black}\scriptsize
		\begin{tabular}{cccccc}
			\toprule
            &\multicolumn{5}{c}{\textsc{prior precision optimized}}\\
            \cmidrule{2-6}
            &\multicolumn{5}{c}{\textsc{Accuracy $\uparrow$}}\\
            \cmidrule{2-6}
			\textsc{dataset}  & \textcolor{black}{\textsc{map}} & \textcolor{black}{\textsc{vanilla la}} & \textcolor{black}{\textsc{riem-la}} & \textcolor{black}{\textsc{linearized la}} & \textcolor{black}{\textsc{linearized riem-LA}} \\
            \midrule
			\textsc{vehicle} & $\maxf{79.52 \pm  0.99}$ & $49.61 \pm 3.05$ & $\maxf{80.47 \pm 1.12}$ & $73.86 \pm 1.47$ & $\maxf{78.58 \pm 1.31}$  \\
			\textsc{glass} & $63.75 \pm 3.60$ & $ 26.25 \pm 3.91$ & $\maxf{ 67.5 \pm 3.40}$ & $55.62 \pm  2.40$ & $\maxf{66.25\pm 3.47}$  \\
           \textsc{ionosphere} & $86.04 \pm 2.30$ & $58.11 \pm  4.04$ & $\maxf{90.19 \pm 1.95}$ & $80.38 \pm 1.57$ & $\maxf{88.30 \pm 1.12}$  \\
			\textsc{waveform} & $\maxf{82.27 \pm 1.66}$ & $ 64.27 \pm 3.19$ & $ \maxf{84.93 \pm 1.42}$ & $77.07 \pm 1.52$ & $\maxf{82.93\pm  1.08}$ \\
            \textsc{australian} & $82.30 \pm 1.00$ & $56.92 \pm 2.33$ & $\maxf{84.81 \pm 1.03}$ & $75.00 \pm 1.90$ & $82.69 \pm 0.82$ \\
            \textsc{breast cancer} & $\maxf{96.08 \pm 0.73}$ & $71.96 \pm 8.58$ & $\maxf{96.86 \pm 0.85}$ & $94.90 \pm 0.89$ & $\maxf{96.08 \pm 1.30}$ \\
            \bottomrule
            &  &  &   &   &  \\
            \toprule
            &\multicolumn{5}{c}{\textsc{NLL $\downarrow$}}\\
            \cmidrule{2-6}
			\textsc{dataset}  & \textcolor{black}{\textsc{map}} & \textcolor{black}{\textsc{vanilla la}} & \textcolor{black}{\textsc{riem-la}} & \textcolor{black}{\textsc{linearized la}} & \textcolor{black}{\textsc{linearized riem-LA}} \\
            \midrule
			\textsc{vehicle} & $0.975 \pm 0.081$ & $1.209 \pm  0.020$ & $\maxf{0.454 \pm 0.024}$ & $0.875 \pm 0.020$ & $\maxf{0.494 \pm 0.044}$  \\
			\textsc{glass} & $2.084 \pm 0.323$ & $ 1.737 \pm  0.037$ & $\maxf{1.047 \pm  0.224}$ & $1.365 \pm  0.058$ & $1.359 \pm  0.299$  \\
           \textsc{ionosphere} & $1.032 \pm 0.175$ & $ 0.673 \pm  0.013$ & $\maxf{0.344 \pm 0.068}$ & $0.497 \pm 0.015$ & $ 0.625 \pm 0.110$  \\
			\textsc{waveform} & $1.076 \pm 0.110$ & $0.888 \pm 0.030$ & $\maxf{0.459 \pm 0.057}$ & $0.640 \pm 0.002$ & $0.575 \pm  0.065$ \\
            \textsc{australian} & $1.306 \pm  0.146$ & $ 0.684 \pm 0.011$ & $\maxf{0.541 \pm 0.053}$ & $\maxf{0.570 \pm 0.016}$ & $0.833 \pm 0.108$ \\
            \textsc{breast cancer} & $\maxf{0.225 \pm 0.076}$ & $0.594 \pm 0.030$ & $\maxf{0.176 \pm 0.092}$ & $0.327 \pm  0.022$ & $\maxf{0.202 \pm 0.073}$ \\
            \bottomrule
            &  &  &   &   &  \\
            \toprule
            &\multicolumn{5}{c}{\textsc{Brier $\downarrow$}}\\
            \cmidrule{2-6}
			\textsc{dataset}  & \textcolor{black}{\textsc{map}} & \textcolor{black}{\textsc{vanilla la}} & \textcolor{black}{\textsc{riem-la}} & \textcolor{black}{\textsc{linearized la}} & \textcolor{black}{\textsc{linearized riem-LA}} \\
            \midrule
			\textsc{vehicle} & $ 0.0877 \pm 0.0052$ & $0.1654 \pm 0.0028$ & $\maxf{0.0671 \pm  0.0035}$ & $ 0.1210 \pm 0.0028$ & $0.0720 \pm 0.0048$  \\
			\textsc{glass} & $0.1058 \pm 0.0095$ & $ 0.1353 \pm 0.0025$ & $\maxf{0.0740 \pm 0.0075}$ & $0.1102 \pm  0.0031$ & $\maxf{0.0787 \pm  0.0083}$  \\
           \textsc{ionosphere} & $0.1308 \pm 0.0182$ & $0.2398 \pm 0.0064$ & $\maxf{0.0840 \pm 0.0127}$ & $0.1596 \pm 0.0069$ & $\maxf{0.0868 \pm 0.0095}$  \\
			\textsc{waveform} & $0.1043 \pm 0.0091$ & $0.1766 \pm 0.0065$ & $\maxf{0.0825 \pm 0.0076}$ & $0.1260 \pm 0.0011$ & $\maxf{0.0871 \pm  0.0053}$ \\
            \textsc{australian} & $0.1642 \pm  0.0105$ & $0.2452 \pm 0.0052$ & $\maxf{0.1209 \pm 0.0113}$ & $0.1901 \pm 0.0069$ & $\maxf{0.1340 \pm 0.0082}$ \\
            \textsc{breast cancer} & $0.0351 \pm 0.0077$ & $0.2020 \pm 0.0148$ & $\maxf{0.0269 \pm 0.0075}$ & $ 0.0866 \pm  0.0082$ & $\maxf{0.0310 \pm 0.0094}$ \\
            \bottomrule
            &  &  &   &   &  \\
            \toprule
            &\multicolumn{5}{c}{\textsc{ECE $\downarrow$}}\\
            \cmidrule{2-6}
			\textsc{dataset}  & \textcolor{black}{\textsc{map}} & \textcolor{black}{\textsc{vanilla la}} & \textcolor{black}{\textsc{riem-la}} & \textcolor{black}{\textsc{linearized la}} & \textcolor{black}{\textsc{linearized riem-LA}} \\
            \midrule
			\textsc{vehicle} & $16.75 \pm 1.06$ & $15.01\pm  2.41$ & $\maxf{8.43 \pm 1.06}$ & $26.61 \pm 1.40$ & $10.49 \pm 0.95$  \\
			\textsc{glass} & $31.77 \pm 2.67$ & $ \maxf{11.77 \pm 1.08}$ & $17.06 \pm 2.96$ & $23.17 \pm  1.47$ & $20.84 \pm 2.57$  \\
           \textsc{ionosphere} & $13.92 \pm 2.02$ & $ 11.22 \pm 1.59$ & $9.05 \pm 0.99$ & $15.22 \pm 1.38$ & $\maxf{7.70 \pm 1.17}$  \\
			\textsc{waveform} & $15.73 \pm 1.29$ & $18.73 \pm 2.41$ & $10.72 \pm 1.31$ & $19.12 \pm 1.60$ & $\maxf{9.10 \pm  0.94}$ \\
            \textsc{australian} & $16.25 \pm 0.96$ & $\maxf{5.55 \pm 1.41}$ & $8.49 \pm 1.62$ & $12.42 \pm 1.43$ & $10.20 \pm 0.90$ \\
            \textsc{breast cancer} & $\maxf{3.85 \pm 0.89}$ & $20.39 \pm 3.28$ & $\maxf{3.20 \pm 0.72}$ & $20.24 \pm 0.99$ & $\maxf{3.16\pm 0.92}$ \\
            \bottomrule
		\end{tabular}
  \smallskip
\end{table}

\begin{table}[!h]
\smallskip
     \caption{Results for all the different techniques on UCI datasets for classification. Predictive distribution is estimated using 30 MC samples when prior precision is not optimized. Mean and standard error over 5 different seeds.\vspace{0.25cm}}
      \label{tab:uci_full_results_no_prior}
		\color{black}\scriptsize
		\begin{tabular}{cccccc}
			\toprule
            &\multicolumn{5}{c}{\textsc{prior precision NOT optimized}}\\
            \cmidrule{2-6}
            &\multicolumn{5}{c}{\textsc{Accuracy $\uparrow$}}\\
            \cmidrule{2-6}
			\textsc{dataset}  & \textcolor{black}{\textsc{map}} & \textcolor{black}{\textsc{vanilla la}} & \textcolor{black}{\textsc{riem-la}} & \textcolor{black}{\textsc{linearized la}} & \textcolor{black}{\textsc{linearized riem-LA}} \\
            \midrule
			\textsc{vehicle} & $79.52 \pm  0.99$ & $23.94 \pm 1.62$ & $\maxf{82.05 \pm 1.10}$ & $44.88 \pm 1.74$ & $\maxf{80.16 \pm 1.45}$  \\
			\textsc{glass} & $63.75 \pm 3.60$ & $ 15.00 \pm 3.24$ & $ \maxf{68.13 \pm 3.11}$ & $30.63 \pm  3.89$ & $63.75 \pm 2.88$  \\
            \textsc{ionosphere} & $86.04 \pm 2.30$ & $47.55 \pm  1.72$ & $\maxf{91.32 \pm 1.26}$ & $69.81 \pm 4.10$ & $\maxf{89.81 \pm 0.68}$  \\
			\textsc{waveform} & $\maxf{82.27 \pm 1.66}$ & $ 24.13 \pm 4.30$ & $\maxf{83.60 \pm 1.34}$ & $ 54.40 \pm 2.11$ & $\maxf{83.47 \pm 1.28}$ \\
            \textsc{australian} & $82.30 \pm 1.00$ & $40.19 \pm 1.77$ & $\maxf{86.73\pm 1.10}$ & $61.92 \pm 2.57$ & $84.04\pm 0.80$ \\
            \textsc{breast cancer} & $\maxf{96.08 \pm 0.73}$ & $51.37 \pm 11.68$ & $\maxf{97.06 \pm  0.73}$ & $84.51 \pm 3.72$ & $\maxf{95.29 \pm 1.22}$ \\
            \bottomrule
            &  &  &   &   &  \\
             \toprule
            &\multicolumn{5}{c}{\textsc{NLL $\downarrow$}}\\
            \cmidrule{2-6}
			\textsc{dataset}  & \textcolor{black}{\textsc{map}} & \textcolor{black}{\textsc{vanilla la}} & \textcolor{black}{\textsc{riem-la}} & \textcolor{black}{\textsc{linearized la}} & \textcolor{black}{\textsc{linearized riem-LA}} \\
            \midrule
			\textsc{vehicle} & $0.975 \pm 0.081$ & $1.424 \pm  0.018$ & $\maxf{0.398 \pm 0.016}$ & $1.197 \pm 0.018$ & $\maxf{0.415 \pm 0.016}$  \\
			\textsc{glass} & $2.084 \pm 0.323$ & $ 2.057\pm 0.084$ & $\maxf{0.981 \pm  0.202}$ & $1.697 \pm 0.044$ & $\maxf{1.116 \pm  0.270}$  \\
           \textsc{ionosphere} & $1.032 \pm 0.175$ & $ 0.726 \pm  0.011$ & $\maxf{0.284 \pm 0.080}$ & $0.611 \pm 0.014$ & $\maxf{0.289 \pm 0.023}$  \\
			\textsc{waveform} & $1.076 \pm 0.110$ & $1.185 \pm 0.031$ & $\maxf{0.465 \pm 0.051}$ & $0.962 \pm  0.015$ & $\maxf{0.456 \pm  0.059}$ \\
            \textsc{australian} & $1.306 \pm  0.146$ & $ 0.750 \pm 0.009$ & $\maxf{0.518 \pm 0.070}$ & $0.658 \pm 0.012$ & $\maxf{0.593 \pm 0.030}$ \\
            \textsc{breast cancer} & $0.225 \pm 0.076$ & $0.690 \pm 0.051$ & $\maxf{0.197 \pm 0.093}$ & $0.503 \pm  0.0222$ & $\maxf{0.171 \pm 0.064}$ \\
            \bottomrule
            &  &  &   &   &  \\
            \toprule
            &\multicolumn{5}{c}{\textsc{Brier $\downarrow$}}\\
            \cmidrule{2-6}
			\textsc{dataset}  & \textcolor{black}{\textsc{map}} & \textcolor{black}{\textsc{vanilla la}} & \textcolor{black}{\textsc{riem-la}} & \textcolor{black}{\textsc{linearized la}} & \textcolor{black}{\textsc{linearized riem-LA}} \\
            \midrule
			\textsc{vehicle} & $ 0.0877 \pm 0.0052$ & $0.1917 \pm 0.0020$ & $\maxf{0.0604 \pm 0.0024}$ & $ 0.1645 \pm 0.0023$ & $0.0657 \pm 0.0027$  \\
			\textsc{glass} & $0.1058 \pm 0.0095$ & $0.1458 \pm 0.0024$ & $\maxf{0.0711 \pm 0.0063}$ & $0.1328 \pm  0.0028$ & $ \maxf{0.0768 \pm  0.0067}$  \\
           \textsc{ionosphere} & $0.1308 \pm 0.0182$ & $0.2654 \pm 0.0053$ & $\maxf{0.0689 \pm 0.0076}$ & $0.1596 \pm 0.0069$ & $ 0.0868 \pm 0.0095$  \\
			\textsc{waveform} & $0.1043 \pm 0.0091$ & $0.2403 \pm 0.0068$ & $\maxf{0.0789\pm 0.0056}$ & $0.1929 \pm 0.0033$ & $\maxf{0.0821 \pm  0.0060}$ \\
            \textsc{australian} & $0.1642 \pm  0.0105$ & $0.2774 \pm 0.0042$ & $\maxf{0.1081 \pm 0.0096}$ & $0.2328 \pm 0.0059$ & $0.1205 \pm 0.0072$ \\
            \textsc{breast cancer} & $\maxf{0.0351 \pm 0.0077}$ & $0.2486 \pm 0.0248$ & $\maxf{0.0263 \pm 0.0068}$ & $ 0.1597 \pm  0.0106$ & $ \maxf{0.0316 \pm 0.0080}$ \\
            \bottomrule
            &  &  &   &   &  \\
            \toprule
            &\multicolumn{5}{c}{\textsc{ECE $\downarrow$}}\\
            \cmidrule{2-6}
			\textsc{dataset}  & \textcolor{black}{\textsc{map}} & \textcolor{black}{\textsc{vanilla la}} & \textcolor{black}{\textsc{riem-la}} & \textcolor{black}{\textsc{linearized la}} & \textcolor{black}{\textsc{linearized riem-LA}} \\
            \midrule
			\textsc{vehicle} & $16.75 \pm 1.06$ & $10.78\pm 1.31$ & $\maxf{6.65 \pm 0.81}$ & $9.00 \pm 1.32$ & $ \maxf{5.43 \pm 0.74}$  \\
			\textsc{glass} & $31.77 \pm 2.67$ & $ 15.27 \pm 1.11$ & $14.84 \pm 2.23$ & $\maxf{8.18 \pm  2.30}$ & $19.63 \pm 1.74$  \\
           \textsc{ionosphere} & $13.92 \pm 2.02$ & $ 11.77 \pm 2.07$ & $\maxf{6.32 \pm 0.85}$ & $13.25 \pm 1.99$ & $8.47 \pm 1.27$  \\
			\textsc{waveform} & $15.73 \pm 1.29$ & $18.50 \pm 4.41$ & $\maxf{6.34 \pm 1.01}$ & $10.17 \pm 1.74$ & $\maxf{5.32 \pm  0.94}$ \\
            \textsc{australian} & $16.25 \pm 0.96$ & $16.98 \pm 1.70$ & $\maxf{5.92 \pm 1.50}$ & $\maxf{7.17 \pm  0.79}$ & $\maxf{6.36 \pm 0.97}$ \\
            \textsc{breast cancer} & $\maxf{3.85 \pm 0.89}$ & $23.00 \pm 5.45$ & $\maxf{3.27 \pm 0.68}$ & $21.63 \pm 3.01$ & $\maxf{3.09\pm 0.55}$ \\
            \bottomrule
		\end{tabular}
  \smallskip
\end{table}

\begin{table}[!h]
   \caption{Results on a simple CNN architecture on MNIST dataset. We train the network on 5000 examples and test the in-distribution performance on the test set, which contains 8000 examples. We used $25$ Monte Carlo samples to approximate the predictive distribution and in cases we used a subset of the data to solve the ODEs system we rely on 1000 samples. Calibration metrics are computed using 15 bins.}
  \label{tab:mnist_results_complete}
	\resizebox{\textwidth}{!}{
		\color{black}\scriptsize
		\begin{tabular}{cccccc}
			\toprule
            &\multicolumn{5}{c}{\textsc{mnist dataset}}\\
            \cmidrule{2-6} 
             &\multicolumn{5}{c}{\textsc{prior precision NOT optimized}}\\
            \cmidrule{2-6}
			\textsc{method}  & \textcolor{black}{Accuracy $\uparrow$} & \textcolor{black}{NLL$\downarrow$} & \textcolor{black}{Brier$\downarrow$} & \textcolor{black}{ECE$\downarrow$} & \textcolor{black}{MCE$\downarrow$} \\
			\midrule
			\textsc{map} & $95.02 \pm 0.17$ & $\maxf{0.167 \pm  0.005}$ & $\maxf{0.0075 \pm 0.0002}$ & $ \maxf{1.05 \pm  0.14}$ & $39.94 \pm 14.27$  \\
			\textsc{vanilla la} & $8.10 \pm 0.74$ & $2.521 \pm 0.006$ & $ 0.0937 \pm  0.0000$ & $11.79  \pm 0.83$ & $35.69 \pm 6.47$  \\
            \textsc{lin-la} & $94.00 \pm 0.29$ & $0.350 \pm 0.008$ & $0.0143 \pm 0.0004$ & $16.94 \pm 0.20$ & $35.60 \pm 0.09$  \\
			\textsc{riem-la} & $\maxf{96.18 \pm 0.23}$ & $\maxf{0.177 \pm 0.007}$ & $\maxf{0.0073 \pm 0.0003}$ & $7.10\pm 0.23$ & $\maxf{25.41\pm 1.96}$ \\
            \textsc{riem-la (batches)} & $94.98 \pm 0.20$ & $0.284 \pm 0.008$ & $0.0111 \pm 0.0004$ & $ 13.64 \pm 0.20$ & $29.23 \pm 0.50$ \\
           \textsc{lin-riem-la} & $95.12 \pm 0.27$ & $0.180 \pm 0.006$ & $0.0080 \pm  0.0003$ & $ 4.19 \pm 0.07$ & $\maxf{22.02 \pm 3.97}$ \\
           \textsc{lin-riem-la (batches)} & $94.63 \pm 0.19$ & $ 0.229 \pm  0.007$ & $0.0097\pm 0.0004$ & $7.84 \pm 0.18$ & $28.74 \pm 3.67$ \\
			\bottomrule
            & & & & & \\
            \toprule
            &\multicolumn{5}{c}{\textsc{prior precision optimized}}\\
            \cmidrule{2-6}
			\textsc{method}  & \textcolor{black}{Accuracy $\uparrow$} & \textcolor{black}{NLL$\downarrow$} & \textcolor{black}{Brier$\downarrow$} & \textcolor{black}{ECE$\downarrow$} & \textcolor{black}{MCE$\downarrow$} \\
            \midrule
			\textsc{map} & $95.02 \pm 0.17$ & $0.167 \pm  0.005$ & $0.0075 \pm 0.0002$ & $ \maxfsecond{1.05 \pm  0.14}$ & $39.94 \pm 14.27$  \\
			\textsc{vanilla la} & $88.69 \pm 1.84$ & $0.871 \pm 0.026$ & $0.0393 \pm  0.0013$ & $42.11 \pm 1.22$ & $50.52 \pm 1.45$  \\
           \textsc{lin-la} & $94.91 \pm 0.26$ & $ 0.204 \pm  0.006$ & $0.0087 \pm 0.0003$ & $6.30 \pm 0.08$ & $ 39.30 \pm 16.77$  \\
			\textsc{riem-la} & $\maxf{96.74 \pm 0.12}$ & $\maxf{0.115 \pm 0.003}$ & $\maxf{0.0052 \pm 0.0002}$ & $ 2.48 \pm 0.06$ & $38.03 \pm 15.02$ \\
            \textsc{riem-la (batches)} & $95.67 \pm 0.19$ & $0.170 \pm 0.005$ & $\maxfsecond{0.0072 \pm 0.0002}$ & $5.40 \pm 0.06$ & $\maxfsecond{22.40 \pm 0.51}$ \\
           \textsc{lin-riem-la} & $95.44 \pm  0.18$ & $\maxfsecond{0.149 \pm 0.004}$ & $\maxfsecond{0.0068 \pm 0.0003}$ & $\maxf{0.66 \pm 0.03}$ & $39.40 \pm 14.75$ \\
	        \textsc{lin-riem-la (batches)} & $ 95.14 \pm 0.20$ & $0.167 \pm 0.004$ & $0.0076 \pm 0.0002$ & $3.23 \pm 0.04$ & $\maxf{18.10 \pm 2.50}$ \\		
       \bottomrule
		\end{tabular}	   
  }
\end{table}

\begin{table}[!h]
   \caption{Results on a simple CNN architecture on FMNIST dataset. We train the network on 5000 examples and test the in-distribution performance on the test set, which contains 8000 examples. We used $25$ Monte Carlo samples to approximate the predictive distribution and in cases we used a subset of the data to solve the ODEs system we rely on 1000 samples. Calibration metrics are computed using 15 bins.}
  \label{tab:fmnist_results_complete}	
	\resizebox{\textwidth}{!}{
		\color{black}\scriptsize
		\begin{tabular}{cccccc}
			\toprule
            &\multicolumn{5}{c}{\textsc{fmnist dataset}}\\
            \cmidrule{2-6} 
             &\multicolumn{5}{c}{\textsc{prior precision NOT optimized}}\\
            \cmidrule{2-6}
			\textsc{method}  & \textcolor{black}{Accuracy $\uparrow$} & \textcolor{black}{NLL$\downarrow$} & \textcolor{black}{Brier$\downarrow$} & \textcolor{black}{ECE$\downarrow$} & \textcolor{black}{MCE$\downarrow$} \\
			\midrule
			\textsc{map} & $79.88 \pm 0.09$ & $0.541 \pm 0.002$ & $0.0276\pm 0.0000$ & $\maxf{1.66 \pm 0.07}$ & $24.07 \pm 1.50$  \\
			\textsc{vanilla la} & $10.16\pm 0.59$ & $2.548 \pm 0.050 $ & $0.0936 \pm 0.0007$ & $10.79  \pm 1.05$ & $27.19 \pm 6.64$  \\
            \textsc{lin-la} & $79.18 \pm 0.14$ & $0.640 \pm 0.004$ & $0.0308 \pm 0.0001$ & $10.99  \pm 0.54$ & $\maxf{18.69 \pm 0.57}$  \\
			\textsc{riem-la} & $\maxf{82.70 \pm 0.23}$ & $\maxf{0.528 \pm 0.004}$ & $\maxf{0.0262 \pm 0.0002}$ & $9.95 \pm 0.38$ & $19.11 \pm 0.43$ \\
            \textsc{riem-la (batches)} & $80.77 \pm 0.10$ & $0.582 \pm 0.004$ & $0.0285 \pm  0.0001$ & $ 10.34 \pm 0.60$ & $\maxf{18.52 \pm 0.64}$ \\
           \textsc{lin-riem-la} & $80.94 \pm 0.20$ & $\maxf{0.528 \pm 0.004}$ & $\maxf{0.0265 \pm 0.0002}$ & $ \maxf{1.59 \pm 0.18}$ & $\maxf{17.58 \pm 3.60}$ \\
           \textsc{lin-riem-la (batches)} & $79.95 \pm 0.22$ & $ 0.567 \pm  0.005$ & $0.0281 \pm 0.0002$ & $4.79 \pm 0.58$ & $\maxf{16.35 \pm 1.88}$ \\
			\bottomrule
            & & & & & \\
            \toprule
            &\multicolumn{5}{c}{\textsc{prior precision optimized}}\\
            \cmidrule{2-6}
			\textsc{Method}  & \textcolor{black}{Accuracy $\uparrow$} & \textcolor{black}{NLL$\downarrow$} & \textcolor{black}{Brier$\downarrow$} & \textcolor{black}{ECE$\downarrow$} & \textcolor{black}{MCE$\downarrow$} \\
            \midrule
			\textsc{map} & $79.88 \pm 0.09$ & $0.541 \pm 0.002$ & $0.0276\pm 0.0000$ & $\maxf{1.66 \pm 0.07}$ & $24.07 \pm 1.50$  \\
			\textsc{vanilla la} & $74.88 \pm 0.83$ & $1.026 \pm  0.046$ & $ 0.0482 \pm 0.0019$ & $31.63 \pm 1.28$ & $43.61 \pm 2.95$  \\
           \textsc{lin-la} & $79.85 \pm 0.13$ & $ 0.549 \pm 0.001$ & $0.0278 \pm 0.0000$ & $3.23 \pm 0.44$ & $ 37.88 \pm 17.98$  \\
			\textsc{riem-la} & $\maxf{83.33 \pm 0.17}$ & $\maxf{0.472 \pm 0.001}$ & $\maxf{0.0237 \pm 0.0001}$ & $3.13 \pm 0.48$ & $\maxfsecond{10.94 \pm 2.11}$ \\
            \textsc{riem-la (batches)} & $\maxfsecond{81.65 \pm 0.18}$ & $\maxfsecond{0.525 \pm  0.004}$ & $\maxfsecond{0.0263 \pm 0.0002}$ & $5.80 \pm 0.73$ & $35.30 \pm 18.40$ \\
           \textsc{lin-riem-la} & $81.33 \pm 0.10$ & $\maxfsecond{0.521 \pm 0.004}$ & $\maxfsecond{0.0261 \pm 0.0002}$ & $\maxf{1.59 \pm 0.40}$ & $25.53 \pm 0.10$ \\
	        \textsc{lin-riem-la (batches)} & $ 80.49 \pm 0.13$ & $0.529 \pm 0.003$ & $0.0269 \pm  0.0002$ & $\maxfsecond{2.10 \pm 0.42}$ & $\maxf{6.14 \pm 1.42}$ \\	
        \bottomrule
		\end{tabular}   
  }
\end{table}

\begin{table}[!h]
	\caption{AUROC $\uparrow$ for OOD tasks for models trained on MNIST and tested against FashionMNIST, EMNIST, KMNIST. \emph{(B)} indicates exponential maps solved using a subset of the training set. We can notice that our proposed \textsc{riem-la} is consistently performing better than all the other approaches for OOD detection.}
    \resizebox{\textwidth}{!}{
		\color{black}
		\begin{tabular}{cccccccc}
			\toprule
            &\multicolumn{7}{c}{\textsc{Prior Precision NOT optimized}}\\
            \cmidrule{2-8}
			\textsc{ood data}  & \textcolor{black}{\textsc{map}} & \textcolor{black}{\textsc{vanilla la}} & \textcolor{black}{\textsc{riem-la}} & \textcolor{black}{\textsc{lin. la}} & \textcolor{black}{\textsc{lin. riem-la}} & \textcolor{black}{\textsc{riem-la (b)}} & \textcolor{black}{\textsc{lin. riem-la (b)}} \\
            \midrule
			\textsc{fmnist} & $0.777 \pm 0.057$ & $0.523 \pm 0.020$ & $\maxf{0.911 \pm 0.011}$ & $ 0.876 \pm 0.018$ & $0.873 \pm 0.023$ & $0.891  \pm 0.013$ & $0.862 \pm 0.026$\\
			\textsc{emnist} & $0.851 \pm 0.008$ & $0.490 \pm  0.015$ & $ \maxf{0.900 \pm 0.004}$ & $0.897 \pm  0.005$ & ${0.905 \pm  0.003}$    & $0.872 \pm 0.005$ & $\maxf{0.901 \pm 0.004}$\\
           \textsc{kmnist} & $0.877 \pm 0.005$ & $ 0.511 \pm 0.013$ & $\maxf{0.949 \pm 0.002}$ & $0.930\pm  0.002$ & $ 0.941\pm 0.002$  & $0.938 \pm 0.003$ & $0.941 \pm 0.002$ \\	
        \bottomrule
        &  &  &   &   &  \\
        \toprule
         &\multicolumn{7}{c}{\textsc{prior precision optimized}}\\
            \cmidrule{2-8}
			\textsc{ood data}  & \textcolor{black}{\textsc{map}} & \textcolor{black}{\textsc{vanilla la}} & \textcolor{black}{\textsc{riem-la}} & \textcolor{black}{\textsc{lin. la}} & \textcolor{black}{\textsc{lin. riem-la}} & \textcolor{black}{\textsc{riem-la (b)}} & \textcolor{black}{\textsc{lin. riem-la (b)}} \\
            \midrule
			\textsc{fmnist} & $0.777 \pm 0.057$ & $0.681 \pm 0.029$ & $\maxf{0.917 \pm 0.017}$ & $ 0.854 \pm 0.024$ & $0.809 \pm 0.027$  & $0.897  \pm 0.012$ & $0.841 \pm 0.025$\\
			\textsc{emnist} & $0.851 \pm 0.008$ & $0.768 \pm  0.018$ & $\maxf{0.917 \pm  0.001}$ & $0.888 \pm  0.006$ & $0.871 \pm 0.003$   & $0.899 \pm 0.003$ & $0.888 \pm 0.003$\\
           \textsc{kmnist} & $0.877 \pm 0.005$ & $ 0.813 \pm 0.012$ & $\maxf{0.953 \pm 0.001}$ & $0.921 \pm 0.003$ & $  0.906 \pm 0.002$  & $0.948 \pm 0.002$ & $0.926 \pm  0.002$ \\	
           \bottomrule
		\end{tabular}
		\label{tab:ood_results_mnist}	
  }
\end{table}

\begin{table}[!h]
	\caption{AUROC $\uparrow$ for OOD tasks for models trained on FashionMNIST and tested against MNIST, EMNIST, KMNIST. \emph{(B)} indicates exponential maps solved using a subset of the training set. It is interesting to notice that if we do not optimize the prior precision, then \textsc{lin-la} is performing the best together with our linearized approaches. If we optimize the prior, \textsc{riem-la} performs the best of all approaches.}
    \resizebox{\textwidth}{!}{
		\color{black}
		\begin{tabular}{cccccccc}
			\toprule
            &\multicolumn{7}{c}{\textsc{Prior Precision NOT optimized}}\\
            \cmidrule{2-8}
			\textsc{ood data}  & \textcolor{black}{\textsc{map}} & \textcolor{black}{\textsc{vanilla la}} & \textcolor{black}{\textsc{riem-la}} & \textcolor{black}{\textsc{lin. la}} & \textcolor{black}{\textsc{lin. riem-la}} & \textcolor{black}{\textsc{riem-la (b)}} & \textcolor{black}{\textsc{lin. riem-la (b)}} \\
            \midrule
			\textsc{mnist} & $ 0.715 \pm 0.022$ & $0.494 \pm 0.014$ & $ 0.895 \pm 0.010$ & $\maxf{0.937 \pm 0.012}$ & $ \maxf{0.929 \pm 0.015}$ & $0.917  \pm 0.006$ & $\maxf{0.940 \pm 0.012}$\\
			\textsc{emnist} & $0.649 \pm  0.009$ & $0.495 \pm 0.013$ & $ 0.794 \pm 0.011$ & $\maxf{0.907 \pm  0.007}$ & $0.881 \pm 0.009$  & $0.814 \pm 0.016$ & $0.891 \pm 0.008$\\
           \textsc{kmnist} & $0.718 \pm 0.013$ & $0.495 \pm 0.013$ & $0.886 \pm 0.006$ & $ \maxf{0.924 \pm  0.005}$ & $\maxf{0.921 \pm 0.007}$  & $0.898 \pm 0.007$ & $\maxf{0.925 \pm 0.006}$ \\	
        \bottomrule
        &  &  &   &   &  \\
        \toprule
         &\multicolumn{7}{c}{\textsc{prior precision optimized}}\\
            \cmidrule{2-8}
			\textsc{ood data}  & \textcolor{black}{\textsc{map}} & \textcolor{black}{\textsc{vanilla la}} & \textcolor{black}{\textsc{riem-la}} & \textcolor{black}{\textsc{lin. la}} & \textcolor{black}{\textsc{lin. riem-la}} & \textcolor{black}{\textsc{riem-la (b)}} & \textcolor{black}{\textsc{lin. riem-la (b)}} \\
            \midrule
			\textsc{mnist} & $ 0.715 \pm 0.022$ & $0.861 \pm 0.037$ & $\maxf{0.921 \pm 0.011}$ & $0.864  \pm 0.020$ & $0.807 \pm 0.016$  & $\maxf{0.934  \pm 0.005}$ & $0.869 \pm 0.016$\\
			\textsc{emnist} & $0.649 \pm  0.009$ & $0.765 \pm  0.050$ & $ \maxf{0.857 \pm 0.009}$ & $0.806 \pm  0.004$ & $0.750 \pm 0.012$   & $\maxf{0.851 \pm 0.010}$ & $0.795 \pm 0.001$\\
           \textsc{kmnist} & $0.718 \pm 0.013$  & $0.827 \pm 0.030$ & $\maxf{0.910 \pm 0.006}$ & $0.846 \pm 0.010$ & $0.800  \pm 0.009 $  & $\maxf{0.907 \pm 0.006}$ & $0.849 \pm 0.009 $ \\	
           \bottomrule
		\end{tabular}
		\label{tab:ood_results_fmnist}	
  }
\end{table}

\end{document}